\documentclass{article} 
\usepackage{iclr2026_conference,times}

\usepackage{amsmath,amsfonts,bm}

\def\eqref#1{equation~\ref{#1}}

\def\1{\bm{1}}

\DeclareMathAlphabet{\mathsfit}{\encodingdefault}{\sfdefault}{m}{sl}
\SetMathAlphabet{\mathsfit}{bold}{\encodingdefault}{\sfdefault}{bx}{n}

\nonstopmode

\usepackage{graphicx}
\usepackage{amsmath} 
\usepackage{amssymb} 
\usepackage{enumitem}  
\usepackage{empheq}  
\usepackage{bm}  
\usepackage{url}
\usepackage{scalerel}  
\usepackage[dvipsnames]{xcolor}  
\usepackage{dsfont}  
\usepackage{xcolor}
\usepackage{colortbl}
\usepackage{labelschanged}
\usepackage{tikz}
\usepackage{tikz-network}
\usetikzlibrary{graphs, fit, backgrounds}
\usepackage{multirow}
\usepackage{xspace}

\newcommand{\NN}{\mathbb{N}}

\newcommand{\One}{\mathds{1}}

\newcommand*{\QED}{\null\nobreak\hfill\ensuremath{\square}}

\renewcommand{\emptyset}{\varnothing}

\newcommand{\LCGN}{\ensuremath{\mathrm{LCN}}\xspace}
\newcommand{\LCG}{\ensuremath{\mathrm{LCG}}\xspace}
\newcommand{\LCGNs}{\LCGN{}s\xspace}

\newcommand{\SAT}{\textup{SAT}\xspace}
\newcommand{\CNF}{\textup{CNF}\xspace}
\newcommand{\threesat}{\textup{3\mbox{-}SAT}\xspace}
\newcommand{\planarsat}{\textup{PlanarSAT}\xspace}

\newcommand{\NPcomplete}{\textup{NP\mbox{-}complete}\xspace}

\newcommand{\GI}{\textup{GI}\xspace}

\newcommand{\n}[1]{\overline{#1}}  
\newcommand{\ov}[1]{\overline{#1}}  

\newcommand{\lla}[2]{a_{#1,#2}\xspace}
\newcommand{\llb}[2]{b_{#1,#2}\xspace}

\usepackage{tabularx}
\usepackage{subcaption}  

\usepackage{hyperref}
\usepackage{longtable}
\usepackage{array}

\usepackage{url}

\title{On the Expressive Power of GNNs for Boolean Satisfiability}

\author{Saku Peltonen\\
ETH Zürich\\
Zürich, Switzerland\\
\texttt{speltonen@ethz.ch} 
\And
Roger Wattenhofer\\
ETH Zürich\\
Zürich, Switzerland\\
\texttt{wattenhofer@ethz.ch}
}

\definecolor{tangrey}{RGB}{157, 129, 121}
\definecolor{coolgreen}{RGB}{114, 137, 109}
\definecolor{sadblue}{RGB}{163, 178, 189}
\definecolor{weirdorange}{RGB}{156, 122, 124}

\usepackage{booktabs}
\usepackage{amsthm}
\theoremstyle{plain}
\newtheorem{theorem}{Theorem}[section]

\newtheorem{lemma}[theorem]{Lemma}

\theoremstyle{definition}
\newtheorem{definition}[theorem]{Definition}

\theoremstyle{remark}
\newtheorem{remark}[theorem]{Remark}
\newtheorem{observation}[theorem]{Observation}

\usepackage{thmtools,thm-restate}

\iclrfinalcopy 
\begin{document}

\maketitle

\begin{abstract}
    Machine learning approaches to solving Boolean Satisfiability (SAT) aim to replace handcrafted heuristics with learning-based models. Graph Neural Networks have emerged as the main architecture for SAT solving, due to the natural graph representation of Boolean formulas. We analyze the expressive power of GNNs for SAT solving through the lens of the Weisfeiler-Leman (WL) test. As our main result, we prove that the full WL hierarchy cannot, in general, distinguish between satisfiable and unsatisfiable instances. We show that indistinguishability under higher-order WL carries over to practical limitations for WL-bounded solvers that set variables sequentially. We further study the expressivity required for several important families of SAT instances, including regular, random and planar instances. To quantify expressivity needs in practice, we conduct experiments on random instances from the G4SAT benchmark and industrial instances from the International SAT Competition. Our results suggest that while random instances are largely distinguishable, industrial instances often require more expressivity to predict a satisfying assignment.
\end{abstract}

\section{Introduction}

Boolean Satisfiability (SAT) is a central reasoning problem in computer science and one of the canonical \NPcomplete problems. Classic SAT solvers are  highly optimized and can handle instances with millions of variables \citep{heule2024proceedings}. Part of their success comes from carefully engineered heuristics, such as branching and restart strategies \citep{moskewicz2001chaff, biere2008adaptive}, conflict-driven clause learning (CDCL) \citep{silva1996grasp}, and learned clause management \citep{audemard2009predicting}. These heuristics are based on recurring patterns that differ across distributions: CDCL solvers are effective on industrial instances with strong community structure \citep{ansotegui2012community}, whereas look-ahead solvers are better suited for random instances \citep{sat-structure-survey}. However, designing distribution-specific heuristics is time-consuming and requires expertise in the field.

Machine learning offers a promising alternative, where heuristics can be learned from data. Graph Neural Networks (GNNs) \citep{scarselli2008graph, kipf2016semi, xu2018powerful} have become the main architecture for learning-based SAT solving \citep{ML-SAT-survey}, since formulas can be naturally expressed as graphs. A common choice is the Literal Clause Graph (LCG), where literals are connected to clauses in a bipartite graph (see Figure~\ref{fig:isomrphicLCGs} for an example). Existing GNN methods range from end-to-end SAT solvers, such as NeuroSAT \citep{SLB19} and QuerySAT \citep{Ozolins_2022}, to hybrid approaches that augment components of classic solvers \citep{wang2024neuroback,ML-SAT-survey}. 

However, GNNs are inherently limited in their expressive power \citep{xu2018how}—that is, their ability to distinguish different graph structures. The expressive power of GNNs is characterized by the Weisfeiler-Leman (WL) test \citep{WL}, and the extended $k$-WL hierarchy \citep{Immerman1990}, which provide universal limits on which graphs GNNs can distinguish. In particular, any GNN bounded by the WL hierarchy produces identical outputs on graphs that are WL-equivalent \citep{xu2018how}. This poses a concrete limitation in the context of SAT, where solving relies on uncovering structural patterns in graphs representing formulas.
This raises a fundamental question: \textit{Are GNNs expressive enough to reason about satisfiability?}

Our main result shows that even the full Weisfeiler-Leman hierarchy cannot, in general, distinguish satisfiable from unsatisfiable formulas. Specifically, we construct pairs of 3-SAT instances with $O(n)$ variables are indistinguishable under the $n$-WL test, despite one being satisfiable and the other unsatisfiable (Theorem~\ref{thm:nWLSAT})\footnote{A common misconception is that such indistinguishable instances must exist, because SAT is NP-hard. Appendix~\ref{sec:expressivity_vs_hardness} explains why expressivity and computational hardness are unrelated in this way.}. This result mirrors the classic construction of \citet{nWL} in the context of boolean formulas, demonstrating that indistinguishable formulas may also differ in satisfiability. This has practical implications for solvers that assign variables sequentially, such as QuerySAT \citep{Ozolins_2022}. Even with $\Omega(n)$ variable assignments, satisfiability of the residual formula may remain undecidable with a \textit{WL-powerful architecture}\footnote{GNNs with expressive power matching the Weisfeiler-Leman test, such as \citep{xu2018how}} (Lemma~\ref{lem:1WLimplication}, Corollary~\ref{cor:1WLimplication}). This result is notable because it transfers a theoretical limitation ($k$-WL indistinguishability) to a realistic computational setting. 

We analyze regular, planar, and random SAT instances to examine how expressivity requirements vary across families.
In RegularSAT—a family which we introduce and show is \NPcomplete—all instances of the same size are indistinguishable, making WL-powerful GNNs essentially useless.
In contrast, PlanarSAT is fully identified by the 4-WL test (Theorem~\ref{thm:planarsat}). 

Similarly, we argue that random SAT instances are largely distinguishable by the WL test, mirroring behavior observed in graph isomorphism testing~\citep{randomIsomorphism}. We prove this formally for formulas generated using the method of \citet{LIGextraction} (Lemma~\ref{lem:randomLIG}). Interestingly, learning-based SAT solvers are often trained on random instances, primarily because they are easy to generate \citep{g4satbench}. While both random and industrial instances can be hard in the traditional sense, they differ significantly in structure \citep{sat-structure-survey}. Our results show that these instance families also differ in the level of expressivity required to solve them. 

We quantify this difference experimentally. We test whether WL-powerful GNNs can theoretically distinguish literals sufficiently to predict a satisfying assignment. For datasets, we use random instances from the G4SAT benchmark~\citep{g4satbench} as well as industrial and crafted instances from the International SAT competition~\citep{heule2024proceedings}. In general, literals in random instances are quickly distinguished. In contrast, competition instances often require more iterations, and sometimes WL-powerful GNNs are simply not expressive enough to predict a satisfying assignment. This confirms that industrial and crafted instances pose a greater challenge from the perspective of expressivity.

\section{Preliminaries}
\label{sec:preliminaries}

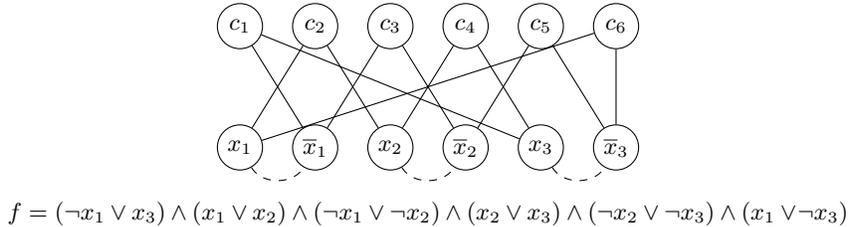
\begin{figure}[t!]
    \centering
    \begin{subfigure}[t]{0.8\linewidth}
        \centering


\pgfmathsetmacro{\angleOffset}{50} 

\begin{tikzpicture}[baseline={(current bounding box.north)}, node distance=1cm, font=\small,clause/.style={circle,draw,inner sep=0pt, minimum size=6mm}, literal/.style={circle,draw, inner sep=0pt, minimum size=6mm}]
    \node[clause] (c1) {$c_1$};
    \node[clause] (c2) [right of=c1] {$c_2$};
    \node[clause] (c3) [right of=c2] {$c_3$};
    \node[clause] (c4) [right of=c3] {$c_4$};
    \node[clause] (c5) [right of=c4] {$c_5$};
    \node[clause] (c6) [right of=c5] {$c_6$};

    \node[literal] (x1) [below=1cm of c1] {$x_1$};
    \node[literal] (notx1) [right of=x1] {$\n{x}_1$};
    \node[literal] (x2) [right of=notx1] {$x_2$};
    \node[literal] (notx2) [right of=x2] {$\n{x}_2$};
    \node[literal] (x3) [right of=notx2] {$x_3$};
    \node[literal] (notx3) [right of=x3] {$\n{x}_3$};
    
    \draw [-] (c2) -- (x1);
    \draw [-] (c2) -- (x2);

    \draw [-] (c3) -- (notx1);
    \draw [-] (c3) -- (notx2);

    \draw [-] (c4) -- (x2);
    \draw [-] (c4) -- (x3);

    \draw [-] (c5) -- (notx2);
    \draw [-] (c5) -- (notx3);

    \draw [-] (c1) -- (notx1);
    \draw [-] (c1) -- (x3);

    \draw [-] (c6) -- (x1);
    \draw [-] (c6) -- (notx3);

    \draw [-, dashed] (x1) to [out=250+\angleOffset,in=290-\angleOffset] (notx1);
    \draw [-, dashed] (x2) to [out=250+\angleOffset,in=290-\angleOffset] (notx2);
    \draw [-, dashed] (x3) to [out=250+\angleOffset,in=290-\angleOffset] (notx3);
\end{tikzpicture}

%
        \caption*{$f=(\neg x_1 \vee x_3) \wedge (x_1 \vee x_2) \wedge (\neg x_1 \vee \neg x_2) \wedge (x_2 \vee x_3) \wedge (\neg x_2 \vee \neg x_3) \wedge (x_1 \vee\! \neg x_3)$}
    \end{subfigure}
    \caption{Literal-clause graph with negation connections (\LCGN) of a formula $f$. Removing the literal-literal edges represented by dashed lines gives the literal-clause graph~(\LCG).}
    \label{fig:exampleLCN}
\end{figure}

\textbf{Boolean satisfiability.} Let $x_1, \dots, x_n$ denote variables in a propositional logic. A literal $\ell$ is a variable $x$ or its negation $\neg x$. A clause $c= \{\ell_1, \dots, \ell_s\}$ is a set of literals, representing the disjunction $\ell_1 \vee \dots \vee \ell_s$. A formula $f$ in Conjunctive Normal Form (\CNF) is a set of $m$ clauses $\{c_1, \dots, c_m\}$, representing the conjunction $c_1 \wedge \dots \wedge c_m$. We write $f$ as a logical formula $\bigwedge_{c \in f} \bigvee_{\ell \in c} \ell$ or as sets, depending on the context. Let $L(f) = \cup_{c \in f} \cup_{\ell \in c} \ell$ be the set of all literals in a formula. The Boolean Satisfiability Problem (\SAT) is defined as follows: given a formula $f$, check whether $f$ is satisfiable. \threesat is the \SAT problem where each formula is in \CNF and each clause consists of at most 3 literals.

\textbf{Graph Notation.}
A graph $G$ is a tuple $(V, E)$. If the graph is not clear from context, we write $V(G)$ and $E(G)$. All graphs are undirected unless otherwise specified. $N(v)$ denotes the set of neighbors of a node $v$ and $d(v) = |N(v)|$ is the degree of $v$. On a directed graph  $d^\text{out}$ denotes the outdegree of a node. A node coloring is a function $\lambda_V: V \rightarrow \mathcal{C}$ where $\mathcal{C}$ is a set of colors. Similarly, an edge coloring is a function $\lambda_E: E \rightarrow \mathcal{C}$. On edge-colored graphs we write $N_c(v) = \{w \in V(G) : \exists \{v, w\} \in E(G) \text{ s.t. } \lambda_E(\{v,w\}) = c\}$ for the neighbors of $v$ through edges of color $c$.

\textbf{Isomorphism.}
Two graphs $G$ and $H$ are isomorphic if there exists a bijection $\sigma: V(G) \rightarrow V(H)$ such that $\{v,w\} \in E(G)$ iff $\{\sigma(v), \sigma(w)\} \in E(H)$. On graphs with a node-coloring ($c_G, c_H$ for $G, H$, respectively) we also require that $c_G(v) = c_H(\sigma(v))$ for all $v \in V(G)$. The definition is extended for edge-colored graphs in the natural way. 

Two \CNF formulas $f,g$ are isomorphic if there are bijections $\sigma_L: L(f) \rightarrow L(g)$ and $\sigma_C: f \rightarrow g$ such that (1.) $\sigma_L(\neg \ell) = \neg \sigma_L(\ell)$ for all $\ell \in L(f)$, i.e. $\sigma_L$ preserves the relationship between a literal and its negation, and (2.) $\sigma_C(c) = \{\sigma_L(\ell) : \ell \in c\}$ for all $c \in f$. 

\textbf{Graph Neural Networks.}
We focus on Message Passing Neural Networks (MPNNs) which encapsulate the majority of GNN architectures. Nodes have some initial features $s_v^0 \in \mathbb{R}^d$. An MPNN operates in synchronous rounds, which are typically structured as follows. In each round $1 \le \ell \le L$, every node aggregates the states of its neighbors, $a_v^\ell = \mathsf{agg}(\{\!\{s_w^{\ell-1} : w \in N(v)\}\!\})$, where $\{\!\{.\}\!\}$ denotes a multiset. The nodes update their state using their previous state and the aggregated messages: $s_v^\ell = \mathrm{upd}(s_v^{\ell-1}, a_v^\ell)$. The functions $\mathsf{agg}$ and $\mathsf{upd}$ are differentiable functions typically parameterized by neural networks. The final representations $s_v^L$ can be used for node-level prediction tasks, or they can be aggregated into a graph-level representation.

\textbf{Weisfeiler-Leman Test.}
The expressive power of MPNNs is bounded by the Weisfeiler-Leman (WL) algorithm, also known as color refinement:

\begin{definition}[Weisfeiler-Leman algorithm] 
    Let $\lambda_V: V(G) \rightarrow \mathcal{C}$ be a vertex coloring. The WL algorithm computes new colorings of the graph iteratively. The initial coloring is given by $\chi^0 := \lambda_V$. For $\ell \in \NN$, a new coloring $\chi^\ell$ is defined as $\chi^\ell(v) := (\chi^{\ell-1}(v), \{\!\{\chi^{\ell-1}(w) : w \in N(v)\}\!\})$. The color refinement is continued until the partition of nodes given by $\chi^\ell$ equals the partition given by $\chi^{\ell+1}$. The output of the WL algorithm is the stable coloring $\chi^\ell$.   

    The WL algorithm can be generalized to also use an edge-coloring $\lambda_E: E(G) \rightarrow \mathcal{C}_E$. The update considers each color class of edges separately: $\chi^\ell(v) := (\chi^{\ell-1}(v), \{\!\{ \chi_{N_c} : c \in \mathcal{C}_E\}\!\})$, where $\chi_{N_c} = \{\!\{\chi^{\ell-1}(w) : w \in N_c(v)\}\!\}$ are the colors of neighbors through edges of color $c$.
\end{definition}

In the \textit{Weisfeiler-Leman test}, the WL algorithm is applied to the disjoint union of $G$ and $G'$. The WL test \textit{distinguishes} $G$ and $G'$ if there is a color $c$ such that the sets $\{v : v \in V(G), \chi(v) = c\}, \{v : v \in V(G'), \chi(v) = c\}$ have different cardinalities. We say that a graph $G$ is \textit{identified} by WL if it is distinguished  from every other non-isomorphic graph $H$. A class of graphs $\mathcal{K}$ is identified if every graph in $\mathcal{K}$ is distinguished from every other non-isomorphic graph $H$ (possibly $H \not\in \mathcal{K}$). 

\textbf{k-Weisfeiler Leman Test.} 
Let $k \ge 2$ be an integer. The \textit{atomic type} of a tuple $\ov{v} \in V(G)^k$ encodes all facts about edge connections and colors within the tuple. Two tuples $\ov{v} \in V(G)^k, \ov{u} \in V(G')^k$ have the same atomic type if and only if the mapping $v_i \mapsto u_i$ is an isomorphism of the induced colored subgraph $G[\{v_1, \dots, v_k\}]$ to $G[\{u_1, \dots, u_k\}]$.

\begin{definition}[$k$-dimensional WL algorithm]
    The $k$-Weisfeiler-Leman ($k$-WL) algorithm initializes $\chi^0(\ov{v})$ as the atomic type of $\ov{v}$ for each $\ov{v} \in V(G)^k$. For $\ell \in \NN$, a new coloring $\chi^\ell$ is defined as
    $\chi^\ell(\ov{v}) := (\chi^{\ell-1}\!(\ov{v}), \chi^{\ell-1}_1\!(\ov{v}), \chi^{\ell-1}_2\!(\ov{v}), .., \chi^{\ell-1}_k(\ov{v}))$, where
    $\chi^{\ell-1}_i(\ov{v}) = \{\!\{ \chi^{\ell-1}(\ov{v}_1, \dots, \ov{v}_{i-1}, u, \ov{v}_{i+1}, \dots, \ov{v}_k) : u \in V(G)\}\!\}$.
    The color refinement is continued until the partition of tuples given by $\chi^\ell$ equals the partition given by $\chi^{\ell+1}$. The output of the WL algorithm is the stable coloring $\chi^\ell$.  The $k$-dimensional WL test is defined analogously to the WL test. We say that $G$ and $G'$ are distinguished if there exists a color $c$ such that the sets $\{\overline{v} : \overline{v} \in V(G)^k, \chi(\overline{v}) = c\}$ and $\{\overline{v} : \overline{v} \in V(G'), \chi(\overline{v}) = c\}$ have different cardinalities. 
\end{definition}

\begin{remark}
    There are two algorithms and naming conventions in the literature. This version of the $k$-WL algorithm is most common in machine learning literature. Through connections to counting logic, it can be shown \citep{Grohe_2017} to be equivalent to $(k-1)$-WL, as defined in for example \citep{nWL,KieferPlanarWL}. See \citep{Huang2021AST} for an overview.
\end{remark}

\section{Graph representations of SAT formulas}
\label{sec:satrep}

A standard way to represent SAT formulas as graphs is the \textit{literal-clause graph} (LCG). The LCG is a bipartite graph with literals on one side and clauses on the other. Edges connect literals to clauses where they appear. See Figure \ref{fig:exampleLCN} for an example. 

In GNNs, node labels of a graph are omitted to preserve permutation invariance. To avoid information loss, it is essential to include edges between literals and their negations.\footnote{Literal-literal edges are already used in practice in most GNN SAT solvers \citep{SLB19,g4satbench}, but their importance is not always stated explicitly.} We call the extended representation the \textit{literal-clause graph with negation connections} (\LCGN). It is defined as the \LCG with additional edges connecting each variable to its negation. The literal-literal edges are assigned a distinct color from the literal-clause edges. The \LCGN of a CNF formula $f$ is denoted $\LCGN{}(f)$. 

Literal-literal edges is necessary to preserve information once labels are removed (see Appendix~\ref{app:graph-representations} for an example). The \LCGN representation is also sufficient to preserve all information:
\vspace{2mm}

\begin{observation}
    \label{lem:lcgn}
    An \LCGN without node labels uniquely determines the corresponding \SAT formula up to isomorphism. The formula can be constructed by grouping nodes into variable pairs according to the literal-literal edges, labeling them arbitrarily as $x_i, \neg x_i$, and reading clauses from the literal-clause edges.
\end{observation}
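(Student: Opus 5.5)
The plan is to show that the reconstruction procedure in Observation~\ref{lem:lcgn} is well defined, and that its output is invariant up to formula isomorphism under all choices made. First I will fix what an \LCGN without labels carries. It is a graph with two edge colors, namely literal-literal edges and literal-clause edges. One point needs care: literal and clause nodes are not labeled. I will recover them from the edge colors. Every literal node is incident to exactly one literal-literal edge, and clause nodes are incident to none. So the literal nodes are exactly the endpoints of literal-literal edges, and the remaining nodes are the clauses.

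The literal-literal edges form a perfect matching on the literal nodes. Each matched pair is then labeled as $x_i, \neg x_i$, choosing arbitrarily both the index $i$ and which endpoint is negated. Each clause node $c$ gives the clause $\{\text{label}(\ell) : \ell \in N(c)\}$. Any literal that appears in no clause still shows up as an isolated matched pair. This yields a \CNF formula $g$ whose \LCGN is the given graph, since reading off edges reproduces it exactly.

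The main step is uniqueness. Suppose $\LCGN(f)$ and $\LCGN(f')$ are isomorphic as edge-colored graphs via $\sigma$; this covers in particular two different labelings of the same graph. Since $\sigma$ preserves edge colors, it maps literal nodes to literal nodes and clause nodes to clause nodes, by the degree characterization above. I define $\sigma_L$ and $\sigma_C$ as the restrictions of $\sigma$ to these two sets.

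Condition (1) of formula isomorphism holds because $\sigma$ maps literal-literal edges to literal-literal edges, so $\sigma_L(\neg\ell)=\neg\sigma_L(\ell)$. Condition (2) holds because literal-clause adjacency is preserved, so $\sigma_C(c)=\{\sigma_L(\ell):\ell\in c\}$. For differing label choices on one graph, the identity map is such an isomorphism, so the different outputs are isomorphic formulas.

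I expect the only subtle point to be a degenerate case. One concern is that clause nodes could be confused with literals, for instance a clause node isolated from everything, as with an empty clause. It is still distinguished, because it has no literal-literal edge. Another concern is duplicate clauses, but formulas are sets of clauses, so these do not arise. I will note these cases explicitly, and then the argument is complete.
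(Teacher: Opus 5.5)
Your proposal is correct and follows the same route as the paper, which offers no argument beyond the reconstruction procedure stated in the observation itself: recover the literal nodes from the literal-literal edges, pair them into variables, and read clauses off the literal-clause edges. Your explicit check that an edge-color-preserving isomorphism of \LCGNs restricts to bijections $\sigma_L,\sigma_C$ satisfying conditions (1) and (2), with the identity map covering different labelings of the same graph, is exactly the uniqueness that the observation asserts.
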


Other representations include the \textit{variable-clause graph} (VCG) and less common \textit{literal-incidence graph} (LIG) and \textit{clause-incidence graph} (CIG). We focus on the \LCGN in this work, as the LIG and CIG representations are lossy, and VCG is not suitable for expressivity analysis because the same formula (up to isomorphism) can have non-isomorphic VCG representations. See Appendix~\ref{app:graph-representations} for details.

\section{Related work}

\textbf{Machine Learning for SAT.}
One of the earliest works in this area is NeuroSAT \citep{SLB19} -- an end-to-end SAT solver framework with GNNs. Their algorithm is based on predicting satisfiability, and hence works with single-bit supervision. QuerySAT \citep{Ozolins_2022} uses an unsupervised loss computed from continuous variable values, and a query mechanism to update the variable values. 
Other approaches include transformer-based models such as SATformer \citep{shi2023satformer} and attention-based variants like SAT-GATv2 \citep{chang2025sat}. 
In a complementary line of work, \citet{yolcu2019learning} build a local search SAT solver whose variable selection strategy is learned by a GNN.
Several of these architectures and losses can be evaluated on the G4SAT benchmark \citep{g4satbench}. 

The end-to-end SAT solvers are mostly of methodological interest, and currently not practical for large instances. Another line of research augments classic SAT solvers with machine learning components, such as learned heuristics or branching strategies. 
\citet{Selsam2019GuidingHS} adapt the NeuroSAT architecture to predict unsatisfiability cores, which is used to select branching variables. 
Other SAT solving components with potential for ML solutions include variable initialization \citep{Wu17}, clause deletion \citep{Vaezipoor2020LearningCD} and restart policy \citep{restart_policy}. See \citep{ML-SAT-survey} for a comprehensive survey on machine learning methods in SAT solving. 

Dataset generation is another promising application area. To mimic industrial SAT instances, \citet{LIGextraction} use a learning-based graph representation and design a method to generate SAT instances from their implicit model. Another line of work frames SAT generation as a bipartite graph generation problem \citep{you2019g2sat}.

\textbf{Expressivity and the Weisfeiler-Leman test.} 
It is well known that the Weisfeiler-Leman test bounds the expressive power of MPNNs \citep{xu2018how}. This limitation has motivated a large number of more expressive GNN architectures, with expressivity corresponding to $k$-WL for some $k>2$ \citep{WLgoneural,PPGN,10.5555/3454287.3454924}. A comparison of the expressivity of different GNN extensions is given by \citep{pmlr-v162-papp22a}.

The $k$-WL is a powerful tool, but it is not able to solve graph isomorphism in general.
The seminal work of \citet{nWL} shows that there are pairs of non-isomorphic $O(n)$-node graphs that are indistinguishable by the $n$-WL test.
There are positive results for special graph classes. Namely, \citet{KieferPlanarWL} show that planar graphs are identified by 4-WL. Random graphs are mostly identifiable by the WL test in two iterations \citep{randomIsomorphism}. 

Beyond structural expressivity, \citet{groheDescriptiveComplexityGNNs} analyzes the power of GNNs in terms of circuit complexity, showing that GNNs can decide problems in TC$^0$ (constant-depth circuits with polynomial size).\footnote{This implies (under common complexity theoretic assumptions) that SAT cannot be decided by GNNs. However, note that this does not imply that there must exist $n$-WL indistinguishable satisfiable and unsatisfiable formulas (which is what we show in Theorem~\ref{thm:nWLSAT}). Indeed, as shown in Theorem~\ref{thm:planarsat}, all \planarsat instances are distinguishable by 4-WL, even though \planarsat is NP-complete.}

\textbf{Complexity Theory.}
Boolean satisfiability was the first problem proven to be NP-complete, by Stephen Cook \citep{CookSAT}. 
Later, several variants of SAT have been proven to be equally hard, such as \planarsat \citep{LicthensteinPlanarSAT}. 
SAT solving remains an active area of research, with SAT competitions being held annually \citep{heule2024proceedings}.

The computational complexity of different equivalence relations between boolean functions was studied by \citet{BRS98}. In their work, two formulas $f,f'$ on the same set of variables are said to be isomorphic if there is a bijection $\sigma$ of the variables such that $f,f'$ agree on all assignments up to the mapping $\sigma$. Under this definition, for instance, a tautology is isomorphic to the empty formula. 
Our notion of isomorphism is stricter, as it requires clauses to be preserved under the mapping.
We find that this notion is better suited for the setting with graph representations.

\textbf{Proof Complexity.}
The complexity of proving the unsatisfiability of propositional formulas is a central topic in proof complexity. One of the most studied systems is resolution, where the proof consists of clauses derived from the original formula using a simple inference rule. Hard examples for resolution include the Pigeonhole principle \citep{php} and the Tseitin formulas \citep{tseitin}. Resolution proof length can be related to the width of the proof, where the width of a resolution proof is the maximum number of literals in any clause of the proof \citep{shortproofsarenarrow}. The complexity of resolution has also been characterized in terms of pebbling games \citep{ATSERIAS2008323,ResolutionAndPebblingGames}. In this setting, pebbling games are played on a single graph—unlike the two-graph pebbling games that correspond to $k$-WL indistinguishability \citep{nWL}.

\section{Indistinguishable families of SAT instances}

In this section, we construct explicit families of SAT formulas that are provably indistinguishable by the WL-test and the WL hierarchy. As our main technical contribution, we show that there are \threesat formulas that are indistinguishable by the $n$-WL test, despite one being satisfiable and the other not (Section~\ref{sec:kWLindistinguishable}). We also identify a practically relevant family of \textit{regular} SAT formulas that are indistinguishable by WL, yet remain \NPcomplete. In general, distinguishing SAT instances (regardless of satisfiability) is as hard as graph isomorphism (Appendix~\ref{app:GISAT}).

\subsection{3-regular SAT}

To motivate our first contribution, we start the section with a simple example of a pair of WL-indistinguishable formulas. 
Consider a CNF formula $f$ on three variables $x_1, x_2, x_3$:
\begin{alignat*}{3}
    &f=(x_1 \vee \neg x_3) \wedge (\neg x_1 \vee x_3)\dots  \qquad&& x_1 = x_3 \\
    &\wedge (x_1 \vee x_2) \wedge (\neg x_1 \vee \neg x_2)\dots  \qquad&& x_1 \otimes x_2 \\
    &\wedge (x_2 \vee x_3) \wedge (\neg x_2 \vee \neg x_3)  \qquad&& x_2 \otimes x_3
\end{alignat*}
where $\otimes$ denotes the xor. See Figure~\ref{fig:exampleLCN} for the graph representation. The formula is satisfied by $x=(1,0,1)$ or $x=(0,1,0)$. 
We can make a similar but unsatisfiable formula $f'$ by replacing the clauses encoding $x_2 \otimes x_3$ (the third line) with clauses encoding $x_2 = x_3$. Note that this change keeps all literal degrees the same. Since each literal appears in exactly two clauses in both $f$ and $f'$, the LCGs of $f$ and $f'$ are WL-indistinguishable. However, $f'$ is clearly unsatisfiable. 

This example can be generalized to a family of 3-regular SAT instances. We say that a \SAT instance is \textit{$k$-regular} if each literal appears in exactly $k$ clauses and each clause contains exactly $k$ literals. Despite this strong regularity, the class remains computationally hard: 

\begin{theorem}
    \label{thm:3regularSAT}
    3-regular \SAT is \NPcomplete. 
\end{theorem}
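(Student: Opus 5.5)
Membership in NP is immediate: a satisfying assignment is a polynomial-size certificate, and checking it (together with the regularity condition) is easy. For hardness I would reduce from \threesat, and build the target formula in two stages. First, bound the number of occurrences of each literal. Second, pad the formula until every clause has exactly three literals and every literal occurs in exactly three clauses.

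\textbf{Stage 1 (bounded occurrences).} Given a \threesat formula $f$, I first remove duplicate literals and tautological clauses. I then apply Tovey's splitting trick. If variable $x$ occurs $k$ times, I replace the $j$-th occurrence by a fresh variable $x^{(j)}$ and add the implication cycle
\[
(\neg x^{(1)}\vee x^{(2)})\wedge(\neg x^{(2)}\vee x^{(3)})\wedge\dots\wedge(\neg x^{(k)}\vee x^{(1)}).
\]
The cycle forces all copies to be equal, so the resulting formula $f_1$ is equisatisfiable with $f$. In $f_1$ every literal occurs at most twice: once in the cycle, and at most once in an original clause. Every clause has 2 or 3 literals.

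\textbf{Stage 2 (padding to exact regularity).} Two kinds of deficit remain: literal slots (a literal occurring fewer than $3$ times) and clause slots (a clause of length $2$). I would fill both using a fixed auxiliary gadget $D$ on fresh variables with the following properties:
\begin{itemize}
\item $D$ is satisfiable;
\item every literal of $D$ has degree at most $3$ and every clause of $D$ has length at most $3$;
\item $D$ has designated free literals that are \emph{true} in some satisfying assignment of $D$, and whose remaining deficits are exactly what we need to plug into $f_1$.
\end{itemize}
The filling rules are then as follows. A short clause $c$ of $f_1$ is extended to $c\vee p$, where $p$ is a free literal of a gadget whose satisfying assignment makes $p$ false. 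This is needed so that adding $p$ does not weaken $c$. A missing occurrence of a literal $\ell$ is filled by a new clause $(\ell\vee q\vee q')$, where $q$ or $q'$ is set true by the gadget, so the clause is harmless.

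With these rules the padded formula is satisfiable iff $f_1$ is. The reverse direction needs the gadget's satisfying assignment to be extendable independently of the assignment to $f_1$. I would make sure of this by connecting gadgets to $f_1$ only through these padding clauses.

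\textbf{Counting constraint.} In any $3$-regular formula with $N$ variables, the $2N$ literals each occur $3$ times, so the total number of slots is $6N = 3m$, giving $m = 2N$. The gadgets must absorb any leftover imbalance. A convenient tool here is the xor-cycle formula from the example above (and its $3$-literal analogues). It is satisfiable and already perfectly regular, and a cycle can be cut open to expose exactly one or two free slots of prescribed polarity.

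\textbf{Main obstacle.} I expect the hardest step to be designing the gadget $D$ concretely. It must be satisfiable regardless of the assignment to $f_1$, and its own literals must end up with degree exactly $3$ after all plugging, while the global count $m = 2N$ works out. I would first search for a small gadget by hand: for example, two interlinked copies of a $3$-regular satisfiable formula with one clause and one literal occurrence removed. I would then verify its properties by case analysis. Finally, I would combine copies so that all deficits match up exactly.
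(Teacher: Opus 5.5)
\textbf{There is a genuine gap.} Your outline has the same shape as the paper's proof: Tovey-style splitting to bound occurrences, then padding with harmless auxiliary clauses. But it stops exactly at the step that carries the content. The gadget $D$ is never constructed, and you flag it yourself as the main obstacle. Without an explicit gadget whose auxiliary literals end with degree exactly $3$, there is no reduction.

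The paper fills three literal deficits $\ell_1,\ell_2,\ell_3$ (repetitions allowed) using fresh $a,b,c,d$ and nine clauses: $(\ell_1\vee a\vee\neg b)$, $(\ell_2\vee\neg a\vee c)$, $(\ell_3\vee b\vee\neg c)$, $(a\vee\neg c\vee d)$, $(a\vee\neg c\vee\neg d)$, $(b\vee\neg a\vee d)$, $(b\vee\neg a\vee\neg d)$, $(c\vee\neg b\vee d)$, $(c\vee\neg b\vee\neg d)$. Each of the eight auxiliary literals occurs exactly three times. Setting $a=b=c=1$ satisfies all nine clauses whatever the $\ell_i$ are, so nothing about $f$ is constrained. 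Your concern about the global count $m=2N$ also disappears once every clause has length exactly $3$. For such a formula with $N$ variables, $\sum_\ell d(\ell)=3m$ over its $2N$ literals. So the total deficit $\sum_\ell(3-d(\ell))=6N-3m$ is automatically a multiple of $3$, and copies of this gadget fill it exactly; no xor cycles are needed.

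Second, your clause-extension rule is unsound as stated. You only require that \emph{some} satisfying assignment of the gadget makes $p$ false. You also only argue that satisfiability of $f_1$ implies satisfiability of the padded formula; the converse is asserted without proof. The padded formula contains $c\vee p$ instead of $c$. So one of its satisfying assignments may use a gadget assignment with $p$ true and leave $c$ false, and then the converse fails. To use this rule you would need $D\models\neg p$, i.e.\ a genuine forcing gadget. That is a much stronger requirement under exact degree $3$: the naive encoding of the unit clause $\neg p$ by four $3$-clauses already gives $\neg p$ four occurrences. Your literal-filling rule $(\ell\vee q\vee q')$ is fine, since adding clauses only shrinks the solution set; the problem is specific to enlarging existing clauses.

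The paper avoids forcing altogether. It replaces each $2$-clause $(x\vee y)$ by the logically equivalent $(x\vee y\vee z)\wedge(x\vee y\vee\neg z)$ for a fresh $z$. If you reduce from 3-SAT with all clauses of length exactly $3$, then after your Stage 1 each literal lies in at most one $2$-clause (its cycle clause). This doubling therefore keeps every degree at most $3$ and leaves only literal deficits for the gadget above.

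Two smaller points: skip splitting variables that occur only once, where the cycle degenerates to a tautology. And unit clauses in a general 3-SAT source would contradict your claim that $f_1$ has only clauses of length $2$ or $3$.
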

Although related \NPcomplete variants have appeared in the literature (3-SAT \citep{Karp1972}, 3-SAT with each \textit{variable} appearing at most 4 times \citep{TOVEY198485}), we are not aware of a formal proof of this specific result, so we provide one in Appendix~\ref{app:3regsat} for completeness. If formulas are given in the 3-regular \SAT format, a WL-powerful GNN is essentially useless in solving them:

\begin{observation}
    \label{cor:3regularSAT}
    The WL test does not distinguish between any two 3-regular \SAT formulas with the same number of variables.
\end{observation}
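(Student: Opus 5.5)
The plan is to show that the stable WL coloring on the \LCGN of any 3-regular formula is the same and completely determined by the number of variables. We then apply this to the disjoint union of two such graphs. We work with the \LCGN (edge-colored WL), since the \LCG case follows a fortiori: dropping the negation edges only removes information.

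Fix a 3-regular formula $f$ on $n$ variables. Its \LCGN has $2n$ literal nodes and $m$ clause nodes. Double counting the literal-clause incidences gives $3\cdot 2n = 3m$, so $m = 2n$. The number of nodes of each type, and hence the size of each initial color class, therefore depends only on $n$.

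The initial coloring must separate literals from clauses. If the node features carry no type, we use the structural degrees instead. A literal has three clause neighbours via literal-clause edges and one negation edge. A clause has three literal-clause edges and no negation edge. After one refinement round the two types are therefore separated in any case.

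We then show by induction on $\ell$ that $\chi^\ell$ assigns one color $L_\ell$ to every literal node and one color $C_\ell$ to every clause node, in every 3-regular formula with any $n$. For the inductive step, take a literal. Its new color is $(L_{\ell-1}, \{\!\{C_{\ell-1},C_{\ell-1},C_{\ell-1}\}\!\}$ through literal-clause edges, $\{\!\{L_{\ell-1}\}\!\}$ through the negation edge$)$. This tuple is the same for every literal. For a clause, the new color is $(C_{\ell-1}, \{\!\{L_{\ell-1},L_{\ell-1},L_{\ell-1}\}\!\}, \emptyset)$, again the same for every clause. Hence the partition is stable after at most one round.

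For two 3-regular formulas $f, f'$ with the same $n$, the WL algorithm on the disjoint union gives each side $2n$ nodes of the literal color and $2n$ nodes of the clause color. The color histograms agree, so the test does not distinguish them.

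No step here is a real obstacle. The only point needing care is that clauses are sets of exactly three literals, so there are no multi-edges. We must also check that the edge-colored update treats the negation edge as its own color class, which is handled in the inductive step above.
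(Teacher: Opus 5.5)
Your proof is correct and uses the same idea as the paper. The paper's argument is that the literal-clause graph of a 3-regular formula is a $(3,3)$-regular bipartite graph whose part sizes ($2n$ literals and $m=2n$ clauses) depend only on $n$, and such graphs are WL-indistinguishable (Appendix~\ref{app:3regsat}). Your induction also tracks the negation-edge color class, so it handles the \LCGN representation directly, which the paper leaves implicit.
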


\subsection{k-WL indistinguishable SAT instances}
\label{sec:kWLindistinguishable}

Given that WL cannot distinguish some formulas, one might wonder whether higher-order WL tests suffice. In this section, we answer the question in the negative, showing that there are \threesat formulas that are indistinguishable by the WL hierarchy, despite one being satisfiable and the other not.

\begin{restatable}{theorem}{nWLSAT}
    \label{thm:nWLSAT}
    There are \threesat formulas $f,\tilde{f}$ with $O(n)$ variables and $O(n)$ clauses such that $f$ is satisfiable and $\tilde{f}$ is not, but the \LCGNs of $f$ and $\tilde{f}$ are indistinguishable by the $n$-WL test.
\end{restatable}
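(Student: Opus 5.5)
The plan is to encode the Cai–Fürer–Immerman (CFI) construction \citep{nWL} as Tseitin formulas, that is, parity constraints on the edges of a base graph. Take a $3$-regular base graph $H$ with $O(n)$ vertices and treewidth (equivalently, separator number) at least $n+1$; such graphs exist, for instance constant-degree expanders. For each edge $e$ of $H$ introduce a variable $x_e$. For each vertex $v$ with incident edges $e_1,e_2,e_3$ and a charge $b_v\in\{0,1\}$, impose the constraint $x_{e_1}\oplus x_{e_2}\oplus x_{e_3}=b_v$. Write it as the four $3$-clauses that forbid the assignments of wrong parity.

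Let $f$ use charges with total parity $0$, e.g.\ all $b_v=0$. Let $\tilde f$ flip the charge at a single vertex. Summing all constraints over $\mathbb{F}_2$ counts every edge twice, so the formula is satisfiable iff $\sum_v b_v=0$ (for connected $H$). Hence $f$ is satisfiable and $\tilde f$ is not. Both formulas are \threesat instances with $O(n)$ variables and $O(n)$ clauses.

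The core step is to show that $\LCGN(f)$ and $\LCGN(\tilde f)$ are indistinguishable by $n$-WL. I would not run $n$-WL directly. Instead I would use the bijective pebble game characterisation \citep{nWL}, in the $k$-WL convention fixed in the remark, being careful about the off-by-one shift between the two conventions. The key symmetry is a ``flip along a path'' map. For a path $P$ in $H$ from $u$ to $w$, negate every variable $x_e$ with $e\in P$, i.e.\ swap the literal nodes $x_e$ and $\neg x_e$. This maps the constraint gadget at every internal vertex of $P$ to itself, permuting its clause nodes, since two incident edges are flipped and parity is preserved. At the endpoints it toggles the charge. It preserves negation edges and literal-clause incidences. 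Consequently, $\LCGN$ with charge vector $b$ is isomorphic to $\LCGN$ with charge vector $b+\mathbf{1}_u+\mathbf{1}_w$. So the location of the ``twist'' can be moved freely, and only the total parity is an invariant.

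Spoiler strategy is then translated into cops and robber on $H$. Each pebble sits on a node of the \LCGN, either a literal or a clause. That node belongs to an edge or a vertex gadget of $H$, so it blocks $O(1)$ vertices of $H$. Duplicator maintains a bijection that is a genuine isomorphism away from the twist, and keeps the twist in a component of $H$ minus the pebbled vertices. When Spoiler moves a pebble, Duplicator reroutes the twist along a path avoiding the pebbles, using the flip map above. This works as long as the robber in the cops-and-robber game on $H$, with a number of cops proportional to $n$, can always escape. That is guaranteed when the treewidth of $H$ exceeds this number, by the Seymour–Thomas characterisation. Scaling $H$ by a constant factor absorbs the $O(1)$ overhead per pebble, and keeps both the variable count and the clause count at $O(n)$.

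I expect the main obstacle to be making the reduction from the pebble game on the \LCGN to cops and robber on $H$ rigorous. The gadgets here are clause and literal nodes rather than the CFI vertex sets, so the argument needs three things. First, Duplicator's bijection must respect the two edge colours, including the negation edges. Second, pebbles on clause nodes must constrain only their own gadget. Third, the bookkeeping between the $k$-WL and $(k-1)$-WL conventions must be handled so that the claim holds exactly for $n$-WL. As a fallback, I would cite the CFI theorem directly for the underlying graphs. For that I would show that $\LCGN(f)$ is obtained from the CFI graph by a first-order interpretation of bounded width, which preserves $k$-WL equivalence up to a constant factor in $k$.
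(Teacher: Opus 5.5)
Your proposal is correct, but the work sits in a different place than in the paper.

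\textbf{How the paper does it.} The paper does not build Tseitin formulas; it only remarks on the resemblance afterwards. Instead it designs $f_G$ so that $\LCGN(f_G)$ is, apart from the extra clause nodes $a_i\vee b_i$, literally the original CFI graph $X(G)$:
\begin{itemize}
\item each endpoint of an edge $\{v,w\}$ gets its own literal pair $a_{v,w},b_{v,w}$;
\item the negation edges $\{a_{v,w},a_{w,v}\}$ and $\{b_{v,w},b_{w,v}\}$ act as CFI's inter-gadget edges;
\item $\tilde f_G$ crosses these two negation edges on one edge of $G$.
\end{itemize}
Indistinguishability is then a direct citation of Theorem~\ref{thm:nWL}, and the effort shifts to satisfiability. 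The 2-clauses force $a_{v,w}\neq b_{v,w}$, so assignments correspond to orientations of $G$. Lemma~\ref{lem:X_sat} turns each gadget into an even-out-degree constraint. Lemma~\ref{lem:even_orientation} then shows that exactly one of $f_G,\tilde f_G$ is satisfiable, and which one depends on the parity of $|E(G)|$.

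\textbf{What your route buys and costs.} Your Tseitin encoding shares $x_e$ between both endpoints. So $\LCGN(f)$ is the merged-edge variant of the CFI graphs, with clause nodes as the vertices $(v,S)$ and $x_e,\neg x_e$ as edge vertices, not CFI's own graph. You gain a cleaner formula with no 2-clauses, a one-line $\mathbb{F}_2$ satisfiability argument, and an $f$ that is satisfiable regardless of parities. The cost is that you cannot quote CFI's theorem verbatim. You must redo the pebble-game argument for this variant, or go through an interpretation, which is exactly the step you flag.

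\textbf{Your sketch of that step is sound.} The path-flip maps are genuine isomorphisms that move the odd charge. A pebble on a clause node can be charged to a cop on its vertex. A pebble on $x_e$ or $\neg x_e$ can be charged to a cop on one fixed endpoint of $e$, since any path using $e$ passes through that endpoint. So pebbles map one-to-one to cops and no constant-factor scaling is actually needed. What remains is the standard gadget-by-gadget definition of Duplicator's bijection, chosen according to where the robber would flee; the bijective game requires this.
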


Our result uses the seminal work of Cai, Fürer and Immerman \citet{nWL}, giving a pair of non-isomorphic graphs $H$ and $\tilde{H}$ which are indistinguishable by $n$-WL. We construct a pair of formulas $f,\tilde{f}$ with \LCGNs isomorphic to $H$ and $\tilde{H}$, respectively. On a high level, our formula $f_G$ encodes the existence of an \textit{even orientation} for a graph $G$. This is an orientation of the edges such that each node has an even number of outedges. We show that such an orientation exists if and only if the number of edges is even. Then, we construct a \textit{twisted} formula $\tilde{f}_G$, encoding the existence of an even orientation when one of the edges is bidirectional. Exactly one of $f_G$ and $\tilde{f}_G$ are satisfiable, depending on the parity of $m$. The proof of Theorem~\ref{thm:nWLSAT} is given in Appendix~\ref{app:kWLindistinguishable}.

Interestingly, the construction in Theorem~\ref{thm:nWLSAT} is similar to Tseitin formulas, which are known as hard instances for resolution refutation \citep{tseitin}. Tseitin formulas encode a set of linear inequalities over nodes of a graph. See Definition~\ref{def:tseitin} for a formal definition. 
Resolution proofs—and likewise the WL test—rely on exploiting local patterns in the formula (or graph), and in both settings, the hardest instances include a global inconsistency that cannot be detected through purely local reasoning.
To the best of our knowledge, this connection between Tseitin formulas and the construction of \citet{nWL} has not been previously observed.

\textbf{Implications for WL-powerful architectures.} Theorem~\ref{thm:nWLSAT} shows that in general, satisfiability cannot be expressed by any GNN architecture bounded by the WL hierarchy.
In contrast to predicting satisfiability directly, classic SAT solvers, and some GNN-based approaches \citep{Ozolins_2022}, work by sequentially setting variables.
A natural question is whether a few variable assignments can help to break symmetries, making formulas more distinguishable, i.e., increasing the effective expressive power of the solver. This mirrors how node \textit{labeling tricks} are useful for GNNs to solve certain tasks like triangle counting \citep{you2021identity, zhang2021labeling}.  
However, even in this setting, WL-powerful GNNs have fundamental limitations:

\begin{restatable}{lemma}{oneWLimplication}
    \label{lem:1WLimplication}
    Let $f$, $\tilde{f}$ be formulas with \LCGNs indistinguishable by $k$-WL for some $k \ge 4$. For any partial assignment $\sigma$ of variables of $f$ with at most $\lfloor k/2\rfloor -1$ variables set, there is a corresponding partial assignment $\tilde{\sigma}$ of the variables of $\tilde{f}$ such that $\LCGN(\sigma(f))$ and $\LCGN(\tilde{\sigma}(\tilde{f}))$ are WL-indistinguishable. 
\end{restatable}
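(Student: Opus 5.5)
We prove the lemma through the bijective pebble game characterization of $k$-WL, in the convention used in this paper, where $k$-WL is equivalent to the $(k-1)$-variable counting logic $C^{k}$ of \citet{nWL}. Indistinguishability of $\LCGN(f)$ and $\LCGN(\tilde f)$ under $k$-WL means Duplicator has a winning strategy in the bijective game with $k$ pebble pairs. (Under the paper's remark, this is $k-1$ pebble pairs in CFI's convention; we will track the off-by-one carefully.)

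Write $\sigma$ as setting literals $\ell_1,\dots,\ell_t$ true, with $t\le\lfloor k/2\rfloor-1$. Let $x_1,\dots,x_t$ be the underlying variables. The plan has four steps.

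\textbf{Step 1: find matching literals in $\tilde f$.} We let Spoiler place pebbles on the literal nodes $\ell_1,\dots,\ell_t$ one at a time. Duplicator's winning strategy answers with literal nodes $\tilde\ell_1,\dots,\tilde\ell_t$, and these define $\tilde\sigma$ (set each $\tilde\ell_i$ true). Literals go to literals and the map is consistent, because the atomic types and one more round of refinement preserve the following:
\begin{itemize}
\item literal versus clause (clauses are adjacent only via literal--clause edges; literals have exactly one negation edge);
\item distinctness of the $\ell_i$;
\item the relation ``$\ell_j$ is the negation of $\ell_i$''.
\end{itemize}
If $\sigma$ is consistent, no two of the $\ell_i$ are negations of each other, so the same holds for the $\tilde\ell_i$.

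\textbf{Step 2: the residual $k$-WL equivalence.} The conclusion is that the pebbled structures $(\LCGN(f),\ell_1,\dots,\ell_t)$ and $(\LCGN(\tilde f),\tilde\ell_1,\dots,\tilde\ell_t)$ remain equivalent in the game with $k-t$ free pebbles. This is at least $\lfloor k/2\rfloor+1\ge3$ free pebbles, which we will use as enough for 2-WL-style reasoning (indeed, counting logic with 2 variables, which captures 1-WL, needs only 2).

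\textbf{Step 3: define the residual formula inside the pebbled graph.} The graph $\LCGN(\sigma(f))$ is obtained from $\LCGN(f)$ as follows:
\begin{itemize}
\item delete the clauses containing some $\ell_i$ (satisfied clauses);
\item delete the nodes $\ell_i,\neg\ell_i$;
\item delete the edges from $\neg\ell_i$ to clauses.
\end{itemize}
A clause that becomes empty stays as an isolated clause node, marking falsity. Every node and edge of the residual graph is definable from the pebbled constants using one extra variable:
\begin{itemize}
\item a clause $c$ survives iff $\neg E(c,\ell_i)$ for all $i$;
\item a literal $v$ survives iff $v\ne\ell_i$ and $v\ne\neg\ell_i$, where ``$v$ is the negation of $\ell_i$'' is a negation-colored edge.
\end{itemize}
Hence the residual graph is a quantifier-free (relative to the constants) interpretation of the pebbled graph.

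\textbf{Step 4: transfer 1-WL indistinguishability.} Any $C^2$ sentence over the residual graph translates into a $C^{2}$ sentence over the pebbled original: relativize the quantifiers to the definable domain and replace edges by their definitions, which mention only the constants. Counting quantifiers relativize correctly. Since 1-WL equivalence is exactly $C^2$ equivalence, and the pebbled originals are $C^{2}$-equivalent by Step 2, the two residual \LCGNs are WL-indistinguishable.

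Game-theoretically, Duplicator plays with the $t$ pebbles fixed plus two moving pebbles. The bijections she supplies must map surviving nodes to surviving nodes; this holds because otherwise Spoiler wins within one extra move using the atomic type with the fixed pebbles.

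The main obstacle is the careful pebble accounting. We must verify that the convention-shift ($k$-WL $\leftrightarrow C^{k}$ versus $C^{k+1}$) together with the auxiliary pebble needed to check membership in the surviving domain stays within budget. This is why the bound is $\lfloor k/2\rfloor-1$ rather than $k-3$. The argument above should in fact give slack. The halving would be needed if one instead pebbles both $\ell_i$ and $\neg\ell_i$ explicitly: two pebbles per variable, plus two for the final 1-WL game, gives $2t+2\le k$. I would adopt this safer accounting to match the stated bound, and also use it to avoid relying on definability of negation when arguing that the assignment $\tilde\sigma$ is consistent.
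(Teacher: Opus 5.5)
Your proposal is correct and is essentially the paper's argument. The paper first shows that if Duplicator wins the bijective $k$-pebble game, then after Spoiler fixes $t\le k-2$ pebbles the remaining $k-t\ge 2$ pebbles simulate the $2$-pebble (i.e.\ WL) game on the individualized graphs; it then pebbles \emph{both} literals of every assigned variable, which is exactly your ``safer'' count $2t+2\le k$.

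The only differences are these. The paper takes $\LCGN(\sigma(f))$ to be the original \LCGN with $\top/\bot$ labels on the assigned literals rather than a simplified residual graph, so your Step~3 interpretation collapses to relabeling the pebbled nodes, though your version is also fine. Your remark that pebbling only the true literal of each variable already suffices is right, and would in fact allow up to $k-2$ assigned variables.
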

Here, $\LCGN(\sigma(f))$ denotes the \LCGN of the formula $f$, with additional labels $\top$ or $\bot$ for literals set to true or false by $\sigma$. Lemma~\ref{lem:1WLimplication} transfers a theoretical $k$-WL indistinguishability result to a practical setting, where a WL-powerful GNN is used to make sequential decisions—any decision that is made for $f$ could also be made for $\tilde{f}$. 

Restart strategies provide a concrete example of this limitation. When restarting, the solver discards the current assignment and restarts the search from scratch. Restarting is a core component of classic solvers \citep{gomes2000heavy, huang2007effect}, and ML has also been used to guide such restarts \citep{liang2018machine}. The following result is a corollary of Theorem~\ref{thm:nWLSAT} and Lemma~\ref{lem:1WLimplication}, showing that restart prediction is fundamentally hard for WL-powerful models:
\begin{restatable}{corollary}{cor1WLimplication}
    \label{cor:1WLimplication}
    WL-powerful GNNs cannot, in general, distinguish a satisfiable residual formula from an unsatisfiable one, even with $\Theta(n)$ variable assignments, where $n$ is the number of variables in the formula.
\end{restatable}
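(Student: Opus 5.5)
The plan is to combine Theorem~\ref{thm:nWLSAT} with Lemma~\ref{lem:1WLimplication} and then strengthen the conclusion from ``WL-indistinguishable'' to ``one residual formula is satisfiable and the other is not.''

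\textbf{Setup.} Take the pair $f,\tilde f$ from Theorem~\ref{thm:nWLSAT}. They have $N=O(n)$ variables, say $N\le Cn$, and their \LCGNs are indistinguishable by $n$-WL. Apply Lemma~\ref{lem:1WLimplication} with $k=n$. For any partial assignment $\sigma$ of $f$ setting at most $\lfloor n/2\rfloor-1=\Theta(N)$ variables, it yields a partial assignment $\tilde\sigma$ of $\tilde f$ such that $\LCGN(\sigma(f))$ and $\LCGN(\tilde\sigma(\tilde f))$ are WL-indistinguishable. By \citet{xu2018how}, every WL-powerful GNN therefore produces identical outputs on the two residual formulas. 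So the number of assignments is $\Theta(n)$ in terms of the number of variables, as the statement requires.

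\textbf{Satisfiability of the residuals.} It remains to choose $\sigma$ so that exactly one residual is satisfiable. The first choice is $\sigma$ as the restriction of a satisfying assignment of $f$ to $\lfloor n/2\rfloor-1$ variables. Then $\sigma(f)$ is satisfiable. On the other side, $\tilde f$ is unsatisfiable, so every restriction $\tilde\sigma(\tilde f)$ is unsatisfiable, whatever $\tilde\sigma$ the lemma produces. This gives a satisfiable and an unsatisfiable residual formula that no WL-powerful GNN can separate, even after $\Theta(n)$ assignments. Symmetrically, choosing $\sigma$ as an assignment falsifying some clause of $f$ shows the model also cannot detect conflicts that force a restart.

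\textbf{Main obstacle.} The delicate point is to check that Lemma~\ref{lem:1WLimplication} puts no requirement on $\sigma$ beyond the number of variables it sets, so that we may pick $\sigma$ consistent with a satisfying assignment. If the lemma's proof built $\tilde\sigma$ through a pebble-game correspondence that constrains $\sigma$, we would instead argue directly. Pebbling the $2(\lfloor n/2\rfloor-1)$ literal nodes of the set variables, together with two extra pebbles, is still within the $n$-pebble budget, and any such $\sigma$ works. We would also confirm that $\lfloor n/2\rfloor-1\le N$, so the assignments are genuinely available, and that the constant in $\Theta(n)$ is independent of $n$. Both follow from the $O(n)$ size bound in Theorem~\ref{thm:nWLSAT}.
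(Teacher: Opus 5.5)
Your proposal is correct and is essentially the paper's argument: apply Lemma~\ref{lem:1WLimplication} with $k=n$ to the pair from Theorem~\ref{thm:nWLSAT}, take $\sigma$ to be the restriction of a satisfying assignment of $f$ to $\lfloor n/2\rfloor-1=\Omega(N)$ variables, and observe that $\sigma(f)$ remains satisfiable while every residual of the unsatisfiable $\tilde f$ is unsatisfiable. Drop the ``symmetric'' aside, though: if $\sigma$ falsifies a clause of $f$, then both residuals are unsatisfiable, and the falsified clause is even visible to WL (a clause node all of whose literal neighbours carry $\bot$), so that remark shows nothing.
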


\section{Positive results for distinguishability}
Having shown expressivity limits, we now identify families where GNNs can succeed.

\subsection{Planar SAT}

\planarsat is a variant of \SAT where the clauses are represented as a planar graph. The \planarsat language is \NPcomplete \citep{LicthensteinPlanarSAT}. The following result is a consequence of \citet{KieferPlanarWL}, who showed that the 4-WL test distinguishes all planar graphs. 

\begin{theorem}
    \label{thm:planarsat}
    For any \SAT formula $f$, there is an equisatisfiable \planarsat formula $f'$ with polynomially many variables and clauses, such that the 4-WL test distinguishes $f'$ from any other formula. 
\end{theorem}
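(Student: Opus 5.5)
The plan has three steps. First, reduce an arbitrary \SAT formula $f$ to an equisatisfiable \planarsat formula $f'$ in the sense of \citet{LicthensteinPlanarSAT}. Second, argue that the \LCGN of $f'$ is a planar graph, possibly after a mild normalization. Third, invoke the result of \citet{KieferPlanarWL} that planar graphs are identified by 4-WL. Together with Observation~\ref{lem:lcgn}, this will give that $f'$ is distinguished from any non-isomorphic formula.

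\textbf{Step 1: planar reduction.} Given $f$, we first convert it to an equisatisfiable \threesat formula in the standard way. We then apply Lichtenstein's reduction. This draws the variable-clause incidence graph, together with a cycle through all variables, in the plane. Each edge crossing is replaced by a constant-size crossover gadget built from fresh variables and clauses, and the gadget preserves satisfiability. The result is a formula $f'$ with polynomially many variables and clauses whose variable-clause graph (with the variable cycle) is planar.

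\textbf{Step 2: planarity of the \LCGN.} We need the graph that the WL test actually sees, $\LCGN(f')$, to be planar. To get it, we split each variable vertex $x$ in the planar embedding into the two adjacent vertices $x$ and $\neg x$, joined by the negation edge. The clause edges around $x$ are then distributed to the appropriate literal.

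The difficulty is that the positive and negative occurrences of $x$ may interleave in the rotation system around $x$. In that case the split is not planar. I would handle this by a local rewiring that preserves equisatisfiability. We replace $x$ by a chain of fresh copies $x^{(1)},\dots,x^{(t)}$, one per occurrence, linked by equivalence clauses $(\neg x^{(i)}\vee x^{(i+1)})\wedge(\neg x^{(i+1)}\vee x^{(i)})$ arranged cyclically. Each copy then carries only one clause occurrence, so its literal split is trivially planar. The equivalence cycle itself can be drawn planarly around the former position of $x$, with positive literals on the outer ring and negative literals on the inner ring. Checking that this ring gadget embeds planarly is the main obstacle. If it does not embed directly, I would use a ladder-like layout in which the equivalence clause nodes alternate between the two rings.

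\textbf{Step 3: applying \citet{KieferPlanarWL}.} The \LCGN has two edge colours. We encode the colour of each literal-literal edge by subdividing it with a marker vertex, which keeps the graph planar. Alternatively, we note that the literal-literal edges form a perfect matching among literal nodes, which are themselves distinguishable from clause nodes, so the two colours can be told apart without extra structure. Then 4-WL distinguishes $\LCGN(f')$ from every non-isomorphic graph. If another formula $g$ has an \LCGN that 4-WL does not distinguish from $\LCGN(f')$, the two \LCGNs are isomorphic as coloured graphs. By Observation~\ref{lem:lcgn}, $g$ is then isomorphic to $f'$, which completes the argument.
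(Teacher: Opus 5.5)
Your overall route (Lichtenstein's reduction, then the planar 4-WL theorem of Kiefer, Ponomarenko and Schweitzer, then Observation~\ref{lem:lcgn}) is the paper's. The difference is Step~2: the paper builds no gadget there and simply takes planarity of the \LCGN from Lemma~1 of Lichtenstein. The gadget you propose in its place does not work as written.

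With $x^{(i)}$ on the outer ring and $\neg x^{(i)}$ on the inner ring, both clauses between consecutive copies run from the inner ring to the outer ring: one joins $\neg x^{(i)}$ to $x^{(i+1)}$, the other joins $\neg x^{(i+1)}$ to $x^{(i)}$. So they cross. The problem is not just the drawing. Suppress the degree-2 clause nodes; the gadget is then the graph with rungs $x^{(i)}\neg x^{(i)}$ and edges $\neg x^{(i)}x^{(i+1)}$, $x^{(i)}\neg x^{(i+1)}$ (indices mod $t$). Following the non-rung edges from $x^{(1)}$ visits $\neg x^{(2)}, x^{(3)}, \neg x^{(4)},\dots$.
\begin{itemize}
\item For even $t$ this closes into two rails. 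The result is a prism, which is your ladder layout with every other rung flipped.
\item For odd $t$ it closes into a single $2t$-cycle with antipodal rungs. That is a subdivided M\"obius ladder, which is nonplanar for $t\ge 3$ (for $t=3$ it is $K_{3,3}$).
\end{itemize}
Even for even $t\ge 4$ there is a problem. The prism is 3-connected, so its faces are forced, and each rail face contains exactly one literal per copy, with alternating signs. Occurrence clauses arriving from outside can therefore be attached only if the signs of the occurrences happen to alternate in the rotation at $x$.

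The repair is easy: drop the double implications. The one-directional cycle $(\neg x^{(1)}\vee x^{(2)})\wedge(\neg x^{(2)}\vee x^{(3)})\wedge\cdots\wedge(\neg x^{(t)}\vee x^{(1)})$ already forces all copies to be equal. Its \LCGN is a single cycle $x^{(1)},\neg x^{(1)},c_1,x^{(2)},\neg x^{(2)},c_2,\dots,c_t$. Draw it as a small circle at the old position of $x$. Every literal then lies on the outer face, in the cyclic order of the copies, so the $i$-th occurrence edge can be attached to $x^{(i)}$ or $\neg x^{(i)}$ as its sign requires, with no crossing. A path of double implications, with dummy copies inserted to fix the sign parity, also works.

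With this change your Step~2 becomes a self-contained proof of what the paper imports from Lichtenstein. It also shows something slightly more general: you only need the variable--clause graph of $f'$ to be planar.

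One smaller point in Step~3: the marker vertices should carry their own vertex colour, since a bare subdivision vertex looks exactly like a binary clause node. The ``alternatively'' remark is not needed.
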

\noindent\textit{Proof.} \planarsat is \NPcomplete \citep{LicthensteinPlanarSAT} (this also works in the \LCGN representation, see Lemma 1 in \citep{LicthensteinPlanarSAT}). The 4-WL test distinguishes between all planar graphs~\citep{KieferPlanarWL}. \QED

This result shows that, despite the general limitations of the WL-hierarchy in distinguishing formulas (Section~\ref{sec:kWLindistinguishable}), there exist natural and computationally hard subsets of \SAT, such as \planarsat, where already 4-WL is fully expressive. The reduction to \planarsat is done by replacing edge crossings in the \LCGN with gadgets that ensure planarity. Unfortunately, the reduction is not efficient in practice, because each gadget adds 9 variables and 20 clauses and there may be up to $O(n^2)$ edge crossings.

\subsection{Random SAT instances}

Random SAT instances can be defined in various ways, often depending on the desired structure or difficulty. In this section, we consider instances generated from randomly sampled literal-incidence graphs (LIGs), where each literal is a node and edges connect literals that co-occur in a clause. While the LIG representation loses some logical information, \citet{LIGextraction} give a principled procedure for extracting a CNF formula from it: 

\begin{lemma}[\citep{LIGextraction}]
    \label{lem:LIGextraction}
    Given a literal-incidence graph $G$, a corresponding \CNF formula can be extracted by computing a minimal clique edge cover of $G$.\footnote{A clique edge cover is a set of cliques in $G$, such that all edges belong to at least one clique.} The clauses of the formula correspond to the cliques in the edge cover. The generated formula does not contain duplicate clauses, subsumed clauses or unit clauses. 
\end{lemma}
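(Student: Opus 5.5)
The statement is quoted from \citet{LIGextraction}, so the plan is to check that its three claims follow from minimality of the clique edge cover. The formula is built as follows. Fix a minimal clique edge cover $\mathcal{Q}=\{Q_1,\dots,Q_m\}$ of $G$, minimal in the sense that no clique can be removed and no clique can be replaced by a proper subclique while still covering every edge. Turn each clique $Q_i$ into the clause $c_i=\{\ell : \ell\in Q_i\}$. Since the nodes of $G$ are literals, the result is a CNF formula $f$.

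First I will check that $f$ is a correct extraction, i.e.\ that $\mathrm{LIG}(f)=G$. Two literals co-occur in some clause of $f$ iff they lie in a common clique $Q_i$. Because every $Q_i$ is a clique of $G$, no new edges appear. Because $\mathcal{Q}$ covers all edges, no edges are lost. The literal set matches $V(G)$ as long as $G$ has no isolated nodes. Isolated literals can only come from unit clauses, and those are excluded by the convention the paper adopts from \citet{LIGextraction}.

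Next come the three structural properties, each argued by contradiction with minimality.
\begin{itemize}
\item \textbf{No duplicates.} Since $\mathcal{Q}$ is a set, two equal clauses would be the same clique listed twice, and one copy could be removed.
\item \textbf{No subsumed clauses.} If $c_i\subseteq c_j$ with $i\ne j$, every edge covered by $Q_i$ is also covered by $Q_j$. Then $\mathcal{Q}\setminus\{Q_i\}$ is still a cover, contradicting minimality.
\item \textbf{No unit clauses.} A single-vertex clique covers no edges, so it can be dropped from any cover. Hence it cannot appear in a minimal cover.
\end{itemize}

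The main obstacle is pinning down which notion of minimality is needed. Minimum cardinality implies inclusion-minimality, and inclusion-minimality alone already rules out duplicates, subsumption and singletons. So I would prove the claims under the weaker inclusion-minimal notion; then they hold for whichever notion \citet{LIGextraction} actually compute. A second subtlety is that a clique might contain both $x$ and $\neg x$, which gives a tautological clause. The lemma does not exclude this, and it needs no treatment here because the graph $G$ simply determines it.
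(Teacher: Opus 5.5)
Your argument is correct. The paper gives no proof of this lemma; it imports it from \citet{LIGextraction} and uses it only as a black box in the proof of Theorem~\ref{lem:randomLIG}. That proof relies on two consequences of the lemma: two-hop literal neighbourhoods in the \LCGN reproduce the adjacency of $G$ (i.e.\ $\mathrm{LIG}(f)=G$), and no two clauses coincide. Your direct verification supplies both.

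Compared with the citation, your route buys precision about the hypothesis. Each of the three structural properties is a one-line consequence of removal-minimality alone. So the claims hold for a minimum-cardinality cover, for any inclusion-minimal cover, and for any heuristic cover once redundant cliques are pruned. The cited statement leaves the meaning of ``minimal'' unspecified.

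Your route also makes explicit two points the statement glosses over:
\begin{itemize}
\item An isolated node of $G$ can never occur in the extracted formula, so ``corresponding'' holds only up to isolated nodes. This is forced by minimality, since singleton cliques are always removable; it is not a convention.
\item Edges between $x$ and $\neg x$ produce tautological clauses, which the lemma does not exclude.
\end{itemize}

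Two small clean-ups:
\begin{itemize}
\item Drop the ``replaced by a proper subclique'' condition from your opening definition, since you never use it. Minimum cardinality does not imply it: a minimum cover can contain a clique that could be shrunk, because the edges joining one of its vertices to the rest of the clique are already covered by other cliques.
\item Note that existence is immediate. For example, the set of all edges viewed as copies of $K_2$ is already a cover from which no clique can be removed.
\end{itemize}
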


We show that a CNF formula extracted from a random literal-incidence graph is likely identified by WL. The proof can be found in Appendix~\ref{app:randomLIG}.

\begin{restatable}{theorem}{randomLIG}
    \label{lem:randomLIG}
    A \CNF formula extracted from a uniformly random literal-incidence graph with $n$ literals is identified by the WL test with probability at least $1- (n)^{-1/7}$, over the choice of a LIG, for a large-enough $n$.  
\end{restatable}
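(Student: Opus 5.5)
We would reduce the statement to the classical result of Babai, Erd\H{o}s and Selkow \citep{randomIsomorphism}: with probability $1-o(1)$ (quantitatively, at least $1-n^{-1/7}$ for large $n$), a uniformly random graph $G(n,1/2)$ has a canonical labeling given by two rounds of color refinement. More precisely, the $n^{0.15}$ or so vertices of largest degree have pairwise distinct degrees, and every remaining vertex is uniquely identified by its adjacency pattern to this set. In particular, WL assigns every vertex of such a LIG a unique color, i.e.\ the stable coloring is discrete. We would quote this as a black box, with the stated failure probability absorbing the $n^{-1/7}$ bound.

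The main step is to transfer discreteness from the LIG to the \LCGN of the extracted formula $f$. Each literal node $\ell$ in $\LCGN(f)$ has clause-neighbors, and each clause node has its literal-neighbors. Two literals $\ell,\ell'$ are adjacent in the LIG if and only if there is a clause $c$ with $\ell,\ell'\in c$, i.e.\ they are at distance $2$ via clause nodes. Using Lemma~\ref{lem:LIGextraction}, the extracted cover has no unit clauses, so every LIG edge is realized by some clause and every clause is a clique of size at least $2$. We would then show by induction on rounds that two rounds of LIG refinement are simulated by $O(1)$ rounds of WL on $\LCGN(f)$. The first step is to recover LIG degrees. The multiset of clause colors seen by $\ell$ (each clause color encoding its size) does not directly give the number of distinct LIG neighbors, because cliques may overlap in edges. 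This is the main obstacle.

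To handle it, we would argue that in $G(n,1/2)$ we can choose a sufficiently robust invariant. Rather than exact LIG degrees, we use the refined WL color after two literal-clause-literal hops. Its information is the multiset over clauses $c\ni\ell$ of multisets of colors of literals in $c$. This determines, for the high-degree set $S$, the set $N_{\mathrm{LIG}}(\ell)\cap S$, once the members of $S$ have unique colors. Indeed, $s\in N_{\mathrm{LIG}}(\ell)$ iff some clause containing $\ell$ has a literal of color $\chi(s)$, and this information is present in the iterated color.

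For the base step we still need the elements of $S$ to get unique colors. We would strengthen the probabilistic statement: in $G(n,1/2)$, degree gaps among the top vertices are $\Omega(\sqrt n/\log n)$-ish, while the discrepancy between ``sum of (clique sizes $-1$) over cliques containing $v$'' and $\deg(v)$ is controlled by the overlap of cliques in a minimal clique edge cover of a random graph. Here we use that random graphs have clique number $O(\log n)$, so minimal covers are mostly edges and small triangles. If the overlap bound is too weak, we would instead iterate WL further and rely on the fact that vertices of $S$ are already separated by the finer degree statistic of \citep{randomIsomorphism}.

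Once every literal has a unique color, every clause (a set of literals) gets a unique color, so the stable coloring of $\LCGN(f)$ is discrete. A discrete stable coloring implies $\LCGN(f)$ is identified by WL: any $H$ with the same color counts yields a color-preserving bijection, and edges are determined by the colors. This gives the theorem, with the negation edges playing no essential role beyond being preserved.
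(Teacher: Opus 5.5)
Your overall route is the paper's. You quote \citep{randomIsomorphism} for the LIG and simulate LIG adjacency by literal--clause--literal hops in the \LCGN. You then give clause nodes unique colors via the absence of duplicate clauses, and conclude from a discrete stable coloring. The paper settles the transfer step in one line: because two-hop literal neighbourhoods in the \LCGN coincide with LIG neighbourhoods as sets, two \LCGN rounds are asserted to carry the information of one LIG round. Your objection to this is well founded. On the clause side, $\chi^2(\ell)$ sees only the multiset of sizes of clauses containing $\ell$. This determines $\sum_{c\ni\ell}(|c|-1)$, but not $\deg_G(\ell)$ once some neighbour shares several clauses with $\ell$. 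The simulation is exact when every LIG edge lies in exactly one clause; then an easy induction shows that $2t$ rounds on the \LCGN refine $t$ rounds on the LIG. Lemma~\ref{lem:LIGextraction} does not guarantee this, and minimality can even force overlaps. The unique minimum clique edge cover of the union of the triangles $\{1,2,3\},\{2,3,4\},\{4,5,6\},\{1,5,6\}$ consists of exactly these four triangles. With negation pairs $(1,4),(2,5),(3,6)$, every literal lies in two $3$-clauses and has one negation edge. So WL never separates literals $1,4$ (LIG degree $4$) from the rest (LIG degree $3$). Your proof and the paper's therefore stand or fall at the same step.

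The gap is that your repair of this step does not work. You never establish that the high-degree set $S$ receives unique colors in $\LCGN(f)$, and everything afterwards depends on that.

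First, clique number $O(\log n)$ bounds clause sizes only from above. For a minimum cover the opposite of your claim holds: the edge clique cover number of $G(n,1/2)$ is w.h.p.\ $\Theta(n^2/\log^2 n)$ (Frieze and Reed). By counting, a constant fraction of the LIG edges then lie in clauses with $\Omega(\log n)$ literals. If \emph{minimal} is meant in the inclusion sense, clause sizes are simply whatever the extraction procedure outputs.

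Second, and more fundamentally, the excess $\sum_{c\ni v}(|c|-1)-\deg_G(v)$ is fixed by which minimal cover the extraction returns, not by any randomness you can quantify. A bound would have to hold for every admissible cover, and Lemma~\ref{lem:LIGextraction} provides none. Nothing forces the excess below the relevant degree gaps, which are $o(\sqrt n)$. Your gap estimate is also optimistic: at failure probability $n^{-1/7}$, the best lower bound one can guarantee on the smallest gap among the top $\Theta(\log n)$ degrees is polynomially smaller than $\sqrt n$. Any further bad event would also have to fit inside the same $n^{-1/7}$ budget.

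Third, iterating WL further does not help by itself. The \emph{finer degree statistic} of \citep{randomIsomorphism} is the LIG degree, which is exactly what the \LCGN refinement is not known to compute. Recovering it from overlapping clauses amounts to counting $4$-cycles $\ell - c - \ell' - c' - \ell$ in the incidence graph, which 1-WL cannot do in general, as the example shows.

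To close the argument you need one of two things. One option is an extra structural guarantee on the extracted cover, for instance that two clauses share at most one literal; then your remaining steps go through with the exact bound of \citep{randomIsomorphism}. The other is an invariant that WL on the \LCGN provably computes, together with a separation estimate for it that holds uniformly over the covers the extraction may return.
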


\section{Experiments}

We aim to evaluate whether WL-powerful architectures (such as GIN \citep{xu2018how}) are, in principle, capable of predicting a satisfying assignment to SAT formulas.

\subsection{Setup}
Our experiments are based on the fact that in a node-level prediction task, nodes that are WL-equivalent must have the same output. Given a satisfiable formula, we add equality constraints between all WL-equivalent literals, and check whether the augmented formula remains satisfiable. This is a necessary (but not a sufficient) condition for a WL-powerful GNN to predict a satisfying assignment. 

Formally, let $f$ be a satisfiable formula. Running WL for $r\ge 1$ rounds on $\LCGN(f)$ gives a partition of the literals $L_1, \dots, L_s$. We construct an augmented formula $f_r$ that restricts literals in each partition to the same value. For each equivalence class $L_j = \{\ell_1^j, \dots, \ell_{n_j}^j\}$, we add the clauses 
$g_j := (\neg \ell_{n_j}^j \vee \ell_1^j) \wedge \bigwedge_{i=1}^{n_j-1} (\neg \ell_i^j \vee \ell_{i+1}^j)$. The formula $g_j$ encodes an equality constraint between the literals in $L_j$. Given a satisfiable formula $f$, a WL-powerful architecture can predict a satisfying assignment within $r$ rounds \textit{only if} $f_r = f \wedge \bigwedge_{i=1}^s g_j$ is satisfiable. We solve $f_r$ for different values of $r$, from $r=1$ to $r=r_{\text{converged}}$, where $r_{\text{converged}}$ is the number of rounds for WL to converge on the \LCGN of the original formula $f$. We let $r_{\text{crit}}$ be the smallest $r$ such that $f_r$ is satisfiable, if such a round exists. Conversely, if $f_r$ is unsatisfiable for all $r$, we conclude that WL is not sufficiently powerful to predict satisfying assignments for~$f$. 

\subsection{Datasets}
\label{sec:datasets}
\textbf{Random instances.} The G4SAT benchmark \citep{g4satbench} generates random instances from various families, including random 3-SAT \citep{CRAWFORD199631} and the \textit{CA} family mimicking community structures in industrial instances \citep{CA}. Instances are grouped by size (referred to as difficulty in \citep{g4satbench}). The benchmark is designed so that all instances are challenging, by choosing instances that are near the satisfiability threshold for 3-SAT \citep{CRAWFORD199631} and analogously for other families. See \citep{g4satbench} for descriptions of the families.

\textbf{SAT competition instances.} We use instances from the International SAT competition \citep{heule2024proceedings}, spanning the competitions held between years 2020 and 2025. The instances are selected as hard examples from various applications, including scheduling, cryptography, and hardware equivalence checking. Some families are hand-crafted to be difficult for SAT solvers, such as formulas encoding the Pigeonhole principle. Detailed descriptions of the families can be found in the competition proceedings; see \citep{heule2024proceedings} for the 2024 edition.\footnote{A list of instances for the 2024 competition is available at \url{https://benchmark-database.de/?track=main_2024&result=sat}}. The size of the instance varies from a few hundred variables to 50 million variables. Due to limited computational resources, we limit instances to those under 10 MB in size.

\subsection{Results}

See Table~\ref{tab:experiment-random} for results on random instances. The number of rounds needed for WL to distinguish literals sufficiently is very low, usually 3 or 4. We observed that in many cases (about 40\% of all formulas), WL actually gives all literals unique identifiers. In this case, the constrained formula $f_r$ is trivially satisfiable because it is equal to $f$. There are a few instances in the $k$-clique and $k$-vercov families ($2.0\%$ and $0.3\%$, respectively) that WL could not solve.\footnote{$k$-clique encodes the problem of finding a $k$-clique on an Erdös-Renyi graph. Symmetries in the underlying graph can lead to symmetries in the formula that WL cannot resolve. For example, the graph may contain two nodes that are each fully connected to the same clique but not to each other, making them indistinguishable under WL, yet mutually exclusive in the satisfying assignment.} See Table~\ref{tab:random-full-results} for the full results on random instances. 

For random 3-SAT instances, regardless of the size of the formula, literals are almost always sufficiently identified after 3 iterations, and WL converges in 4 rounds. This pattern is due to the constant degree of the clauses. In the first iteration, each literal sees its degree $d_\ell$. However, the second iteration does not refine the literal partition because every neighbor is a clause with degree 3—only on the third iteration, the literals observe the degrees of other literals in shared clauses.

\setlength{\tabcolsep}{10pt} 
\begin{table}[t]
    \centering
    \caption{Results on random instances, grouped by family. All instances were initially satisfiable. $r_{\text{crit}}$ is the smallest number of rounds for which the augmented formula $f_r$ is satisfiable. We write \textit{unsat} when this value does not exist. $r_{\text{converged}}$ is the number of rounds for WL to converge. All values are reported as mean ± std.}
    \begin{tabular}{llrrrrr}
        \toprule
        family & difficulty & $r_{\text{crit}}$ & $r_{\text{converged}}$ & Variables & Clauses & Count \\
        \midrule
        \multirow[t]{5}{*}{3-sat} & easy & 2.97 ± 0.18 & 3.68 ± 0.47 & 26 ± 9 & 119 ± 36 & 1000 \\
         & medium & 3.00 ± 0.04 & 3.92 ± 0.28 & 119 ± 47 & 509 ± 198 & 1000 \\
         & hard & 3.00 ± 0.00 & 4.00 ± 0.00 & 250 ± 29 & 1065 ± 125 & 100 \\
         & hard+ & 3.00 ± 0.00 & 4.00 ± 0.00 & 921 ± 48 & 3775 ± 196 & 24 \\
         & hard++ & 3.08 ± 0.28 & 4.00 ± 0.00 & 5001 ± 62 & 20504 ± 256 & 25 \\
        \multirow[t]{5}{*}{k-clique} & easy & 4.12 ± 0.73 & 6.26 ± 0.83 & 33 ± 13 & 543 ± 426 & 960 \\
         &  & unsat & 6.00 ± 0.78 & 22 ± 7 & 217 ± 194 & 40 \\
         & medium & 4.11 ± 0.52 & 6.33 ± 0.95 & 68 ± 17 & 2156 ± 960 & 999 \\
         &  & unsat & 6.00 ± 0.00 & 45 ± 0 & 939 ± 0 & 1 \\
         & hard & 4.00 ± 0.00 & 6.00 ± 0.00 & 114 ± 20 & 5554 ± 1718 & 100 \\
        \bottomrule
    \end{tabular}
    \label{tab:experiment-random}
\end{table}

\paragraph{SAT Competition Instances.}

\begin{table}[t]
    \centering
    \caption{Results on a selection of instances from the international SAT competition. All instances were initially satisfiable.}

    \begin{tabular}{lrrrrr}
        \toprule
        family & $r_{\text{crit}}$ & $r_{\text{converged}}$ & Variables & Count \\
        \midrule
        argumentation & 2.94 ± 0.44 & 4.31 ± 0.87 & 1266 ± 625 & 16 \\
        circuit-multiplier & unsat & 7.18 ± 0.40 & 1075 ± 50 & 11 \\
        cryptography & 15.74 ± 14.67 & 17.63 ± 14.34 & 41510 ± 29705 & 19 \\
         & unsat & 18.74 ± 13.38 & 19257 ± 37634 & 23 \\
        hamiltonian & 4.17 ± 0.51 & 5.44 ± 0.51 & 550 ± 45 & 18 \\
        heule-folkman & unsat & 4.91 ± 0.30 & 16614 ± 1103 & 11 \\
        heule-nol & unsat & 8.60 ± 1.26 & 1419 ± 0 & 10 \\
        maxsat-optimum & 26.64 ± 3.64 & 29.27 ± 2.49 & 22157 ± 5623 & 11 \\
         & unsat & 59.00 ± 0.00 & 27597 ± 8713 & 7 \\
        \bottomrule
        \end{tabular}
    \label{tab:experiment-comp}
\end{table}

An overview of the results is shown in Table~\ref{tab:experiment-comp} and a more detailed breakdown is given in Table~\ref{tab:comp-full-results} (Appendix~\ref{app:experiments}). WL takes considerably more rounds to converge. Out of 448 evaluated instances, only 234 could be solved with the expressive power of WL. Across 69 instance families, 38 contained instances where WL is not expressive enough. An example of a family with indistinguishable structure is \textsl{heule-nol}, which encodes a type of grid coloring problem \citep{heule2024proceedings}. The regular structure of the instances makes it difficult for WL to distinguish literals.

\section{Conclusions and future work}

Our theoretical results establish fundamental limitations on the expressive power of GNNs for SAT solving. We show that even the full WL hierarchy cannot distinguish between satisfiable and unsatisfiable formulas, while also revealing connections to resolution complexity and offering a new perspective on the classic construction of \citet{nWL}. Additionally, we identify an \NPcomplete but WL-indistinguishable class of \SAT instances, as well as provide positive guarantees for distinguishing random and planar \SAT instances.

Experimentally, we show that WL-powerful architectures are, in principle, expressive enough to predict satisfying assignments for random SAT instances, but struggle with industrial and crafted benchmarks. Our test setup allows us to see if a WL-powerful GNN has the \textit{necessary} expressive power for predicting satisfying assignments, though it does not capture whether that expressivity is \textit{sufficient} for generalizable learning. Even for random SAT formulas, improved generalization may require higher-order GNNs, symmetry breaking, or other architectural improvements.

We hope to see GNNs applied to industrial SAT instances in the future. While this remains challenging—due to the lack of scalable generators and the large size of many industrial instances—these instances provide a structurally richer and potentially more demanding testbed. Progress in this direction could offer new insights into generalization that remain hidden when only using random instance distributions.

\section*{Reproducibility Statement}

The code for the experiments is available online\footnote{\url{https://github.com/sakupeltonen/sat-expressivity}} Complete proofs of all theoretical results, including background results such as the NP-completeness of 3-regular SAT, are included in the appendix (Appendices~\ref{app:kWLindistinguishable}-\ref{app:randomLIG}). Details of dataset selection and generation are given in Section~\ref{sec:datasets} and Appendix~\ref{app:experiments}.

\bibliography{ref}
\bibliographystyle{iclr2026_conference}

\appendix

\section{Graph representations of SAT formulas}
\label{app:graph-representations}

\begin{figure}[h!]
    \centering
    
    \begin{subfigure}[t]{\linewidth}
        \centering
        \pgfmathsetmacro{\angleOffset}{50} 

\begin{tikzpicture}[node distance=1cm, font=\small,clause/.style={circle,draw,inner sep=0pt, minimum size=6mm}, literal/.style={circle,draw, inner sep=0pt, minimum size=6mm}]
    \node[clause] (c1) {$c_1$};
    \node[clause] (c2) [right of=c1] {$c_2$};
    \node[clause] (c3) [right of=c2] {$c_3$};
    \node[clause] (c4) [right of=c3] {$c_4$};
    \node[clause] (c5) [right of=c4] {$c_5$};
    \node[clause] (c6) [right of=c5] {$c_6$};

    \node[literal] (x1) [below=1.5cm of c1] {$x_1$};
    \node[literal] (x2) [right of=x1] {$x_2$};
    \node[literal] (x3) [right of=x2] {$x_3$};
    \node[literal] (notx1) [right of=x3] {$\n{x}_1$};
    \node[literal] (notx2) [right of=notx1] {$\n{x}_2$};
    \node[literal] (notx3) [right of=notx2] {$\n{x}_3$};
    
    \draw [-] (c1) -- (x1);
    \draw [-] (x1) -- (c2);
    \draw [-] (c2) -- (x2);
    \draw [-] (x2) -- (c3);
    \draw [-] (c3) -- (x3);
    \draw [-] (x3) -- (c4);
    \draw [-] (c4) -- (notx1);
    \draw [-] (notx1) -- (c5);
    \draw [-] (c5) -- (notx2);
    \draw [-] (notx2) -- (c6);
    \draw [-] (c6) -- (notx3);
    \draw [-] (notx3) -- (c1);

    \draw [-, dashed] (x1) to [out=270+\angleOffset,in=270-\angleOffset] (notx1);
    \draw [-, dashed] (x2) to [out=270+\angleOffset,in=270-\angleOffset] (notx2);
    \draw [-, dashed] (x3) to [out=270+\angleOffset,in=270-\angleOffset] (notx3);

    \node[clause] (bc1) [right of=c6] {$c_1$};
    \node[clause] (bc2) [right of=bc1] {$c_2$};
    \node[clause] (bc3) [right of=bc2] {$c_3$};
    \node[clause] (bc4) [right of=bc3] {$c_4$};
    \node[clause] (bc5) [right of=bc4] {$c_5$};
    \node[clause] (bc6) [right of=bc5] {$c_6$};

    \node[literal] (bx1) [below=1.5cm of bc1] {$x_1$};
    \node[literal] (bx2) [right of=bx1] {$x_2$};
    \node[literal] (bx3) [right of=bx2] {$x_3$};
    \node[literal] (bnotx1) [right of=bx3] {$\n{x}_1$};
    \node[literal] (bnotx2) [right of=bnotx1] {$\n{x}_2$};
    \node[literal] (bnotx3) [right of=bnotx2] {$\n{x}_3$};
    
    \draw [-] (bc1) -- (bx1);
    \draw [-] (bx1) -- (bc2);
    \draw [-] (bc2) -- (bx2);
    \draw [-] (bx2) -- (bc3);
    \draw [-] (bc3) -- (bx3);
    \draw [-] (bx3) -- (bc4);
    \draw [-] (bc4) -- (bnotx2);
    \draw [-] (bnotx1) -- (bc5);
    \draw [-] (bc5) -- (bnotx2);
    \draw [-] (bnotx1) -- (bc6);
    \draw [-] (bc6) -- (bnotx3);
    \draw [-] (bnotx3) -- (bc1);

    \draw [-, dashed] (bx1) to [out=270+\angleOffset,in=270-\angleOffset] (bnotx1);
    \draw [-, dashed] (bx2) to [out=270+\angleOffset,in=270-\angleOffset] (bnotx2);
    \draw [-, dashed] (bx3) to [out=270+\angleOffset,in=270-\angleOffset] (bnotx3);
\end{tikzpicture}

        \caption*{
            Left: $f=(x_1 \vee \neg x_3) \wedge (x_1 \vee x_2) \wedge (x_2 \vee x_3) \wedge (x_3 \vee \neg x_1) \wedge (\neg x_1 \vee \neg x_2) \wedge (\neg x_2 \vee \neg x_3)$ 
            \\Right: $f'=(x_1 \vee \neg x_3) \wedge (x_1 \vee x_2) \wedge (x_2 \vee x_3) \wedge (x_3 \vee \mathbf{\neg x_2}) \wedge (\neg x_2 \vee \neg x_1) \wedge (\mathbf{\neg x_1} \vee \neg x_3)$
        }
    \end{subfigure}%
    \caption{\LCGNs of $f$ and $f'$. The difference between the formulas is highlighted in bold. Removing the literal-literal edges represented by the dashed lines gives the literal-clause graphs.}
    \label{fig:isomrphicLCGs}
\end{figure}

\paragraph{Literal-clause graphs with negation connections} See Section~\ref{sec:satrep} for the definition of the literal-clause graph with negation connections (\LCGN). The following lemma illustrates why adding literal-literal edges is necessary to preserve all information about the formula when labels are removed.

\begin{restatable}{lemma}{lcgIsomorphic}
    \label{lem:LCGNvsLCG}
    There are \threesat formulas $f, f'$ such that the literal-clause graphs $G_f$ and $G_{f'}$ are isomorphic but $f$ is satisfiable and $f'$ is not.
\end{restatable}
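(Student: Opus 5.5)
The plan is to use the two formulas of Figure~\ref{fig:isomrphicLCGs} (or the same idea with 3-literal clauses) and check three things directly: $f$ is satisfiable, $f'$ is unsatisfiable, and their literal-clause graphs are isomorphic once node labels are forgotten. Every clause in these formulas has two literals, so they are valid \threesat instances under the convention of Section~\ref{sec:preliminaries} that clauses have at most 3 literals. No earlier result is needed.

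\textbf{Satisfiability of $f$.} The clauses of $f$ say $x_1 \vee x_2$, $\neg x_1 \vee \neg x_2$, $x_2 \vee x_3$, $\neg x_2 \vee \neg x_3$, together with $x_1 \vee \neg x_3$ and $x_3 \vee \neg x_1$. These encode $x_1 \otimes x_2$, $x_2 \otimes x_3$ and $x_1 = x_3$, which are consistent; for instance $x=(1,0,1)$ satisfies $f$.

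\textbf{Unsatisfiability of $f'$.} The clauses of $f'$ are $x_1\vee x_2$, $\neg x_1\vee\neg x_2$ (encoding $x_1 \otimes x_2$), $x_2 \vee x_3$, $x_3\vee \neg x_2$ (forcing $x_3$), and $x_1\vee\neg x_3$, $\neg x_1\vee\neg x_3$ (forcing $\neg x_3$). So $f'$ forces both $x_3$ and $\neg x_3$, and is unsatisfiable.

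\textbf{Isomorphism of the LCGs.} Without the negation edges, the literal-clause graph of $f$ is a single 12-cycle alternating between literals and clauses, as drawn: $c_1,x_1,c_2,x_2,c_3,x_3,c_4,\neg x_1,c_5,\neg x_2,c_6,\neg x_3,c_1$. I would verify that the graph of $f'$ is also a single 12-cycle. Every literal has degree 2 and every clause has degree 2, so the graph is 2-regular. It then suffices to check connectivity by tracing the edges in the figure: $c_1,x_1,c_2,x_2,c_3,x_3,c_4,\neg x_2,c_5,\neg x_1,c_6,\neg x_3,c_1$, which visits all 12 nodes. Any two 12-cycles are isomorphic, and the isomorphism can be taken to map literal nodes to literal nodes.

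The only step needing any care is confirming that the modified graph is one cycle rather than a union of smaller cycles, and explicitly tracing the cycle settles it. The statement is about plain LCGs; the point of the example is that their \LCGNs differ, since the negation edges are what break the isomorphism.
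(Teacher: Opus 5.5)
Your proof is correct and follows the paper's argument: you use the same pair of formulas from Figure~\ref{fig:isomrphicLCGs} and the same satisfying assignment $(1,0,1)$ for $f$, and you check unsatisfiability of $f'$ directly by deriving both $x_3$ and $\neg x_3$, where the paper instead case-splits on $x_1$. You also verify the isomorphism explicitly by tracing both literal-clause graphs as alternating 12-cycles, which the paper only asserts.
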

\begin{proof}
    Consider the formulas shown in Figure~\ref{fig:isomrphicLCGs}
    The two LCGs (dashed lines excluded) are isomorphic. $f$ has a solution $x_1\!=\!1, x_2\!=\!0, x_3\!=\!1$. However, $f'$ is not satisfiable: $x_1\!=\!1$ implies $x_2\!=\!0$ because of $c_5$, and $x_3\!=\!0$ because of $c_6$. Now $c_3$ is false. 
    Conversely, $x_1\!=\!0$ implies $x_2\!=\!1$ because of $c_2$, and $x_3\!=\!0$ because of $c_1$. This makes $c_4$ false. 
\end{proof}

\paragraph{Other representations}

In this work we use the literal-clause graph representation (with negation edges) because it is lossless and corresponds one-to-one with formulas up to isomorphism. Another popular representation is the \textit{variable-clause graph} (VCG), which connects variables to clauses, with edge colors indicating the sign of the variable. Similarly to the \LCGN, the VCG preserves all information about the formula. However, a limitation of the VCG representation is that a formula admits up to $2^n$ non-isomorphic graph encodings, where $n$ is the number of variables. For example, flipping the sign of all occurrences of a variable $x$ in $f=(x \vee y) \wedge (\neg x)$ produces an isomorphic formula $f'=(\neg x \vee \neg y) \wedge (x)$, but the VCGs are non-isomorphic. In contrast, the \LCGNs of $f$ and $f'$ are identical. From the perspective of learning-based methods, such invariance is desirable: it is analogous to the requirement of permutation invariance in graph learning. 

Other known graph representations include the \textit{literal-incidence graph} (LIG) and the \textit{clause-incidence graph} (CIG), where literals (clauses) are connected to other literals (clauses) if they co-occur in a clause (share a variable). These representations are less commonly used, as they are inherently lossy.

\section{k-WL indistinguishable instances (proof of theorem~\ref{thm:nWLSAT})}
\label{app:kWLindistinguishable}

In this section, we prove the following theorem:

\nWLSAT*

The construction is based on the seminal work of Cai, Fürer, Immerman (CFI) \citep{nWL}, giving a pair of non-isomorphic graphs $H$ and $\tilde{H}$ which are indistinguishable by $n$-WL. We construct a pair of formulas $f,\tilde{f}$ with \LCGNs isomorphic to $H$ and $\tilde{H}$, respectively. On a high level, our formula $f_G$ encodes the existence of an \textit{even orientation} for a graph $G$. This is an orientation of the edges such that each node has an even number of outedges. We show that such an orientation exists if and only if the number of edges is even. Then, we construct a \textit{twisted} formula $\tilde{f}_G$, encoding the existence of an even orientation when one of the edges is bidirectional. Exactly one of $f_G$ and $\tilde{f}_G$ are satisfiable, depending on the parity of $m$.

\subsection{The construction}
\label{sec:nWLconstruction}

The CFI construction \citep{nWL} takes in a low degree graph $G$ with only linear sized separators and produces non-isomorphic graphs $X(G)$ and $\tilde{X}(G)$ that are indistinguishable by WL. Next, we go over the steps to construct $X(G)$, or equivalently a formula $f_G$ with an \LCGN isomorphic to $X(G)$. For each vertex $v \in V(G)$, we define the following subformula $X_k$, where $k=d(v)$: 
\begin{align*}
    &\text{Literals: } A_k \cup B_k \text{, where } \\
    &A_k = \{a_i \mid 1 \le i \le k\},\\
    &B_k = \{b_i \mid 1 \le i \le k\}\\
    &\text{Clauses: } C_k \cup D_k \text{, where}  \\
    &C_k = \{ c_S\! =\! \big(\!\vee_{i \in S} a_i \big)\! \vee\! \big(\!\vee_{i \not\in S} b_i \big)\! \mid\! S\! \subseteq\! [k], |S| \text{ is even}\}\\
    &D_k= \{ a_i \vee b_i \mid i \in \{1, \dots, k\} \}
\end{align*}
Here $[k]$ denotes the set $\{1, \dots, k\}$. See Figure~\ref{fig:X} for a diagram of the \LCGN of $X_k$ with $k=3$. This graph corresponds exactly to the graph $X_k$ in the CFI construction\footnote{The nodes corresponding to the $D_k$ clauses are not present in the standard construction in \citep{nWL}, but they mention that nodes connecting each $a_i$ to $b_i$ can be added.}. Note that the negations of the literals are not contained in $X_k$ -- they will be defined later.  

\begin{figure}[h]
    \centering
    \begin{tikzpicture}[scale=0.5, node distance=1cm, font=\small,clause/.style={inner sep=0pt, minimum size=6mm}, literal/.style={circle,draw, inner sep=1pt, minimum size=6mm}]
    \node[clause] (cx1) at (6,6) {\scalebox{0.85}{\small $a_1\! \vee\! b_1$}};
    \node[clause] (cx2) at (2,0) {\scalebox{0.85}{\small $a_2\! \vee\! b_2$}};
    \node[clause] (cx3) at (10,0) {\scalebox{0.85}{\small $a_3\!\vee\! b_3$}};
    \node[clause] (cempty) at (12,3) {$c_\emptyset$};
    \node[clause] (c12) at (0,3) {$c_{12}$};
    \node[clause] (c23) at (8,3) {$c_{23}$};
    \node[clause] (c13) at (4,3) {$c_{13}$};

    \node[literal] (a1) at (4,6) {$a_1$};
    \node[literal] (a2) at (0,0) {$a_2$};
    \node[literal] (a3) at (8,0) {$a_3$};
    \node[literal] (b1) at (8,6) {$b_1$};
    \node[literal] (b2) at (4,0) {$b_2$};
    \node[literal] (b3) at (12,0) {$b_3$};

    \draw [-] (cx1) -- (a1);
    \draw [-] (cx1) -- (b1);
    \draw [-] (cx2) -- (a2);
    \draw [-] (cx2) -- (b2);
    \draw [-] (cx3) -- (a3);
    \draw [-] (cx3) -- (b3);

    \draw [-] (cempty) -- (b1);
    \draw [-] (cempty) -- (b2);
    \draw [-] (cempty) -- (b3);

    \draw [-] (c12) -- (a1);
    \draw [-] (c12) -- (a2);
    \draw [-] (c12) -- (b3);

    \draw [-] (c13) -- (a1);
    \draw [-] (c13) -- (a3);
    \draw [-] (c13) -- (b2);

    \draw [-] (c23) -- (a3);
    \draw [-] (c23) -- (a2);
    \draw [-] (c23) -- (b1);

    \begin{scope}[on background layer]
        \node[fill=sadblue!30,inner sep=0pt,thick,ellipse,fit=(a1) (b1)] {};
        \node[fill=sadblue!30,inner sep=0pt,thick,ellipse,fit=(a2) (b2)] {};
        \node[fill=sadblue!30,inner sep=0pt,thick,ellipse,fit=(a3) (b3)] {};
        \node[fill=weirdorange!20,inner sep=0pt, thick, rectangle, rounded corners=.2cm, fit=(c12) (c23) (c13) (cempty)] {};
    \end{scope}
    
\end{tikzpicture}
    \caption{An \LCGN of the formula $X_3$, corresponding to the graph $X_3$ in the CFI construction. Literal nodes are circled. The clauses are connected to the literals by solid lines.}
    \label{fig:X}
\end{figure}
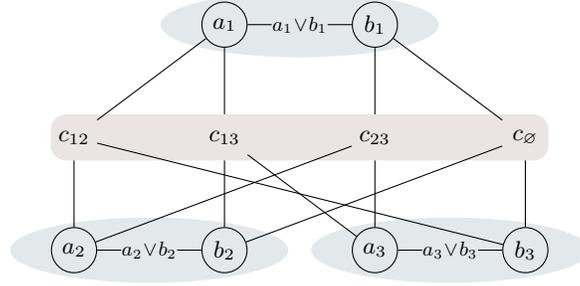

For a given graph $G$, the full formula $f_G$ is constructed as follows. For each vertex $v \in V(G)$, add the subformula $X_{d(v)}$. Each edge $\{v, w\}$ of $v$ is associated with one of the literal pairs $(a_i, b_i)$, where we call the literals $\lla{v}{w}, \llb{v}{w}$. The node $w$ on the other side of this edge uses the negations of the literals, that is, $\lla{w}{v} = \neg \lla{v}{w}$ and $\llb{w}{v} = \neg \llb{v}{w}$. In the \LCGN, the literal $\lla{v}{w}$ is connected to its negation $\lla{w}{v}$, and $\llb{v}{w}$ to $\llb{w}{v}$. See Figure~\ref{fig:X_edge} for an example. 

\begin{figure}[t!]
    \centering
    \begin{subfigure}[t]{0.5\textwidth}
        \centering
        \begin{tikzpicture}[scale=0.5, node distance=1cm, font=\small,clause/.style={diamond,draw,inner sep=0pt, minimum size=6mm}, literal/.style={circle,draw,inner sep=1pt, minimum size=6mm}]
    \node (cx1) at (3,3.5) {\scalebox{0.85}{\small $\lla{v}{w}\! \vee\! \llb{v}{w}$}};
    \node (cx2) at (3,0) {\scalebox{0.85}{\small $\lla{w}{v}\! \vee\! \llb{w}{v}$}};

    \node[literal] (a1) at (0,3.5) {$\lla{v}{w}$};
    \node[literal] (a2) at (0,0) {$\lla{w}{v}$};
    \node[literal] (b1) at (6,3.5) {$\llb{v}{w}$};
    \node[literal] (b2) at (6,0) {$\llb{w}{v}$};
    
    \draw [-, dashed] (a1) to (a2);
    \draw [-, dashed] (b1) to (b2);

    \draw [-] (cx1) -- (a1);
    \draw [-] (cx1) -- (b1);
    \draw [-] (cx2) -- (a2);
    \draw [-] (cx2) -- (b2);

    \begin{scope}[on background layer]
        \node[fill=sadblue!30,inner sep=0pt,thick,ellipse,fit=(a1) (b1)] {};
        \node[fill=sadblue!30,inner sep=0pt,thick,ellipse,fit=(a2) (b2)] {};
    \end{scope}
\end{tikzpicture}
        \caption{A normal edge.}
        \label{fig:X_edge}
    \end{subfigure}%
    \begin{subfigure}[t]{0.5\textwidth}
        \centering
        \begin{tikzpicture}[scale=0.5, node distance=1cm, font=\small,clause/.style={diamond,draw,inner sep=0pt, minimum size=6mm}, literal/.style={circle,draw,inner sep=1pt, minimum size=6mm}]
    \node (cx1) at (3,3.5) {\scalebox{0.85}{ $\lla{v}{w}\! \vee\! \llb{v}{w}$}};
    \node (cx2) at (3,0) {\scalebox{0.85}{\small $\lla{w}{v}\! \vee\! \llb{w}{v}$}};

    \node[literal] (a1) at (0,3.5) {$\lla{v}{w}$};
    \node[literal] (a2) at (0,0) {$\lla{w}{v}$};
    \node[literal] (b1) at (6,3.5) {$\llb{v}{w}$};
    \node[literal] (b2) at (6,0) {$\llb{w}{v}$};
    
    \draw [-, dashed] (a1) to (b2);
    \draw [-, dashed] (b1) to (a2);

    \draw [-] (cx1) -- (a1);
    \draw [-] (cx1) -- (b1);
    \draw [-] (cx2) -- (a2);
    \draw [-] (cx2) -- (b2);

    \begin{scope}[on background layer]
        \node[fill=sadblue!30,inner sep=0pt,thick,ellipse,fit=(a1) (b1)] {};
        \node[fill=sadblue!30,inner sep=0pt,thick,ellipse,fit=(a2) (b2)] {};
    \end{scope}
\end{tikzpicture}
        \caption{A twisted edge.}
        \label{fig:X_edge_twisted}
    \end{subfigure}%
    \caption{Constructions in the formula $f_G$ and $\tilde{f}_G$ for an edge $\{v,w\} \in E(G)$. The edges are represented vertically, with the top ellipse corresponding to $v$'s side and the bottom to $w$'s side of the edge.  Solid lines connect clauses to their literals. Dashed lines connect literals and their negations.}
\end{figure}
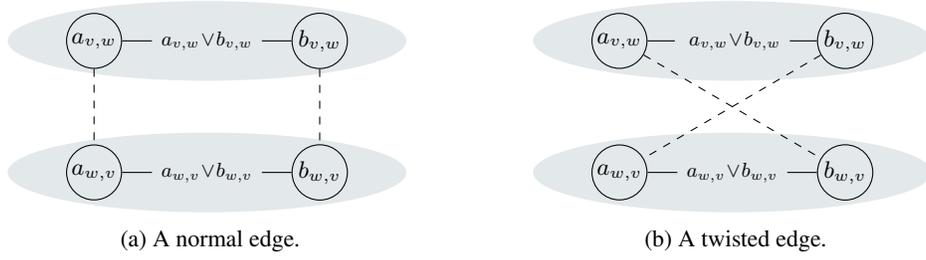

Now, we define a \textit{twisted} formula $\tilde{f}_G$, and the corresponding twisted graph $\tilde{X}(G)$ as follows. An edge $\{v,w\} \in E(G)$ is chosen arbitrarily. The literal-literal connections are twisted, so that $\lla{v}{w}$ becomes the negation of $\llb{w}{v}$ and $\llb{v}{w}$ becomes the negation of $\lla{w}{v}$. A twisted edge is shown in Figure~\ref{fig:X_edge_twisted}.

\begin{observation}
    The \LCGNs of $f_G$ and $\tilde{f}_G$ are isomorphic to the graphs $X(G)$ and $\tilde{X}(G)$ in \citep{nWL}, respectively. 
\end{observation}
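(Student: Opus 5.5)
The observation asks us to show that the \LCGN of $f_G$ is isomorphic to the CFI graph $X(G)$, and the \LCGN of $\tilde{f}_G$ to $\tilde{X}(G)$. We will do this by writing down an explicit bijection between the node sets and checking that it preserves edges and edge colors.

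\textbf{Gadget level.} Recall the CFI gadget $X_k$ from \citep{nWL}. It has outer vertices $a_i, b_i$ for $i \in [k]$ and middle vertices $m_S$, one for each $S \subseteq [k]$ with $|S|$ even. Each $m_S$ is adjacent to $a_i$ for $i \in S$ and to $b_i$ for $i \notin S$. The \LCGN restricted to the subformula $X_k$ has the following parts:
\begin{itemize}
    \item literal nodes $a_i, b_i$;
    \item clause nodes $c_S \in C_k$;
    \item clause nodes $a_i \vee b_i \in D_k$.
\end{itemize}
By definition, the clause $c_S$ contains exactly $a_i$ for $i \in S$ and $b_i$ for $i \notin S$. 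So the map $c_S \mapsto m_S$, together with the identity on $a_i, b_i$, is an isomorphism of the literal-clause part onto the CFI gadget. The extra clauses in $D_k$ become degree-2 nodes joining $a_i$ to $b_i$. As noted in the footnote, \citep{nWL} explicitly allow such connectors, so we take $X(G)$ in the variant that includes them.

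\textbf{Global level.} In $X(G)$, for each edge $\{v,w\}$ of $G$, the CFI construction identifies the port pair of $v$'s gadget with that of $w$'s gadget by joining $a_{v,w}$–$a_{w,v}$ and $b_{v,w}$–$b_{w,v}$. In $f_G$ we set $\lla{w}{v} = \neg\lla{v}{w}$ and $\llb{w}{v} = \neg\llb{v}{w}$. The negation edges of the \LCGN are therefore exactly these connecting edges. They carry the distinct literal-literal color, which matches the CFI connecting edges; if needed, those can be marked by a separate edge color in $X(G)$.

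Three further checks complete the isomorphism:
\begin{itemize}
    \item \emph{Every literal has its negation in the graph.} Each literal of $f_G$ lies on some edge of $G$, so its negation is present as the matching port of the other endpoint's gadget. Hence the \LCGN has no further literal-literal edges.
    \item \emph{No clause appears twice.} Distinct gadgets use distinct literals, so the \LCGN contains no merged or duplicated clause nodes.
    \item \emph{Node colors match.} The partition into literal and clause nodes corresponds to the partition into outer vertices and middle/connector vertices of $X(G)$.
\end{itemize}
Putting the gadget maps together over all $v \in V(G)$ then gives the isomorphism $\LCGN(f_G) \cong X(G)$.

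\textbf{Twisted case.} For $\tilde{f}_G$, the only change is at the chosen edge $\{v,w\}$. There the negation edges become $\lla{v}{w}$–$\llb{w}{v}$ and $\llb{v}{w}$–$\lla{w}{v}$. This is exactly the twist CFI use to define $\tilde{X}(G)$, so the same bijection applies to all other parts of the graph.

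The main obstacle is purely one of conventions. We must confirm two things:
\begin{itemize}
    \item that the version of $X_k$ we cite uses the even-subset convention, so that $a_i$ is joined exactly when $i \in S$;
    \item that CFI's edge connections are made between corresponding $a$ and $b$ ports, as here, rather than by identifying vertices.
\end{itemize}
If CFI identify vertices instead, the two graphs differ by subdividing each connecting edge. This transformation is applied uniformly to both $X(G)$ and $\tilde{X}(G)$, so it does not affect their $k$-WL indistinguishability.
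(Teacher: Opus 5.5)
Your proposal is correct and follows the paper, which treats the observation as immediate from the construction: $c_S \mapsto m_S$ on each gadget, the $D_k$ clauses as the optional $a_i$--$b_i$ connector nodes mentioned in the footnote, and the negation edges (crossed on the one twisted edge) as the CFI connecting edges. One small slip is in your closing contingency remark. If CFI had identified $a_{v,w}$ with $a_{w,v}$, the \LCGN would arise by splitting each such vertex into a negation edge, not by subdividing edges. Also, the claim that a uniform transformation leaves $k$-WL indistinguishability untouched would need its own justification, since in general it only transfers with some loss in the WL dimension. The point is moot anyway, because CFI do join the ports by edges.
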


To state the result of \citet{nWL}, we need the concept of a separator: 
\begin{definition}
    A \textit{separator} of a graph $G$ is a set $S \subset V(G)$ such that the induced subgraph on $V \setminus S$ has no connected component with more than $|V(G)|/2$ vertices.
\end{definition}

\begin{theorem}[Theorem 6.4 in \citep{nWL}]
    \label{thm:nWL}
    Let $G$ be a graph such that every separator of $G$ has at least $k+1$ vertices. Then $X(G)$ and $\tilde{X}(G)$ are non-isomorphic but k-WL indistinguishable.
\end{theorem}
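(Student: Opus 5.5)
The approach follows \citet{nWL}. It has two parts: a parity invariant to show non-isomorphism, and a Duplicator strategy in the bijective pebble game to show $k$-WL indistinguishability. For both parts I would generalize the construction to $X_T(G)$, where $T \subseteq E(G)$ is an arbitrary set of twisted edges, so that $X(G) = X_\emptyset(G)$ and $\tilde X(G) = X_{\{e\}}(G)$.

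\textbf{Step 1 (moving twists).} For a vertex $v$ and two incident edges $e_1, e_2$, swapping $a_{v,e_1}\leftrightarrow b_{v,e_1}$ and $a_{v,e_2}\leftrightarrow b_{v,e_2}$ maps $C_{d(v)}$ to itself, because it flips the parity of $|S|$ twice. It clearly also fixes $D_{d(v)}$. Composing this swap with the corresponding relabeling of the negation edges gives an isomorphism $X_T(G)\cong X_{T\triangle\{e_1,e_2\}}(G)$. This isomorphism is the identity outside the gadget of $v$ and the literals on $e_1, e_2$. Iterating along a path, a twist can be moved from $e$ to any edge $e'$ by an isomorphism that is the identity on all gadgets not visited by the path. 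Consequently, the isomorphism type of $X_T(G)$ depends only on $|T| \bmod 2$, provided $G$ is connected; connectivity follows from the separator hypothesis.

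\textbf{Step 2 (non-isomorphism).} The cleanest route in this paper's language uses the even-orientation interpretation. Set $a_{v,w}$ true when the edge is oriented out of $v$, and use the $D$ and $C$ clauses to force $a_{v,w}\neq b_{v,w}$ and the number of out-edges at $v$ to be even. Then $f_G$ is satisfiable iff $|E(G)|$ is even, and the twist flips this condition. So $f_G$ and $\tilde f_G$ differ in satisfiability. Because satisfiability is invariant under formula isomorphism (Observation~\ref{lem:lcgn}), the \LCGNs are non-isomorphic. A purely graph-theoretic alternative shows that every isomorphism must map gadgets to gadgets and preserve $|T| \bmod 2$. I would avoid that route because it requires ruling out exotic isomorphisms.

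\textbf{Step 3 (indistinguishability).} I would invoke the equivalence between $k$-WL (in the paper's indexing, see the Remark) and the bijective pebble game with the matching number of pebbles. The Duplicator invariant is that the current position is respected by an isomorphism $\varphi : X(G)\to X_{\{e\}}(G)$ whose twist edge $e$ lies inside the unique component $K$ of $G - P$ with more than $|V(G)|/2$ vertices. Here $P$ is the set of vertices of $G$ whose gadgets (or incident edge-literals) carry pebbles. Because every separator has at least $k+1$ vertices, the set $P$, which has at most $k$ vertices, is not a separator, so $K$ exists. $K$ is unique because two such components would have to intersect.

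When Spoiler lifts a pebble and is about to place one, Duplicator answers with the bijection $\varphi$, which is a valid bijection. After Spoiler places the pebble, the new pebbled set $P'$ has at most $k$ elements, and $P \cup P'$ differs from $P$ by one vertex. The new big component $K'$ of $G - P'$ and the old $K$ both have more than $|V|/2$ vertices, so they intersect. Using Step 1, Duplicator moves the twist along a path inside $K \cup K'$ to an edge in $K'$, avoiding all currently pebbled gadgets. The updated isomorphism therefore still agrees with all pebble pairs, and the invariant is restored.

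\textbf{Main obstacle.} The delicate step is the pebble-count bookkeeping. During a move, Duplicator must commit to $\varphi$ while up to $k$ pebbles are still on the board, and the new pebble is then placed, temporarily giving one more. I would first try the bound as stated, making the path from $K$ to $K'$ avoid the pebbles that remain after one is lifted. I would then check, against the paper's $k$ versus $(k-1)$ convention, that no more than $k$ gadgets ever need to be avoided simultaneously. If this fails, I would fall back to a $(k+1)$-vs-$k$ shift that matches Theorem 6.4's original indexing.
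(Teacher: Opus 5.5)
\textbf{The gap is in Step 3, and it is a problem of quantifier order, not of pebble counting.} In the bijective game, Duplicator fixes the bijection before Spoiler picks $x$. You fix it as the current map $\varphi$ and only afterwards push the twist into $K'$ ``along a path inside $K\cup K'$, avoiding all currently pebbled gadgets''. Nothing guarantees such a path exists: the gadget $u$ that receives the new pebble may lie in $K$ and cut $e$ off from $K'$.

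For instance, let $e=\{u,w\}$, and suppose every other neighbour of $w$ lies in $P_0$, the set of gadgets still pebbled after the lift. Then $w$ is isolated in $G-(P_0\cup\{u\})$. Moving the twist through a gadget applies $c_S\mapsto c_{S\triangle\{i,j\}}$ there, so it moves every clause node of that gadget. If Spoiler's pebbles sit on clause nodes, the committed pair $(x,\varphi(x))$ on gadget $u$ and the pebbles on $P_0$ block every route from $w$ into $K'$. A parity count on the edges at $w$ then shows that every consistent map keeps a twist incident to $w$, so your invariant cannot be restored.

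Your invariant is also misstated. There is no isomorphism $X(G)\to X_{\{e\}}(G)$; what you need is a gadget-preserving bijection $V(X(G))\to V(\tilde X(G))$ that preserves all adjacencies except the negation edges across $e$.

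\textbf{The missing idea is the piecewise answer of CFI.} After the lift, $|P_0|\le k-1$, and the twist lies in the big component $K_0\supseteq K$ of $G-P_0$.
For each $u\in V(G)$, the set $P_0\cup\{u\}$ has at most $k$ vertices, so $G-(P_0\cup\{u\})$ has a big component $K_u\subseteq K_0$.
Let $\varphi_u$ be $\varphi$ with the twist moved inside $K_0$ to an edge with both endpoints in $K_u$. This changes $\varphi$ only on unpebbled gadgets, possibly gadget $u$ itself, which is harmless because no pebble is there yet.
Answer with $f=\bigcup_u\varphi_u|_{\mathrm{gadget}(u)}$. This is a bijection because each $\varphi_u$ maps gadget $u$ onto gadget $u$, and whatever $x$ in gadget $u$ Spoiler picks, $\varphi_u$ witnesses the invariant.
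For the counting, assign every node to exactly one gadget ($a_{v,w}$, $b_{v,w}$ and $a_{v,w}\vee b_{v,w}$ to $v$). Then $|P|$ never exceeds the number of pebbles, and the stated bound works without an index shift.

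Two smaller fixes. First, connectivity does not follow from the separator hypothesis. A large expander plus a disjoint edge satisfies it, and a twist placed on that edge is caught by the WL test, so you must assume $G$ connected. Second, ``$f_G$ satisfiable iff $m$ even'' needs all degrees odd (Lemma~\ref{lem:nWLSAT}). In general, gadget $v$ forces out-degree $\equiv d(v)+1\pmod 2$, which gives the condition $m\equiv|V(G)|\pmod 2$. A twist still flips this condition, so your satisfiability route to non-isomorphism via Observation~\ref{lem:lcgn} survives for connected $G$.
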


\subsection{Satisfiability of $f_G$ and $\tilde{f}_G$}
\label{sec:nWLsat}

By construction, literals have their negations on the other side of each edge: 
\begin{remark}
    \label{obs:nwl1}
    For a normal (not twisted) edge $\{v,w\} \in E(G)$, $\lla{v}{w}=\neg \lla{w}{v}$ and $\llb{v}{w} = \neg \llb{w}{v}$. If the edge is twisted, $\lla{v}{w} = \neg \llb{w}{v}$ and $\llb{v}{w} = \neg \lla{w}{v}$. 
\end{remark}

The following is true for $f_G$ and $\tilde{f}_G$, for any edge $\{v,w\}$ (twisted or not):
\begin{observation}
    \label{obs:nwl2}
    The literals $\lla{v}{w}, \llb{v}{w}$ are non-equal in any satisfying assignment.
\end{observation}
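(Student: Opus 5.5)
The statement to prove is Observation~\ref{obs:nwl2}: in any satisfying assignment of $f_G$ or $\tilde{f}_G$, the literals $\lla{v}{w}$ and $\llb{v}{w}$ take different values. The natural plan is to show that each of the two equalities is ruled out by a different clause. The $D$-clauses forbid both literals being false. The $C$-clauses at the endpoint $w$, read through the negation relations of Remark~\ref{obs:nwl1}, forbid both being true. Since the two literals cannot be both false or both true, they must differ.

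\textbf{Step 1 (not both false).} The clause $\lla{v}{w} \vee \llb{v}{w}$ belongs to $D_{d(v)}$ in the gadget of $v$. Hence every satisfying assignment sets at least one of $\lla{v}{w}, \llb{v}{w}$ to true.

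\textbf{Step 2 (not both true).} By the symmetric $D$-clause in $w$'s gadget, $\lla{w}{v} \vee \llb{w}{v}$ holds. By Remark~\ref{obs:nwl1}, the pair $\{\lla{w}{v}, \llb{w}{v}\}$ is exactly $\{\neg\lla{v}{w}, \neg\llb{v}{w}\}$, whether or not the edge is twisted; twisting only swaps which literal is negated by which. So the clause $\lla{w}{v} \vee \llb{w}{v}$ equals $\neg\lla{v}{w} \vee \neg\llb{v}{w}$. This forbids both literals being true.

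Combining Steps 1 and 2 gives the statement. The only point needing care is the twisted edge. There, $\lla{v}{w}$ is negated by $\llb{w}{v}$ rather than by $\lla{w}{v}$. However, the $D$-clause on $w$'s side is symmetric in its two literals, so the conclusion is unchanged.

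Notably, the $C_k$ clauses are not needed at all: the two $D$-clauses across a single edge already encode an exclusive-or. I expect no real obstacle; the only thing to check is that both endpoints of every edge contribute their $D$-clauses. This holds because every vertex $v$ receives the full gadget $X_{d(v)}$, including all of $D_{d(v)}$.
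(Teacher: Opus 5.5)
Your proof is correct and matches the paper's argument: the $D$-clause $a_{v,w} \vee b_{v,w}$ at $v$ rules out both literals being false, and the $D$-clause $a_{w,v} \vee b_{w,v}$ at $w$ rules out both being true, since under either the normal or the twisted negation relations it equals $\neg a_{v,w} \vee \neg b_{v,w}$. One small slip: your overview paragraph attributes the second step to the $C$-clauses at $w$, but Step~2 correctly uses the $D$-clause, and the $C$-clauses play no role.
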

\begin{proof}
    This is forced by $(\lla{v}{w} \vee \llb{v}{w}) \wedge (\lla{w}{v} \vee \llb{w}{v}) \equiv (\lla{v}{w} \vee \llb{v}{w}) \wedge (\neg\lla{v}{w} \vee \neg\llb{v}{w})$. 
\end{proof}

This allows us to talk about satisfying assignments of $f_G$ and $\tilde{f}_G$ as orientations of edges of $G$: 

\begin{remark}
    \label{obs:orientations}
    Any satisfying assignment is uniquely characterized by the values of $A$. On a graph without twists, assignments to $A$ correspond one-to-one with orientations of $G$. On a twisted edge, either both $\lla{v}{w}$ and $\lla{w}{v}$ are true, or both are false. 
\end{remark}
\noindent Consequently, the number of $A$-literals set to true in $f_G$ is $m$, while in $\tilde{f}_G$ it is $m-1$ or $m+1$.

The next lemma characterizes the satisfiability of $X_k$:

\begin{lemma}
    \label{lem:X_sat}
    Let $k$ be an odd integer and assume that $a_i \neq b_i$ for $1 \le i \le k$. $X_k$ is satisfied if and only if an even number of $a_i$'s (or equivalently $b_i$'s) are set to true.
\end{lemma}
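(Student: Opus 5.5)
Under the hypothesis $a_i \neq b_i$ for all $i$, the clauses in $D_k$ hold automatically, so only the clauses $c_S \in C_k$ matter. Let $T = \{i : a_i \text{ true}\}$; then $b_i$ is true exactly for $i \notin T$. The clause $c_S = (\vee_{i\in S} a_i) \vee (\vee_{i\notin S} b_i)$ is falsified exactly when every $a_i$ with $i \in S$ is false and every $b_i$ with $i \notin S$ is false. These conditions mean $S \cap T = \emptyset$ and $[k]\setminus S \subseteq T$, which together say $T = [k]\setminus S$. So each clause $c_S$ rules out exactly one assignment, namely $T = [k]\setminus S$.

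It follows that the assignment $T$ satisfies $X_k$ iff $[k]\setminus T$ is not among the index sets $S$ appearing in $C_k$. Those are exactly the sets of even size, so $X_k$ is satisfied iff $|[k]\setminus T| = k - |T|$ is odd. Since $k$ is odd, this is equivalent to $|T|$ being even, which is the claim for the $a_i$'s. For the $b_i$'s, the number of true $b_i$ is $k-|T|$, which is odd iff $|T|$ is even. So the ``equivalently'' in the statement should be read as: an even number of $a_i$'s are true, or equivalently an odd number of $b_i$'s. I would note this, or adopt whichever convention matches how the lemma is later applied to orientations, where out-edges correspond to true $a$-literals.

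The argument is a direct check with no real obstacle. The one point needing care is the bookkeeping for which single assignment each $c_S$ forbids, namely the complement of $S$ rather than $S$ itself. Getting that wrong flips the parity conclusion, and the oddness of $k$ is used exactly to convert the parity of $k-|T|$ into the parity of $|T|$.
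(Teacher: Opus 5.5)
Your proof is correct and is essentially the paper's argument. The paper splits it into two directions: for $|T|$ even, $S$ and $T$ cannot be complementary, so some index lies in both or in neither; for $|T|$ odd, the clause $c_{[k]\setminus T}$ is violated. Both directions are your single observation that $c_S$ forbids exactly the assignment $T=[k]\setminus S$.

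Your remark about the $b_i$'s is also right. Since $k$ is odd and $a_i\neq b_i$, exactly $k-|T|$ of the $b_i$'s are true, so the parenthetical should say ``an odd number of $b_i$'s are set to true''. Only the $a$-literal version is used later, in the observation on $A$-literals and the orientation argument.
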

\begin{proof}
    Let $T \subset \{1, \dots, k\}$ be the set of indices $i$ such that $a_i$ is set to true.
    We start by proving that $X_k$ is satisfied whenever $|T|$ is even. The clauses in $D_k$ are satisfied whenever $a_i \neq b_i$, which is guaranteed by our assumption. Consider any clause $c_S = \big(\vee_{i \in S} a_i \big) \vee \big(\vee_{i \not\in S} b_i\big)$. Since $k$ is odd, $|S|$ is even, and $|T|$ is even, $S \cup T$ is not a partition of $\{1, \dots, k\}$. Hence, there is some index $i$ such that $i$ is in both sets or neither of the sets. If $i \in S \cap T$, then $a_i$ satisfies $c_S$. Else $i \not\in S \cup T$, and $b_i$ satisfies $c_S$. 

    Conversely, suppose that $|T|$ is odd. Consider the clause $c_S$ with $S = \{1, \dots, k\} \setminus T$. It is a clause because $|S|$ is even. It is unsatisfied, since $a_i=F$ for each $i \in S$ and $b_i=F$ for each $i \not\in S$.
\end{proof}

The following observation is a direct consequence of Lemma~\ref{lem:X_sat} and Observation~\ref{obs:nwl2}:
\begin{observation}
    \label{obs:nwl3}
    Assume the degree of every node in $G$ is odd. In any solution of $f_G$ or $\tilde{f}_G$, the number of $A$-literals set to true is even. 
\end{observation}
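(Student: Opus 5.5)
The statement to prove is Observation~\ref{obs:nwl3}: in any solution of $f_G$ or $\tilde{f}_G$ with all degrees of $G$ odd, the number of true $A$-literals is even. The plan is to count the true $A$-literals vertex by vertex. Each $A$-literal of the formula belongs to exactly one vertex gadget $X_{d(v)}$, namely $\lla{v}{w}$ for some edge $\{v,w\}$. So the set of $A$-literals is the disjoint union over $v \in V(G)$ of $A^{(v)} = \{\lla{v}{w} : w \in N(v)\}$, and
\[
\#\{\text{true } A\text{-literals}\} = \sum_{v \in V(G)} \#\{w \in N(v) : \lla{v}{w} \text{ true}\}.
\]
It therefore suffices to show that each summand is even.

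Fix a satisfying assignment of $f_G$ or $\tilde{f}_G$ and a vertex $v$, and write $k = d(v)$, which is odd by assumption.

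First, apply Observation~\ref{obs:nwl2} to each edge $\{v,w\}$ at $v$. It holds for twisted and normal edges alike and gives $\lla{v}{w} \neq \llb{v}{w}$. This is exactly the hypothesis $a_i \neq b_i$ for $1 \le i \le k$ needed in Lemma~\ref{lem:X_sat}.

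Second, the assignment satisfies every clause of $f_G$ (resp.\ $\tilde{f}_G$), so in particular all clauses of the copy of $X_k$ at $v$. Lemma~\ref{lem:X_sat} then says that an even number of the $a_i = \lla{v}{w}$ are true.

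Summing these even counts over all vertices gives an even total. The only point needing care is that the twist affects only the negation pairing between gadgets, not the clauses inside each $X_{d(v)}$. Hence Lemma~\ref{lem:X_sat} applies verbatim at every vertex in $\tilde{f}_G$ as well. I expect no real obstacle here, since the argument is a direct combination of the two cited results.
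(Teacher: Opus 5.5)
Your proof is correct and is the argument the paper intends: the paper states this observation as a direct consequence of Lemma~\ref{lem:X_sat} and Observation~\ref{obs:nwl2}, i.e.\ each gadget $X_{d(v)}$ has an even number of true $A$-literals, and one sums over vertices. Your remark that the twist changes only the negation pairing between gadgets, not the clauses inside them, correctly justifies applying the lemma to $\tilde{f}_G$ as well.
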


We say that a simple graph $G$ has an \textit{even orientation} if there is an orientation of the edges such that all nodes have even outdegree. The following is a simple fact characterizing the existence of even orientations:
\begin{lemma}
    \label{lem:even_orientation}
    Let $H$ be a simple connected graph. $H$ has an even orientation if and only if $m=|E(H)|$ is even.
\end{lemma}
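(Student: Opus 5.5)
Necessity is a parity count, and sufficiency is a construction via a spanning tree.

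\emph{Necessity.} Every edge contributes exactly one out-edge to exactly one endpoint. Hence
\[
\sum_{v \in V(H)} d^{\text{out}}(v) = m .
\]
If every outdegree is even, the sum is even, so $m$ is even. Connectivity is not used in this direction.

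\emph{Sufficiency.} Assume $m$ is even. Orient all edges outside a fixed spanning tree $T$ of $H$ arbitrarily (such a $T$ exists because $H$ is connected). Then fix the orientation of the tree edges bottom-up to correct parities:
\begin{itemize}
\item Root $T$ at a vertex $r$.
\item Process the non-root vertices in order of decreasing depth, so that every vertex is handled after all its descendants.
\item When vertex $v \neq r$ is processed, all edges at $v$ except the edge to its parent $p(v)$ are already oriented. These are the non-tree edges and the tree edges to its children, which were fixed when the children were processed.
\item Orient $\{v, p(v)\}$ away from $v$ if $v$'s current outdegree is odd, and toward $v$ otherwise. This makes $d^{\text{out}}(v)$ even.
\item Since no later step touches an edge at $v$, its outdegree stays even.
\end{itemize}

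After all non-root vertices are processed, every edge is oriented and every vertex other than $r$ has even outdegree. The sum of all outdegrees equals $m$, which is even. Therefore $d^{\text{out}}(r)$ is also even, and the orientation is even.

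\emph{Main obstacle.} The one point needing care is that each vertex's last decision must be made on a single remaining edge and no later step may alter it. The leaf-to-root order on $T$ guarantees exactly this. The argument shows the parity of the root is forced, which is why the condition $m$ even is needed only to finish at $r$. Simplicity of $H$ plays no role beyond having a well-defined edge set; the same argument works for multigraphs.
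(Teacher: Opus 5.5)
Your proof is correct, but it obtains sufficiency by a different route than the paper.

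\textbf{The paper's argument.} It starts from an arbitrary orientation of all edges and repairs it. While two vertices have odd outdegree, it reverses every edge on a path joining them. This flips the parity at the two endpoints and preserves it at interior vertices, each of which loses or gains $0$ or $2$ out-edges. The evenness of $m$ enters only at the end, to rule out a single leftover odd vertex.

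\textbf{Your argument.} You orient the non-tree edges arbitrarily and then determine every tree edge in one leaf-to-root pass. The evenness of $m$ is used only to certify the root.

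\textbf{What each buys.} The paper's local-repair argument is shorter. It needs connectivity only for the existence of a path between two odd vertices. It terminates because the number of odd vertices drops by two at each step. Your construction is a single deterministic pass with no pairing or termination argument. Because every choice in it is forced, it also gives more: once the non-tree edges are oriented, the orientation of the tree edges is uniquely determined. Hence a connected graph with $m$ even has exactly $2^{m-|V(H)|+1}$ even orientations. In particular, if the paper's paths are taken inside $T$ and start from the same orientation of the non-tree edges, both procedures end at the same orientation.
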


\begin{proof}
    Let $d^{\text{out}}(v)$ denote the outdegree of a node $v$. It always holds that $\sum_{v \in V} d^{\text{out}}(v) = m$.
    
    Assume there exists an even orientation of $H$. Then $d^{\text{out}}(v)$ is even for all $v$. Since the sum of even terms is always even, $m$ must be even. 

    Now assume that $m$ is even. Start with an arbitrary orientation of the edges. For any two vertices $v,w$ with odd outdegree, take an arbitrary path connecting $v$ and $w$ and reorient the edges on the path. This changes the outdegrees of $v$ and $w$ from odd to even, while not changing the parity of other nodes. Repeat this process until all nodes have even outdegree. Suppose that this is not possible, that is, we are left with a single node $v$ with odd outdegree. We have $d^{\text{out}}(v) = m - \sum_{w \in V \setminus \{v\}} d^{\text{out}}(w)$. The left side is odd, while the right side is even because $m$ and the terms of the sum are even. Hence, we can always find an even orientation.
\end{proof}

Using this, we can prove the main lemma characterizing the satisfiability of $f_G$ and $\tilde{f}_G$:

\begin{lemma}
    \label{lem:nWLSAT}
    Let $G$ be a connected graph where all degrees are odd.\footnote{The proof can be generalized to a mix of odd and even degrees, but we chose to do this for simplicity of the argument.} If $m = |E(G)|$ is even (odd), $f_G$ is satisfiable (unsatisfiable), and $\tilde{f}_G$ is unsatisfiable (satisfiable).
\end{lemma}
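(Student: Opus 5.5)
The plan is to work through the translation between satisfying assignments and orientations (Remark~\ref{obs:orientations}), and then apply Lemma~\ref{lem:X_sat} and Lemma~\ref{lem:even_orientation}. First we fix the dictionary for $f_G$. For each edge $\{v,w\}$, Observation~\ref{obs:nwl2} forces $\lla{v}{w}\neq\llb{v}{w}$ in any satisfying assignment. Without a twist we also have $\lla{w}{v}=\neg\lla{v}{w}$. So we orient $v\to w$ exactly when $\lla{v}{w}$ is true, and every edge then receives exactly one direction. Under this orientation, the number of true $a$-literals in the gadget $X_{d(v)}$ equals $d^{\text{out}}(v)$. Since $d(v)$ is odd and the $D_k$ clauses already enforce $a_i\ne b_i$, Lemma~\ref{lem:X_sat} says that the gadget at $v$ is satisfied iff $d^{\text{out}}(v)$ is even. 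Conversely, any even orientation defines an assignment: set $\lla{v}{w}$ true iff $v\to w$ and $\llb{v}{w}=\neg\lla{v}{w}$. This assignment is consistent with the negation pairing, satisfies all $D$ clauses, and by the lemma satisfies all $C$ clauses. Hence $f_G$ is satisfiable iff $G$ has an even orientation, and by Lemma~\ref{lem:even_orientation} this holds iff $m$ is even.

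Next we treat $\tilde f_G$ with twisted edge $e=\{v,w\}$. On $e$ the negation pairing gives $\lla{v}{w}=\neg\llb{w}{v}$. Combining this with $\lla{w}{v}\ne\llb{w}{v}$ yields $\lla{v}{w}=\lla{w}{v}$. So $e$ is either oriented both ways (both true) or not at all (both false). I will handle both options with a counting argument. By Observation~\ref{obs:nwl3}, in any solution the total number of true $A$-literals is even. That total equals $m-1+2=m+1$ if $e$ is bidirectional, and $m-1$ if $e$ is empty. Both are odd when $m$ is even, so $\tilde f_G$ is unsatisfiable for even $m$. This settles one direction of the twisted case and mirrors the untwisted necessity direction, which follows the same way from Observation~\ref{obs:nwl3}.

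For $m$ odd, it remains to show that $\tilde f_G$ is satisfiable, and this is the main obstacle, since connectivity of $G-e$ may fail. The approach is to aim for an even orientation of $G-e$. That graph has $m-1$ edges, an even number, with $e$ left unoriented, which corresponds to setting $\lla{v}{w}=\lla{w}{v}=$ false. I would rerun the argument of Lemma~\ref{lem:even_orientation} directly rather than invoking the lemma, because $G-e$ might be disconnected. If $G-e$ is connected, Lemma~\ref{lem:even_orientation} applies directly. Otherwise $e$ is a bridge, and $G-e$ splits into components $G_v\ni v$ and $G_w\ni w$. In that case we choose between the empty and the bidirectional option per component by parity. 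We orient $e$ bidirectionally, which contributes $1$ to the outdegree of both $v$ and $w$. Each component then needs an orientation whose outdegree parities sum correctly. Summing odd degrees shows that $|E(G_v)|+1$ and $|E(G_w)|+1$ have fixed parities. A path-flipping argument within each component then fixes all odd vertices, as long as the parity requirement per component is met, which the handshake count forces. If the bidirectional choice fails on one side, the empty choice fixes it.

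Finally, the assignment is checked against each gadget via Lemma~\ref{lem:X_sat}. At $v$ and $w$, the number of true $a$-literals equals the outdegree including the contribution of $e$, so that count is even as well. This gives a satisfying assignment of $\tilde f_G$ and completes both parities.
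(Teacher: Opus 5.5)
Your route is the paper's: satisfying assignments become orientations in which the twisted edge $e=\{v,w\}$ is either absent or doubly oriented. Unsatisfiability comes from the parity count of Observation~\ref{obs:nwl3}, and satisfiability from Lemma~\ref{lem:X_sat} applied to an even orientation. You are also right to worry about $G-e$. The paper deletes $e$ and takes an even orientation of $G-e$, which uses Lemma~\ref{lem:even_orientation} without its connectivity hypothesis. This fails when $e$ is a bridge with an odd number of edges on each side. For example, take two copies of $K_4$ with one edge subdivided and join the two subdivision vertices by $e$. This gives a $3$-regular $G$ with $m=15$, and removing the bridge leaves two components of $7$ edges each. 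In that case the bidirectional option is genuinely needed.

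Your bridge case, however, is not correctly argued. The degree hypothesis does not fix the parity of $|E(G_v)|$. Summing $G$-degrees over $V(G_v)$ gives $2|E(G_v)|+1\equiv |V(G_v)| \pmod 2$, which only says that $|V(G_v)|$ is odd. Indeed $G_v$ can be a path on three vertices centred at $v$ ($2$ edges) or the subdivided $K_4$ above ($7$ edges). So your claim that ``the handshake count forces'' the parity requirement of the bidirectional choice is false.

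There is a second problem. The state of $e$ is a single shared choice ($a_{v,w}=a_{w,v}$), not one per component. So ``if the bidirectional choice fails on one side, the empty choice fixes it'' requires the empty choice to work on \emph{both} sides at once, and you never justify this.

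The missing line is that $|E(G_v)|+|E(G_w)|=m-1$ is even, so both sides have the same edge parity. If that parity is even, leave $e$ empty and take an even orientation of each connected side. If it is odd, make $e$ bidirectional. Then in $G_v$ the target parities (odd at $v$, even elsewhere) sum to $1\equiv |E(G_v)|$, so path flipping inside the connected $G_v$ succeeds, and likewise in $G_w$.

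Simpler still, you can avoid the case split. Run the path-flipping proof of Lemma~\ref{lem:even_orientation} on the connected graph $G$ itself, where ``reversing'' $e$ means toggling it between empty and bidirectional; this also changes the parity at exactly $v$ and $w$. The total number of true $A$-literals is $m\pm 1$, which is even, so a single bad vertex can never remain. This is just the Tseitin parity criterion on $G$ with total charge $m-1$.
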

\begin{proof}
    Suppose that $m$ is even. $G$ has an even orientation by Lemma~\ref{lem:even_orientation}. We can use this to extract a satisfying assignment of $f_G$ by setting $\lla{v}{w}=T$ for each outedge of $v$. This assignment satisfies every subformula by Lemma~\ref{lem:X_sat}. On the other hand, the twisted formula must have $m-1$ or $m+1$ $A$-literals true (Observation~\ref{obs:orientations}), making it unsatisfiable (Observation~\ref{obs:nwl3}).

    If $m$ is odd, $f_G$ is unsatisfiable by Observation~\ref{obs:nwl3}. On the other hand, the twisted edge allows us to set both $a$-literals of the edge to false, effectively removing the edge. The graph $G$ with $\{v,w\}$ removed has an even number of edges, so we can find a satisfying assignment by computing an even orientation of the remaining edges. 
\end{proof}
    
\textit{Proof of Theorem~\ref{thm:nWLSAT}.} 
Let $G$ be a graph with odd degrees, where the size of the smallest separator is linear in $n$. We can use the construction of \citet{3reg_expander} for 3-regular expanders. The \LCGNs of $f$ and $f'$ are isomorphic to $X(G)$ and $\tilde{X}(G)$, respectively. These graphs are indistinguishable by Theorem~\ref{thm:nWL}. By the above Lemma~\ref{lem:nWLSAT}, exactly one of $f_G$ and $\tilde{f}_G$ is satisfiable. Note that both $f_G$ and $\tilde{f}_G$ have at most 3 literals per clause. The number of variables is $n \cdot \Delta(G) = O(n)$ and the number of clauses $n \cdot (2^{\Delta(G) - 1} + 1) = O(n)$. \QED

\paragraph{Tseitin Formulas.} Interestingly, this construction is related to Tseitin formulas, which are known as hard instances for resolution refutation proofs \citep{tseitin}.
\begin{definition}
    \label{def:tseitin}
    A Tseitin formula is constructed by taking a graph $G$ and a \textit{charge function} $c: V(G) \rightarrow \{0,1\}$ labeling the vertices. Each edge $e \in E(G)$ is associated with a variable $x_e$. For each vertex, there is a constraint $\xi_v = \sum_{w \in N(v)} x_{\{v,w\}} = c(v) \mod 2$, meaning that the parity of the sum of variables of $v$'s edges is equal to the charge. The full formula is defined as $\wedge_{v \in V} \xi_v$. 
\end{definition}
It is known that a Tseitin formula is satisfiable if and only if the sum of charges is even. Hence, satisfiability is a global property of the graph. When the underlying graph is an expander (with small degrees), Tseitin formulas are known to be hard instances for resolution \citep{tseitin}.

\subsection{Implications for WL-powerful architectures (proof of lemma~\ref{lem:1WLimplication})}

To prove Lemma~\ref{lem:1WLimplication}, we first prove a more general statement (Lemma~\ref{lem:kWLimplicationGeneral}). This uses the equivalence between $k$-WL and the pebbling games of \citet{nWL}. Using the definition from \citep{kiefer2020power}, the \textit{bijective $k$-pebble game} BP$_k(G,H)$ is defined as follows. The game is played on two graphs $G$ and $H$. There are two players, a spoiler and a duplicator. The game proceeds in rounds. Each round is associated with a configuration $(\overline{u}, \overline{v})$ of pebbles on the two graphs, where $\overline{u} \in (V(G))^\ell$ and $\overline{v} \in (V(H))^\ell$, where $\ell \in [0,k]$. $G[\overline{u}]$ denotes the subgraph induced by the nodes in $\overline{u}$, where nodes carry the label of its position in the tuple $\overline{u}$. The initial configuration is the pair of empty tuples. In each round, the following actions are performed in order:
\begin{itemize}
    \item The spoiler chooses an index $i \in [k]$
    \item The duplicator chooses a bijection $f: V(G) \rightarrow V(H)$
    \item The spoiler chooses $v \in V(G)$ and sets $w = f(v)$. 
\end{itemize}
If $i \in [\ell]$, a pebble is moved from $u_i$ to $v$ in $G$ and from $v_i$ to $w$ in $H$. The new configuration is 
$$((v_1, \dots, v_{i-1}, v, v_{i+1}, \dots, v_\ell), (w_1, \dots, w_{i-1}, w, w_{i+1}, \dots, w_\ell))$$
Otherwise, a new pebble is placed on $v$ in $G$ and on $w$ in $H$, and the new configuration is
$$((v_1, \dots, v_\ell, v), (w_1, \dots, w_\ell, w))$$
The spoiler wins if the induced, ordered subgraphs $G[\overline{u}]$ and $H[\overline{v}]$ are non-isomorphic. The duplicator wins if the spoiler never wins. 

Let $G^{\overline{u}}$ denote the whole graph $G$, where nodes in $\overline{u}$ carry the label of their position in the tuple $\overline{u}$.

\begin{lemma}
    \label{lem:kWLimplicationGeneral}
    Let $G$ and $H$ be two graphs indistinguishable by $k$-WL for some $k \ge 3$. Then, for every pebbling $\overline{u} \in (V(G))^t$, $t \le k-2$, there exists a pebbling $\overline{v} \in (V(H))^t$ such that the pebbled graphs $G^{\overline{u}}$ and $H^{\overline{v}}$ are indistinguishable by WL.
\end{lemma}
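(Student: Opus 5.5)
The plan is to argue through the bijective pebble game, using the game characterization of $k$-WL from \citet{nWL} and \citet{kiefer2020power}. First I need to fix conventions. Under the paper's machine-learning indexing, $k$-WL (coloring $k$-tuples) corresponds to the bijective pebble game with $k$ pebbles on the board. I will phrase this as: $G$ and $H$ are $k$-WL indistinguishable if and only if the duplicator has a winning strategy in BP$_k(G,H)$. With the other indexing, the count would be $k+1$ pebbles, which only helps the argument. Similarly, two pebbled graphs $G^{\overline{u}}$ and $H^{\overline{v}}$ are WL-indistinguishable if and only if the duplicator wins the 2-pebble bijective game on them. Here a pebble label counts as a vertex color, since WL is the case $k=2$ in this indexing.

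The core step is a composition argument. Given $\overline{u}\in V(G)^t$ with $t\le k-2$, let the spoiler in BP$_k(G,H)$ place pebbles $1,\dots,t$ on $u_1,\dots,u_t$, one per round, and let the duplicator respond with her winning strategy. This produces a tuple $\overline{v}$ such that the position $(\overline{u},\overline{v})$ is still winning for the duplicator. I then claim that $G^{\overline{u}}$ and $H^{\overline{v}}$ are WL-indistinguishable.

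To prove the claim, take any spoiler strategy in the 2-pebble game on the pebbled graphs and simulate it in BP$_k$ using pebbles $t+1$ and $t+2$. These are available because $t+2\le k$, and pebbles $1,\dots,t$ stay fixed. The duplicator's bijections in the simulated game are copied to the 2-pebble game. Because the $k$-pebble game is winning, the ordered induced subgraphs on all $t+2$ pebbled vertices remain isomorphic. This includes adjacency between the two moving pebbles, equality among them, and adjacency or equality between a moving pebble and any fixed $u_i$. The last two conditions are exactly what it means for the vertex colors in $G^{\overline{u}}$ (the labels of positions in $\overline{u}$) to be respected in the 2-pebble game. So the duplicator wins the 2-pebble game on the pebbled graphs, which gives WL-indistinguishability.

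The main obstacle is the bookkeeping in this correspondence, in two places. First, I must make sure that "pebbled graph with position labels" is exactly captured by isomorphism of the extended induced subgraph. Second, the WL version of the game must really be a 2-pebble bijective game in which the initial colors include the labels. One subtlety: the WL coloring of a single vertex refines over neighbors, so the standard equivalence is with 2 pebbles, and the labels of the fixed pebbles act as unary predicates. I would state that equivalence as a known fact (Hella's bijective game, or \citet{kiefer2020power}) rather than reprove it.

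If the pebble-count indexing turns out off by one, I would use the finer fact that the duplicator actually wins with $k$ pebbles in the $(k-1)$-variable counting logic. This is consistent with the $k\ge 3$ and $t\le k-2$ bounds stated, and with the later use in Lemma~\ref{lem:1WLimplication}, where each set variable costs two pebbles (a literal and its negation).
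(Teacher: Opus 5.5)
Your proposal is correct and follows essentially the same route as the paper: the duplicator's winning strategy in BP$_k(G,H)$ answers the spoiler's placement of $\overline{u}$ with some $\overline{v}$, and any spoiler play in BP$_2(G^{\overline{u}},H^{\overline{v}})$ is then simulated with the $k-t\ge 2$ remaining pebbles while the first $t$ stay fixed. You handle a moving pebble that lands on a labeled vertex through the equality part of the partial isomorphism, which is cleaner than the paper's remark that the spoiler may be assumed to avoid $\overline{u}$; and your off-by-one fallback in the last paragraph is not needed, since under the paper's convention $k$-WL corresponds exactly to BP$_k$.
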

\begin{proof}
    By the equivalence between $k$-WL and the bijective $k$-pebble game \citep{nWL}\footnote{Note that for $k\ge 3$, our definition of $k$-WL (which is more common in the machine learning literature) actually corresponds to $k-1$-WL in \citep{nWL,kiefer2020power}.}, the duplicator has a winning strategy in BP$_k(G,H)$. Starting from the empty configuration of BP$_k(G,H)$, let spoiler spend the first $t$ rounds to place $t$ pebbles on $G$ according to $\overline{u}$. Because the duplicator follows a global winning strategy in BP$_k(G,H)$, there exists a pebbling $\overline{v}$ on $H$ such that the configuration remains a partial isomorphism. 
    
    We now keep the $t$ pebbles fixed and consider the 2-pebble game played on the graphs $G^{\overline{u}}$ and $H^{\overline{v}}$, where the vertices of $\overline{u}, \overline{v}$ carry unique labels. Consider BP$_2(G^{\overline{u}}, H^{\overline{v}})$ which is equivalent to WL on these graphs. To argue that WL does not distinguish these graphs, we show that the duplicator has a winning strategy in BP$_2(G^{\overline{u}}, H^{\overline{v}})$. 
    
    We can simulate any spoiler play in BP$_2(G^{\overline{u}}, H^{\overline{v}})$ in BP$_k(G,H)$, keeping the first $t$ pebbles in place while using the $k-t \ge 2$ free pebbles to simulate the two pebbles in BP$_2$. Note that in $G^{\overline{u}}$, the vertices of $\overline{u}$ already carry unique labels, so the spoiler gains no additional power from placing a pebble on already labeled nodes, so we can assume spoiler only plays pebbles on vertices outside of $\overline{u}$. Each BP$_2$ round (spoiler picks index, duplicator picks bijection, spoiler picks a vertex in $G$) is then a legal BP$_k$ round. The duplicator responds with their winning strategy for BP$_k$. The spoiler never reaches a winning configuration in BP$_k(G,H)$, and hence cannot win in BP$_2(G^{\overline{u}}, H^{\overline{v}})$. Therefore the duplicator wins BP$_2(G^{\overline{u}}, H^{\overline{v}})$, which by the known equivalence is the same as WL failing to distinguish $G^{\overline{u}}$ and $H^{\overline{v}}$. 
\end{proof}

\oneWLimplication*

\textit{Proof of Lemma~\ref{lem:1WLimplication}.} The proof follows from Lemma~\ref{lem:kWLimplicationGeneral}, which states that if two graphs are indistinguishable by $k$-WL, then any labeling of at most $k-2$ nodes in one graph can be matched by a labeling of nodes in the other graph so that the resulting labeled graphs are WL-indistinguishable. We can think of assigned literals as being labeled with $\top$ or $\bot$. There are at most $2(\lfloor k/2\rfloor -1) \le k-2$ labeled literal nodes, since each variable corresponds to two literal nodes. By Lemma~\ref{lem:kWLimplicationGeneral}, there exists a labeling of at most $k-2$ nodes in $\LCGN(\widetilde{f})$ such that the labeled graphs $\LCGN(\sigma(f))$ and $\LCGN(\widetilde{\sigma}(\widetilde{f}))$ are indistinguishable. The labels on these nodes determine a corresponding partial assignment ($\widetilde{\sigma}$ cannot label two adjacent literals $\ell, \neg \ell$ with the same label (e.g. both $\top$), because such a labeling would be detectable by 1-WL as only existing in $\LCGN(\widetilde{\sigma}(\widetilde{f}))$) \QED

\section{Graph isomorphism completeness of distinguishing literal-clause graphs}
\label{app:GISAT}

To complement the indistinguishability results, we show that, in general, distinguishing \LCGNs is as hard as graph isomorphism. The graph isomorphism problem (\GI) asks whether there is an edge-preserving bijection of the nodes. Formally, the decision problem is 
$$\GI = \{(G,H) : G, H \text{ are isomorphic graphs} \}$$ 
Let $\mathcal{G} = \{\LCGN(f) : f \in \threesat\}$ be the set of \LCGNs of all \threesat formulas. We define the graph isomorphism problem on \LCGNs as the language 
$$\GI_\SAT = \{(G,H) : G, H \in \mathcal{G} \text{ are isomorphic graphs} \} $$

We use the following formula to encode all relevant information about a graph in a CNF formula: 
\begin{definition}[$f_G$]
    \label{def:fG}
    For each $v \in V(G)$, let $x_v$ be a variable. The edges of $G$ (and $|V(G)|$) can be encoded as a \CNF formula $f_G=\big(\bigwedge_{\{v,w\} \in E(G)} (x_v \vee x_w)\big) \wedge \big(\bigwedge_{v \in V} x_v\big)$. 
\end{definition}
This formula is trivially satisfiable and not meaningful from a logical perspective, but it uniquely encodes $G$ up to isomorphism. 

\begin{observation}
    The \LCGN of $f_G$ is equal to $G$ with the following modifications. Each edge is subdivided with the corresponding clause node added in the middle. The original nodes correspond to the positive literals. Two leaf nodes corresponding to the negative literal and the unit clause is connected to each positive literal.
\end{observation}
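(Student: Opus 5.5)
The plan is to unfold the definition of the \LCGN for the specific formula $f_G$ of Definition~\ref{def:fG} and match every node and edge against the three modifications listed in the statement. I will go through node types first, then edge types, and finally check that nothing else appears.

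For the nodes, the variables of $f_G$ are exactly $\{x_v : v \in V(G)\}$. Each variable contributes two literal nodes, $x_v$ and $\neg x_v$. There is one convention point to fix here. The set $L(f)$ as defined collects only literals that actually occur in clauses, and $\neg x_v$ never occurs in $f_G$. However, the \LCGN attaches a negation node to every variable through the literal-literal edge, so I will read the \LCGN as containing both $x_v$ and $\neg x_v$ for every variable. The clause nodes come in two kinds:
\begin{itemize}
\item one binary clause $(x_v \vee x_w)$ for each edge $\{v,w\} \in E(G)$;
\item one unit clause $(x_v)$ for each $v \in V(G)$.
\end{itemize}
Because $f_G$ is a set of clauses, duplicate clauses cannot arise. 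Since $G$ is simple, each edge yields a distinct binary clause, and no binary clause coincides with a unit clause.

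For the edges, the literal-clause edges are read off directly. Each binary clause $(x_v \vee x_w)$ is adjacent exactly to $x_v$ and $x_w$, so replacing the original edge $\{v,w\}$ by the path $x_v - (x_v \vee x_w) - x_w$ is precisely a subdivision of that edge. Identifying $v$ with the positive literal $x_v$, the subgraph on positive literals and binary clauses is therefore the subdivision of $G$. Each unit clause $(x_v)$ is adjacent only to $x_v$, so it is a leaf hanging off $x_v$. Each negative literal $\neg x_v$ lies in no clause, so its only edge is the literal-literal (negation) edge to $x_v$, and it is also a leaf. Finally, no other edges exist, because every clause lists only positive literals and the negation edges pair $x_v$ only with $\neg x_v$.

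There is no real obstacle in this argument; the only care needed is the convention about including non-occurring negated literals as nodes, together with the remark that the two leaves at $x_v$ remain distinguishable because they hang on different edge colors (literal-literal versus literal-clause). That second point matters for the subsequent use of the construction, where $G$ must be recoverable from $\LCGN(f_G)$ up to isomorphism.
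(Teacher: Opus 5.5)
Your proposal is correct and is the same direct unfolding of Definition~\ref{def:fG} that the paper takes for granted; the paper states this as an observation without proof. You also rightly make explicit the convention the statement depends on: $\neg x_v$ must be a node of the \LCGN even though it occurs in no clause of $f_G$.
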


\begin{restatable}{theorem}{gisat}
    \label{thm:GISAT}
    The graph isomorphism problem on \LCGNs of \threesat formulas is equally hard as graph isomorphism on general graphs.
\end{restatable}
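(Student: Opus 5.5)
We will prove GI-equivalence by exhibiting polynomial-time many-one reductions in both directions. The direction $\GI_\SAT \le \GI$ is immediate: $\GI_\SAT$ is a restriction of $\GI$ to pairs of graphs from $\mathcal{G}$. There is one subtlety. \LCGNs carry two edge colors, literal-clause and literal-literal, so we either treat $\GI$ as allowing edge colors or subdivide each literal-literal edge with a fresh degree-2 node of a distinguishable kind (e.g.\ attach a small pendant gadget). Both are standard and preserve isomorphism both ways.

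The substantive direction is $\GI \le \GI_\SAT$. Given $(G,H)$, we output $(\LCGN(f_G), \LCGN(f_H))$ using Definition~\ref{def:fG}. Every clause of $f_G$ has at most 2 literals, so $f_G$ is a \threesat formula and $\LCGN(f_G) \in \mathcal{G}$. The map is clearly polynomial. It remains to show $G \cong H$ iff $\LCGN(f_G) \cong \LCGN(f_H)$.

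\emph{Forward direction.} An isomorphism $\pi: V(G)\to V(H)$ induces the literal bijection $x_v \mapsto x_{\pi(v)}$, $\neg x_v \mapsto \neg x_{\pi(v)}$. It also induces the clause bijection on edge clauses and unit clauses. Together these give a formula isomorphism and hence an \LCGN isomorphism.

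\emph{Backward direction.} By Observation~\ref{lem:lcgn}, isomorphic \LCGNs correspond to isomorphic formulas, so we obtain $\sigma_L,\sigma_C$ with $\sigma_C$ mapping clauses to clauses literal-wise. In $f_G$ every negative literal occurs in no clause, while every positive literal occurs in at least its unit clause. Hence $\sigma_L$ maps positive literals to positive literals, giving a bijection $\pi$ on variables, i.e.\ $V(G)\to V(H)$. Unit clauses have size 1 and edge clauses size 2, so $\sigma_C$ respects the split between them. An edge clause $(x_v\vee x_w)$ is therefore mapped to $(x_{\pi(v)} \vee x_{\pi(w)})$, so $\pi$ preserves edges. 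Since $\sigma_C$ is a bijection it also preserves non-edges, and $\pi$ is an isomorphism.

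The main obstacle is making the backward direction rigorous at the graph level rather than the formula level. An arbitrary graph isomorphism of the \LCGNs need not a priori respect the literal/clause bipartition or the sign of literals. We handle this with Observation~\ref{lem:lcgn} together with edge colors: literal nodes are exactly the endpoints of literal-literal edges. If one insists on uncolored graphs, we instead use degree and leaf arguments, noting that negative literals are the leaves hanging off positive literals via the negation edge. Isolated vertices of $G$ need a check, but they still yield $x_v$ with its unit clause, so they are handled uniformly.
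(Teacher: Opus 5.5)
Your proposal is correct and follows the paper's proof. You use the same encoding $f_G$ together with Observation~\ref{lem:lcgn} for $\GI \le \GI_\SAT$, and for the converse the trivial inclusion plus a standard gadget to remove edge colors; the paper's gadget is a $K_4$ attached to both $x$ and $\neg x$ rather than your marked subdivision node. Your backward direction is more explicit than the paper's one-line justification: you note that the unit clauses force $\sigma_L$ to send positive literals to positive literals, and that clause sizes are preserved.
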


\begin{proof}
    $\GI \le \GI_\SAT$: Let $G,H$ be two graphs and let $f_G, f_H$ be the corresponding formulas encoding the graph structure, as in Definition~\ref{def:fG}. These two formulas are isomorphic iff $G,H$ are isomorphic: the two-variable clauses encode edges and the single-variable clauses encode nodes. The \LCGNs of $f_G, f_H$ are isomorphic iff the two formulas are isomorphic (Lemma~\ref{lem:lcgn}). 
    
    $\GI_\SAT < \GI$: Let $G_f, G_{f'}$ be two $\LCGNs$. This direction is easy, since $G_f$ and $G_{f'}$ are just two graphs. Note that graph isomorphism between edge-colored graphs can be reduced to graph isomorphism between uncolored graphs by replacing each edge with a special gadget that does not occur anywhere else in the graph, e.g. a $K_4$ since an \LCGN is tripartite. Specifically, for each literal-literal edge $\{x, \neg x\}$, remove the edge and add a $K_4$, connecting $x$ and $\neg x$ to the same vertex in the $K_4$.  
\end{proof}

\section{3-regular SAT (proof of theorem~\ref{thm:3regularSAT})}
\label{app:3regsat}

Recall that a bipartite graph is $(a,b)$-regular if all nodes in the left partition have degree $a$ and all nodes in the right partition have degree $b$.
\begin{observation}
    $(a,b)$-regular bipartite graphs with $n_A$ and $n_B$ nodes in the partitions are indistinguishable by WL.
\end{observation}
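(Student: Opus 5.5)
The plan is to show that the WL coloring is already stable after the first refinement step. On the disjoint union of two $(a,b)$-regular bipartite graphs, the colors never split beyond ``left side'' versus ``right side''.

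First I fix the initial coloring. If the graphs come with a coloring that separates the two sides (as with literal and clause nodes in an \LCG), I take $\chi^0$ to assign one color $\alpha$ to every left node and one color $\beta$ to every right node, in both graphs. If the graphs are uncolored, I take $\chi^0$ constant.

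I then prove by induction on $\ell$ that $\chi^\ell$ takes exactly one value $\alpha_\ell$ on all left nodes of both graphs and exactly one value $\beta_\ell$ on all right nodes of both graphs. The base case holds by the choice of $\chi^0$; in the uncolored case it holds after one round, since degrees $a$ and $b$ then separate the sides (or, if $a=b$, everything stays a single class). For the inductive step, take a left node $v$ in either graph. It has exactly $a$ neighbors, and all of them are right nodes, so
\[
\chi^{\ell+1}(v)=\bigl(\alpha_\ell,\{\!\{\beta_\ell,\dots,\beta_\ell\}\!\}\bigr),
\]
with $a$ copies of $\beta_\ell$. This value is independent of $v$ and of which graph $v$ lies in. The symmetric argument with $b$ handles right nodes. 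So the partition never refines past the side partition, and the stable coloring is $\chi^1$.

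Finally, I compare color-class sizes in the WL test. Each color class contains exactly the left nodes, or exactly the right nodes, of each graph. Both graphs have $n_A$ left nodes and $n_B$ right nodes, so every class has the same cardinality in the two graphs, and the WL test does not distinguish them.

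I expect no real obstacle here. The only point that needs care is the convention for initial colors and edge colors. For the \LCGN variant with negation edges, the same argument goes through if each literal additionally has exactly one neighbor through the literal-literal edge color. This is what makes Observation~\ref{cor:3regularSAT} follow, since that observation compares formulas with the same number of variables and hence, by 3-regularity, equal $n_A$ and $n_B$.
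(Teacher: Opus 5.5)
Your proof is correct and is the standard argument the paper leaves implicit, since the observation is stated without proof: by induction each round gives one color to all left nodes and one to all right nodes across both graphs, so every color class has the same size in the two graphs. One small simplification: the uncolored case needs no special first round, because the claim already holds at $\ell=0$ with $\alpha_0=\beta_0$; your extension to graphs with one negation edge per literal is also sound.
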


We use the following lemma to manipulate the formula:
\begin{remark}
    \label{remark:auxillary}
    $(x \vee y) \iff (x \vee y \vee z) \wedge (x \vee y \vee \neg z)$
\end{remark}

\begin{lemma}[Theorem 2.1 \citep{TOVEY198485} modified]
    For any $\delta \ge 2$, given a \threesat formula $f$ with maximum literal degree $\Delta$, there is an equisatisfiable formula $f'$ with maximum literal degree $\delta$. The formula $f'$ has at most $n'=n + 4\Delta n$ variables and $m'=m + 4\Delta n$ clauses.
\end{lemma}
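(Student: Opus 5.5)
The plan is to follow Tovey's variable-splitting argument, adapted so that \emph{literal} degrees, rather than variable degrees, are bounded by $\delta$. We replace each high-degree variable by a cycle of fresh copies forced to be equal. Throughout, $d(\ell)$ denotes the number of clauses containing literal $\ell$.

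\textbf{Step 1: splitting.} Process each variable $x$ of $f$ that has $d(x)>\delta$ or $d(\neg x)>\delta$. Let $p=d(x)+d(\neg x)\le 2\Delta$ be its total number of occurrences. Introduce fresh variables $x^{(1)},\dots,x^{(p)}$ and replace the $j$-th occurrence of $x$ (or of $\neg x$) by $x^{(j)}$ (respectively $\neg x^{(j)}$). Each copy is used in exactly one original clause, so each of its literals so far has degree at most $1$.

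\textbf{Step 2: equality cycle.} Add the implication cycle
\[
(\neg x^{(1)}\vee x^{(2)})\wedge(\neg x^{(2)}\vee x^{(3)})\wedge\dots\wedge(\neg x^{(p)}\vee x^{(1)}).
\]
In the cycle, each literal $x^{(j)}$ and each literal $\neg x^{(j)}$ occurs exactly once. Hence every literal of a copy has total degree at most $2\le\delta$. The cycle forces all copies to be equal in any satisfying assignment.

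\textbf{Step 3: correctness and counting.} Equisatisfiability follows directly. Any satisfying assignment of $f$ extends by copying the value of $x$ to all copies. Conversely, in any satisfying assignment of $f'$ the copies agree, so the common value gives an assignment of $f$ that satisfies every original clause. Variables that are not split keep literal degree at most $\delta$. Each split variable adds $p\le 2\Delta$ variables and $p\le 2\Delta$ clauses, so overall $n'\le n+2\Delta n\le n+4\Delta n$ and $m'\le m+2\Delta n\le m+4\Delta n$. All clauses still have at most 3 literals, so $f'$ is again a \threesat formula.

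\textbf{Main obstacle.} The step needing care is the degree bookkeeping. One must check that renaming occurrences does not leave any literal of an original variable with degree above $\delta$; this holds because the split variable disappears entirely. One must also check that the cycle contributes exactly one occurrence to each literal of each copy, rather than two occurrences to one polarity. The other potential issue is the case $p=1$, where the cycle would degenerate to a tautological clause; this case cannot arise, since a variable is only split if some literal degree exceeds $\delta\ge2$, which gives $p\ge 3$. The slack factor $4\Delta$ in the statement can absorb an alternative construction, for example one in which each copy is tied to its two neighbours by both implications to make the structure more uniform, as is useful for the later regularization step in the proof of Theorem~\ref{thm:3regularSAT}.
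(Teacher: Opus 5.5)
Your proof is correct and follows the paper's construction: one fresh copy per occurrence of a high-degree variable, tied together by a cyclic chain of binary implications, with at most $2\Delta$ new variables and clauses per split variable. The only difference is that the paper also pads each cycle clause into two 3-clauses with a fresh variable via Remark~\ref{remark:auxillary}; that padding, not the two-way ties you guess at, is where its $4\Delta n$ count comes from. It also puts each copy literal into two constraint clauses, so copy literals can reach degree $3$, and your unpadded version is the one that actually meets the bound for $\delta = 2$.
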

\begin{proof}
    Let $x$ be a variable in $f$ with $x$ or $\neg x$ appearing more than $\delta$ times. We break the variable into $s = d(x) + d(\neg x)$ variables $x_1, \dots, x_s$, where $s \le 2\Delta$. The constraint $x_1 = x_2 = \dots, x_s$ is equivalent to $(x_1 \vee \neg x_2) \wedge (x_2 \vee \neg x_3) \wedge \dots \wedge (x_s \vee \neg x_1)$. We use Remark~\ref{remark:auxillary} on each clause to make them 3-regular. The end result uses $2s$ variables and has $2s$ clauses. Each original literal is used once in the constraint clauses, so it can be used $\delta-1$ times outside of it. 
\end{proof}

\begin{lemma}
    Given a \threesat formula $f$ where each literal appears in at most $3$ clauses, there is an equisatisfiable 3-regular formula $f'$ where each literal is in exactly $3$ clauses, with at most $n'=5n$ variables and $m'=m + 9n$ clauses. 
\end{lemma}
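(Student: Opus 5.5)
The plan is to keep every original clause (after padding) and to fill every missing literal occurrence with a \emph{tautological} clause built from fresh variables. A tautological clause contains some fresh variable $z$ together with $\neg z$. It is satisfied by every assignment, so it never changes satisfiability, and it lets us raise the degree of any deficient literal to exactly $3$. We would carry this out in three stages.

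\textbf{Stage 1: make every clause have exactly three literals.} For a short clause we cannot use Remark~\ref{remark:auxillary} directly on the original variables, because it raises the degrees of $x,y$ above $3$. Instead we pad with a fresh literal that is \emph{forced false}. Take one fresh variable $p$ and add the two clauses $(\neg p \vee q \vee r)$ and $(\neg p \vee \neg q \vee r)$, and also $(\neg p\vee q\vee\neg r)$ and $(\neg p\vee\neg q\vee\neg r)$, on fresh $q,r$. Together these four clauses are equivalent to $\neg p$. Then replace each short clause $c$ by $c\vee p$, or by $c\vee p\vee p'$ using two such gadgets when $|c|=1$. Since $p$ may appear only $3$ times, we use one gadget per three padding slots; each gadget costs $O(1)$ variables and clauses.

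\textbf{Stage 2: count deficits.} After Stage 1, every literal $\ell$ has degree $d(\ell)\le 3$. Let $\delta(\ell)=3-d(\ell)$ be its deficit. The fresh gadget literals ($\neg p$, $q$, $\neg q$, $r$, $\neg r$) also get their own deficits, and we count those too. The total deficit $D$ is at most $6n'$, where $n'$ is the current number of variables.

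\textbf{Stage 3: fill the deficits.} For every deficient slot of a literal $\ell$, create a clause $(\ell\vee z\vee\neg z)$ with a fresh $z$. A single fresh variable $z$ can absorb three slots, so it contributes exactly three such clauses, and then $z$ and $\neg z$ both have degree exactly $3$. Fresh literals never need padding themselves. Hence $\lceil D/3\rceil$ fresh variables and $D$ clauses suffice. Every clause in the gadgets also has size exactly $3$, because $\ell$, $z$ and $\neg z$ are distinct.

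\textbf{Main obstacle: the remainder $D \bmod 3$.} Double counting forces $3\cdot\#\text{clauses}=3\cdot\#\text{literals}$. If $D\not\equiv 0\pmod 3$, the last $z$ would be underused. Our first attempt is to fix this by adding one or two extra tautological clauses of the form $(z\vee\neg z\vee w)$ on fresh variables until all counts close up. Alternatively, we can adjust how many forced-false gadgets are used in Stage 1 to shift $D$ modulo $3$.

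Finally, we would verify the stated bounds $n'\le 5n$ and $m'\le m+9n$ by bounding $D\le 6n$ and the gadget counts. If the constants do not quite work out, we would streamline Stage 1 so that padding slots reuse deficient occurrences of $\neg p$ rather than creating new variables.
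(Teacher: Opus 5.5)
Your skeleton is the paper's: count the missing literal occurrences and fill them three at a time with clauses on fresh variables that are satisfied whatever the original variables are. The difference is the filler. The paper uses a 4-variable, 9-clause gadget whose clauses are non-tautological and pairwise distinct. You use one fresh $z$ with three tautologies $(\ell_i\vee z\vee\neg z)$. The paper's definitions of clause and $k$-regularity allow this, though the paper deliberately avoids tautologies, and it is cheaper. But two steps fail as written.

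First, Stage~1 breaks regularity. $\neg p$ occurs in all four clauses $(\neg p\vee\pm q\vee\pm r)$, so $d(\neg p)=4$. Later stages only add clauses, so this excess can never be removed. You need a forcing gadget in which $\neg p$ occurs at most three times, e.g.\ $(\neg p\vee q\vee r)$, $(\neg p\vee q\vee\neg r)$, $(\neg p\vee\neg q\vee s)$, $(\neg q\vee\neg s\vee t)$, $(\neg q\vee\neg s\vee\neg t)$. If $p$ were true, the first two clauses force $q$, the third forces $s$, and the last two forbid $q\wedge s$; every literal occurs at most three times. The paper never needs this stage: its count of $3m$ literal--clause edges presupposes that every clause already has exactly three literals. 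If you keep Stage~1 for genuinely short clauses, the bounds must be redone, because the padding cost scales with the number of short clauses rather than with $n$. With the gadget above, $n$ unit clauses $\{x_i\}$ already give $n'=26n/3$.

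Second, Stage~3 can create duplicate clauses. A formula is a \emph{set} of clauses. If two deficient slots of the same literal $\ell$ go to the same $z$, the two copies of $(\ell\vee z\vee\neg z)$ coincide, and $\ell,z,\neg z$ are left with degree below $3$. You must send the slots of each literal to distinct fresh variables. Listing the slots literal by literal and assigning them round-robin to $z_1,\dots,z_{D/3}$ works when $D\ge 9$. For $D\in\{3,6\}$ (e.g.\ a single literal of deficit $3$) you can fall back on the paper's gadget. Its slot clauses $(\ell_1\vee a\vee\neg b)$, $(\ell_2\vee\neg a\vee c)$, $(\ell_3\vee b\vee\neg c)$ remain distinct even when $\ell_1=\ell_2=\ell_3$.

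Your ``main obstacle'', by contrast, is vacuous. Once all clauses have exactly three literals and no literal exceeds degree $3$, double counting gives $D=6n_1-3m_1\equiv 0 \pmod 3$; this is exactly the observation the paper's proof opens with. Divisibility can fail in your version only because of the excess on $\neg p$. No patch by added clauses could repair it: each new three-literal clause lowers $\sum_\ell(3-d(\ell))$ by $3$, and each fresh variable raises it by $6$. With exact-3 input the count needs no hedging: $n'=n+D/3=3n-m$ and $m'=m+D=6n-2m$, both within the stated bounds.
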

\begin{proof}
    Let $L_r \subseteq L$ be the literals in $f$ that appear in exactly $3$ clauses. First, note that the number of edges in the literal-clause graph of $f$ is
    $$3m = 3|L_r| + \sum_{\ell \in L \setminus L_r} d(\ell)$$
    Here, the sum of degrees of non-regular literals must be a multiple of 3. We need to add $s = \sum_{\ell \in L \setminus L_r} 3 - d(\ell) = 3|L \setminus L_r| - \sum_{\ell \in L \setminus L_r} d(\ell)$ connections to make the literals 3-regular. As shown above, $s$ is a multiple of 3, so it is enough to show how to add 3 connections for literals $\ell_1, \ell_2, \ell_3$ (possibly some of these are equal). We introduce auxilliary variables $a,b,c,d$ and set $a=b=c=1$, while the value of $d$ does not matter. We add the following clauses to form $f'$:
    \begin{alignat*}{4}
        &(\ell_1, a, \neg b), \qquad&& (a, \neg c, d), \qquad&&(b, \neg a, d), \qquad&&(c, \neg b, d),\\
        &(\ell_2, \neg a, c), && (a, \neg c, \neg d), &&(b, \neg a, \neg d), &&(c, \neg b, \neg d),\\
        &(\ell_3, b, \neg c) 
    \end{alignat*}
    Note that all 9 clauses are satisfied by either $a,b$ or $c$. Also, all clauses are unique (even when $\ell_1=\ell_2=\ell_3$) and non-trivial (no clauses of type $x \vee \neg x$). All auxilliary literals appear exactly three times. The set of solutions of $f'$ projected to the variables of $f$ is the same as the set of solutions for~$f$. 

    The number of missing connections $s$ is at most $3n$, so the number of added variables and clauses is at most $4n$ and $9n$. 
\end{proof}

\section{Distinguishing random SAT instances (proof of lemma~\ref{lem:randomLIG})}
\label{app:randomLIG}

This section uses the seminal results of \citet{randomIsomorphism} on random graph isomorphism. To state our results, we need some related definitions. Let $\mathcal{K}$ be a class of graphs with $n$ vertices. A \textit{canonical labeling algorithm} of $\mathcal{K}$ is an algorithm which assigns numbers $1, \dots, n$ to each graph in $\mathcal{K}$, such that two graphs are isomorphic if and only if the labeled graphs coincide. Note that the labeling must be permutation invariant. Given a canonical labeling, we can test if two graphs are isomorphic in linear time by comparing the edges of the two graphs. 

\citet{randomIsomorphism} gave a canonical labeling algorithm for isomorphism testing on random graphs:

\begin{theorem}[\citep{randomIsomorphism}]
    \label{thm:randomGraphIso}
    There is a class of $n$-node graphs $\mathcal{K}$ and a linear-time algorithm that decides whether a given graph $G$ belongs to $\mathcal{K}$ and computes a canonical labeling of $\mathcal{K}$. The probability that a uniformly random $n$-node graph belongs to $\mathcal{K}$ is at least $1-n^{-1/7}$, for large enough $n$.
\end{theorem}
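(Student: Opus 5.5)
The plan is to reproduce the degree-based canonical labeling argument of \citet{randomIsomorphism}. Fix $r = \lfloor 3\log_2 n \rfloor$. Let $\mathcal{K}$ be the class of $n$-node graphs $G$ satisfying two conditions.
\begin{enumerate}[label=(\roman*)]
    \item If the vertices are ordered by degree as $d(v_1) \ge d(v_2) \ge \dots \ge d(v_n)$, then the top $r$ degrees are strictly decreasing, $d(v_1) > d(v_2) > \dots > d(v_r) > d(v_{r+1})$. The last inequality makes the set $\{v_1,\dots,v_r\}$ isomorphism-invariant.
    \item The remaining vertices $v_{r+1},\dots,v_n$ have pairwise distinct adjacency vectors $\big(\One[\{v,v_1\}\in E],\dots,\One[\{v,v_r\}\in E]\big) \in \{0,1\}^r$.
\end{enumerate}

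On $\mathcal{K}$ the canonical labeling is immediate. Vertex $v_i$ for $i \le r$ receives label $i$. The other vertices receive labels $r+1,\dots,n$ in lexicographic order of their adjacency vectors. Both steps depend only on degrees and adjacencies, so the labeling is permutation invariant. Moreover, two graphs in $\mathcal{K}$ are isomorphic exactly when the labeled graphs coincide, because any isomorphism must preserve the labels.

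Membership and labeling both run in linear time. We compute all degrees in $O(n+|E|)$ and bucket-sort them. We then check condition (i). Next we read off each vertex's $r$-bit vector from the adjacency lists of $v_1,\dots,v_r$, which costs $O(rn) \le O(|E|)$ for non-sparse inputs; sparse inputs can be rejected immediately, since they lie outside $\mathcal{K}$ with overwhelming probability. Finally, we radix-sort the $r$-bit vectors and check distinctness. As a side remark tied to the paper, the same labeling is produced by two rounds of color refinement, which is what makes this result usable for the WL-identification argument that follows.

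The probabilistic part is the main work. For condition (ii), fix two vertices $u,w$. If the set $\{v_1,\dots,v_r\}$ were independent of the edges at $u$ and $w$, they would agree on all $r$ coordinates with probability $2^{-r} \le n^{-3}$. A union bound over $\binom{n}{2}$ pairs would then give failure probability $O(n^{-1})$. The dependence is the obstacle: the top-$r$ set is defined through degrees, which include the edges to $u$ and $w$. I would handle this by strengthening condition (i) to a \emph{gap condition}, $d(v_i) - d(v_{i+1}) \ge 3$ for $i \le r$. Under this condition, changing the at most $2r$ edges between $\{u,w\}$ and the top vertices moves each degree by at most $2$, so the identity and order of the top $r$ vertices cannot change. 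We can therefore condition on the graph with the edges at $u$ and $w$ removed and resample those edges. This makes the $2^{-r}$ estimate rigorous, at the cost of additionally requiring that $u,w \notin \{v_1,\dots,v_r\}$, which also follows from the gap.

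The hardest step is bounding the probability that the gap condition fails. The degrees are $\mathrm{Bin}(n-1,1/2)$ and nearly independent, since each pair of vertices shares only one edge. The top $r = O(\log n)$ of them lie around $n/2 + \Theta(\sqrt{n\log n})$. I would estimate the expected number of pairs of vertices whose degrees are both at least a threshold $t$ and differ by at most $2$. Here $t$ is chosen so that about $r$ vertices, and comfortably more than $r+1$ with high probability, exceed $t$. For each such pair, I would use the local limit theorem for the binomial in the tail, which gives density about $\sqrt{\log n/n}$ per unit of degree. The resulting first-moment bound is $\binom{n}{2}\cdot \Pr[d \ge t]^2 \cdot O(\sqrt{\log n / n})$, together with a Chernoff bound on the count of vertices above $t$. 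Carried out carefully, this yields failure probability $O(n^{-1/7})$, with the exponent $1/7$ coming out of the tradeoff between $t$, the gap width, and $r$. Summing the two failure probabilities gives $\Pr[G \in \mathcal{K}] \ge 1 - n^{-1/7}$ for large enough $n$.
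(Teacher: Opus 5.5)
Your algorithm, the class $\mathcal{K}$, and the labeling argument match the paper's sketch, which defers the probability bound to \citet{randomIsomorphism}. Your resampling step for (ii) is sound once phrased as follows. The top-$r$ set $T'$ of $G-\{u,w\}$ is a function of $G-\{u,w\}$ alone. On the event that the gap holds and $u,w\notin T$, one has $T'=T$, so the vectors are independent and uniform given $G-\{u,w\}$. Note that $u,w\notin T$ is part of the event you bound; it does not follow from the gap.

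The genuine gap is in the step you call hardest. You need $X:=\#\{v: d(v)\ge t\}\ge r+1$ except with probability $O(n^{-1/7})$, and you propose a Chernoff bound with $\mathbb{E}X\approx Cr=\Theta(\log n)$. The indicators $\One[d(v)\ge t]$ are not independent. Every pair shares an edge, and as increasing functions of the edges they are positively correlated (Harris), so no standard Chernoff lower-tail bound applies. What is directly available is Chebyshev. The covariances are $O(p^2\log n/n)$ with $p=\Pr[d(v)\ge t]$, so $\mathrm{Var}\,X\le 2\,\mathbb{E}X$. At $\mathbb{E}X=\Theta(\log n)$ this gives only $\Pr[X\le r]=O(1/\log n)$, far from $n^{-1/7}$. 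Your remark about the exponent is a symptom of this. If exponential concentration held at $\mathbb{E}X\approx Cr$, your first-moment term $O(r^2\sqrt{\log n/n})$ would give failure probability $n^{-1/2}$ up to logarithms, with no tradeoff at all.

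A repair that naturally yields an exponent of this size is to lower $t$ so that $\mu:=\mathbb{E}X=n^{\varepsilon}$.
\begin{itemize}
\item Chebyshev gives $\Pr[X\le r]=O(n^{-\varepsilon})$.
\item The expected number of pairs above $t$ with degrees within $2$ is $O(\mu^2\sqrt{\log n/n})=O(n^{2\varepsilon-1/2}\sqrt{\log n})$; condition on the shared edge to make the two degrees independent.
\item Choosing $\varepsilon=1/6$ gives a total of $O(n^{-1/6}\sqrt{\log n})+O(1/n)\le n^{-1/7}$ for large $n$.
\end{itemize}
Other ways to fill this step are conditioning on $G[S]$ for a set $S$ of size about $n/\log^2 n$ (whose vertices then have independent degrees), or Stein--Chen Poisson approximation for positively related indicators. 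Either way, this step needs an argument beyond Chernoff.

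A smaller point concerns linear time. Rejecting sparse inputs because they ``lie outside $\mathcal{K}$ with overwhelming probability'' is not a valid membership test, since $\mathcal{K}$ is a fixed class. It is also unnecessary. Building the vectors from the adjacency lists of $v_1,\dots,v_r$ costs $\sum_{i\le r}d(v_i)\le 2|E|$, and $n$ keys of $O(\log n)$ bits can be radix-sorted in $O(n)$ time.
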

\noindent\textit{Proof sketch.} In short, their algorithm computes a unique identifier for each node based on adjacency to high-degree nodes. For this to work, the top $r := 3 \log n / \log 2$ degrees must be unique. Assuming unique degrees for nodes $v_1, \dots, v_r$ ordered by degree, the adjacency patterns to these nodes gives an $O(\log n)$-bit identifier $x_v$ to each node $v$, where $(x_v)_i = \One(v_i \in N(v))$. If the top degrees and the generated IDs are unique, this labeling is returned. Otherwise, the graph does not belong to $\mathcal{K}$. It can be shown that both conditions hold with probability at least $1- n^{-1/7}$ over all $n$-node graphs. 

\vspace{3mm}

It is known that the WL test identifies all graphs in $\mathcal{K}$. For clarity, we add the following lemma to make this formal:  

\begin{lemma}
    \label{lem:ids_imply_identify}
    The WL test distinguishes any two graphs $G, H$ where $G\!\in\! \mathcal{K}$ and $H$ is any graph non-isomorphic to~$G$. 
\end{lemma}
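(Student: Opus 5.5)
The plan is to show that after a few rounds of color refinement, every vertex of $G \in \mathcal{K}$ receives a color that no other vertex of $G$ receives, and that this rules out any non-isomorphic $H$ with the same color multiset. I will follow the canonical labeling algorithm of Theorem~\ref{thm:randomGraphIso}, with $\mathcal{K}$ the class on which the top $r$ degrees are distinct and the resulting adjacency identifiers are distinct.

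First, run WL on the disjoint union $G \,\dot\cup\, H$. After one round, every vertex's color encodes its degree. Since $G\in\mathcal{K}$, the top $r$ vertices $v_1,\dots,v_r$ of $G$ have pairwise distinct degrees, so each of them has a color that is unique within $G$.

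After the second round, the color of any vertex $v$ contains the multiset of its neighbors' round-one colors. This multiset determines which of the degrees $d(v_1),\dots,d(v_r)$ occur among $v$'s neighbors. These degrees are unique in $G$, but another non-top vertex of $G$ could share a top degree. This is the main obstacle. I would handle it in one of two ways:
\begin{itemize}
\item Note that in the construction of \citet{randomIsomorphism}, the top degrees are required to be distinct from all other degrees (their proof sketch asks for the top $r$ degrees to be unique, which I read as unique in the whole degree sequence), so I define $\mathcal{K}$ this way.
\item Alternatively, argue that refinement is monotone, so the stable coloring refines the partition induced by the identifiers $x_v$.
\end{itemize}
In either case, in the stable coloring, $\chi(v)$ determines $x_v$ for every $v\in V(G)$. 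Since the identifiers are distinct, all color classes of $G$ are singletons, i.e.\ the coloring is discrete on $G$.

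Now suppose WL does not distinguish $G$ and $H$. Then every color class has the same size in both graphs, so $H$ also has a discrete stable coloring, and there is a color-preserving bijection $\pi: V(G)\to V(H)$. I claim $\pi$ is an isomorphism. Stability means that the color of $u$ determines the multiset of its neighbors' colors, and this determination is the same in $G$ and in $H$ because the same colors occur on both sides of the union. Since colors are singletons, $w$ is adjacent to $u$ exactly when $\chi(w)$ lies in that multiset, and this holds in $G$ iff $\pi(w)$ is adjacent to $\pi(u)$ in $H$. This contradicts non-isomorphism, so WL distinguishes $G$ from $H$.
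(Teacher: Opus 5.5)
Your proof is correct and follows the paper's route: degrees after one round, second-round colors refining the identifiers $x_v$ so that the coloring of $G$ is discrete, and discreteness forcing a color-preserving bijection to be an isomorphism. You argue that last step more carefully than the paper's one-line ``third round'' remark, and you avoid its case split on whether $H \in \mathcal{K}$. Of your two fixes for the top-degree subtlety, keep the first, namely requiring $d(v_r) > d(v_{r+1})$. The canonical labeling needs this anyway, since otherwise the set $\{v_1,\dots,v_r\}$ is not determined by the graph. Drop the second, because monotonicity of refinement by itself does not imply that the stable coloring refines the $x_v$-partition.
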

\begin{proof}
    The first round labels $\chi_v^1$ of WL partition nodes by degree. The partition given by labels $\chi_v^2$ after the second round is clearly at least as fine as the partition given by $x_v$ in the algorithm of Theorem~\ref{thm:randomGraphIso}. Hence, if $x_v$ is unique for each node, then so is $\chi_v^2$. Hence, the multiset of second-round labels is different for any $G \in \mathcal{K}$ and $H \not\in \mathcal{K}$. In the third round, the unique labels encode all information about edges, so any differences in the adjacency between graphs in $\mathcal{K}$ is detected. 
\end{proof}

\subsection{Distinguishing instances extracted from random LIGs}

Recall the literal-incidence graph representation of a CNF formula, where each literal is a node and two nodes are connected if they appear in the same clause. A principled way of extracting a CNF formula from a random literal-incidence graph is given by \citet{LIGextraction} (see Lemma~\ref{lem:LIGextraction}). We show that a CNF formula constructed this way is likely identified by the WL test.

\randomLIG*

\begin{proof}
    Let $G$ be the corresponding literal-incidence graph. We show that the \LCGN is identified if $G$ is identified. Color refinement identifies $G$, i.e. each node gets a unique identifier (Theorem~\ref{thm:randomGraphIso}), with probability at least $1- (n)^{-1/7}$. 

    Consider color refinement on the \LCGN. We can ignore literal-literal edges -- due to their different color, they only add expressivity. In two iterations, the information traveling from literals to literals via clauses on the \LCGN is exactly the same as in one iteration on $G$. By construction, the set of two-hop literal neighbors on the \LCGN (ignoring literal-literal edges) is exactly the same as the set of neighbors in $G$. Hence, color refinement on the \LCGN produces unique identifiers for the literals (in at most twice the number of iterations). Since there are no duplicate clauses, clause nodes become identified by the unique set of literals they are connected to. Since all nodes have a unique identifier, the \LCGN is identified (as in Lemma~\ref{lem:ids_imply_identify}).
\end{proof}

\section{Expressivity vs. computational hardness}
\label{sec:expressivity_vs_hardness}

A common misconception is that if a problem is computationally hard (e.g. NP-complete), then the WL hierarchy will also struggle to distinguish some instances of this problem. This is \textbf{not} the case. Distinguishability in GNNs, or equivalently the WL hierarchy \citep{xu2018how,WLgoneural} is a structural notion, whereas NP-hardness concerns the computational complexity of \textit{deciding} a property. These notions are unrelated: there exists NP-hard problem families that the WL hierarchy fully \textbf{identifies}, i.e. distinguishes perfectly. For example, 4-WL distinguishes all instances of PlanarSAT (Theorem~\ref{thm:planarsat}) even though PlanarSAT is NP-complete. 

Conversely, our construction (Theorem~\ref{thm:nWLSAT}) shows that there exists satisfiable and unsatisfiable formulas that remain indistinguishable to the full WL hierarchy. This indistinguishability arises because the satisfiability of these formulas does not localize in a way detectable by the WL refinement, not because SAT is computationally hard.

\section{Experimental results}
\label{app:experiments}

\subsection{details on instance generation}
The default size of instances in G4SATBench is limited (200-400 variables for hard instances). To generate very large instances with number of variables in the thousands, we adapt the generation script for 3-SAT by slightly reducing the clause-to-variable ratio from the known satisfiability threshold \citep{CRAWFORD199631} of $\lfloor 4.258 \cdot n_V + 58.26 \cdot n_V^{-2 / 3} \rfloor$, where $n_V$ is the number of variables. This is known as the threshold number of clauses for 3-SAT instances, where the ratio of satisfiable to unsatisfiable formulas is approximately 50/50, and also where the hardest instances are typically found. To produce very large instances for the 3-SAT family, we change the multiplier from 4.258 to 4.158, which reduces the number of clauses slightly. This enables faster generation of satisfiable instances, while still maintaining approximately the same complexity.

\subsection{Full results}

Below are Tables \ref{tab:comp-full-results} and \ref{tab:random-full-results}, presenting the full results of our experiments on the SAT competition and G4SAT benchmark instances. 

The competition instances were selected from the International SAT competition from years 2020 to 2025. All instances were initially satisfiable. Due to computational constraints, we only evaluated instances with size at most 10MB. Out of 448 evaluated instances, only 234 could be solved with the expressive power of WL. Across 69 instance families, 38 contained instances where WL is not expressive enough. Additionally, there were 72 instances where the evaluation timed out, because a solution was not found within 1 hour.

\newpage
\setlength{\tabcolsep}{3pt} 
\begin{center}
\begin{longtable}{p{3cm}rrrrr}
    \caption{Results on instances from the International SAT competition from years 2020 to 2025. All instances were initially satisfiable. $r_{\text{crit}}$ denotes the iteration in the Weisfeiler-Leman algorithm where the WL-partition-constrained formula becomes satisfiable (unsat if such an iteration does not exist). $r_{\text{converged}}$ is the number of iterations for the WL algorithm to converge. All values are reported as mean ± standard deviation.} \label{tab:comp-full-results} \\
    \toprule
    family  & $r_{\text{crit}}$ & $r_{\text{converged}}$ & Variables & Clauses & Count \\
    \midrule
    \footnotesize{algebra} & 43.00 ± 0.00 & 44.00 ± 0.00 & 12168 ± 0 & 55927 ± 0 & 1 \\
    & unsat & 59.00 ± 0.00 & 45763 ± 21854 & 212025 ± 101562 & 3 \\
    \footnotesize{antibandwidth} & unsat & 43.00 ± 0.00 & 56745 ± 0 & 177682 ± 0 & 1 \\
    \footnotesize{argumentation} & 2.94 ± 0.44 & 4.31 ± 0.87 & 1266 ± 625 & 24764 ± 18863 & 16 \\
    \footnotesize{at-least-two-sol} & unsat & 7.00 ± 0.00 & 2352 ± 0 & 219297 ± 0 & 2 \\
    \footnotesize{auto-correlation} & 29.00 ± 0.00 & 32.00 ± 0.00 & 37869 ± 0 & 77287 ± 0 & 2 \\
    \footnotesize{battleship} & unsat & 1.00 ± 0.00 & 762 ± 501 & 8466 ± 7770 & 5 \\
    \footnotesize{bitvector} & unsat & 30.00 ± 0.00 & 11403 ± 0 & 33386 ± 0 & 1 \\
    \footnotesize{brent-equations} & 31.11 ± 3.86 & 38.78 ± 12.34 & 72398 ± 9398 & 307082 ± 41212 & 9 \\
    & unsat & 59.00 ± 0.00 & 78934 ± 5098 & 350649 ± 22681 & 2 \\
    \footnotesize{cardinality-constraints} & 14.00 ± 0.00 & 18.00 ± 0.00 & 647 ± 0 & 3239 ± 0 & 1 \\
    \footnotesize{cellular-automata} & unsat & 59.00 ± 0.00 & 42271 ± 0 & 248471 ± 0 & 1 \\
    \footnotesize{circuit-multiplier} & unsat & 7.18 ± 0.40 & 1075 ± 50 & 20414 ± 1517 & 11 \\
    \footnotesize{coloring} & unsat & 6.71 ± 4.61 & 15563 ± 28396 & 303330 ± 115583 & 14 \\
    \footnotesize{cover} & unsat & 56.00 ± 6.00 & 162628 ± 199188 & 84331 ± 98439 & 4 \\
    \footnotesize{crafted-cec} & 18.00 ± 1.10 & 19.00 ± 1.10 & 25109 ± 5971 & 75253 ± 17929 & 6 \\
    \footnotesize{cryptography} & 15.74 ± 14.67 & 17.63 ± 14.34 & 41510 ± 29705 & 184548 ± 86532 & 19 \\
    & unsat & 18.74 ± 13.38 & 19257 ± 37634 & 73907 ± 118053 & 23 \\
    \footnotesize{cryptography-ascon} & unsat & 54.00 ± 0.00 & 158751 ± 58 & 373490 ± 258 & 7 \\
    \footnotesize{cryptography-simon} & 18.00 ± 2.76 & 18.67 ± 2.80 & 3328 ± 599 & 11072 ± 2035 & 6 \\
    \footnotesize{discrete-logarithm} & 17.50 ± 1.29 & 23.50 ± 4.43 & 14876 ± 3199 & 79248 ± 18265 & 4 \\
    \footnotesize{edge-matching} & unsat & 13.67 ± 2.31 & 6615 ± 3633 & 73913 ± 44663 & 3 \\
    \footnotesize{edit-distance} & 31.00 ± 0.00 & 32.00 ± 0.00 & 11083 ± 0 & 155964 ± 0 & 1 \\
    \footnotesize{ensemble-computation} & 6.00 ± 0.00 & 8.00 ± 0.00 & 11607 ± 0 & 73804 ± 0 & 1 \\
    \footnotesize{fermat} & 8.33 ± 0.58 & 11.00 ± 0.00 & 8203 ± 891 & 45927 ± 5155 & 3 \\
    \footnotesize{fixed-shape-random} & 5.00 ± 0.00 & 5.00 ± 0.00 & 3486 ± 0 & 8496 ± 0 & 3 \\
    \footnotesize{fpga-routing} & unsat & 9.50 ± 3.54 & 3331 ± 1783 & 42108 ± 27772 & 2 \\
    \footnotesize{generic-csp} & unsat & 7.00 ± 0.00 & 840 ± 402 & 9858 ± 4852 & 2 \\
    \footnotesize{hamiltonian} & 4.17 ± 0.51 & 5.44 ± 0.51 & 550 ± 45 & 4951 ± 408 & 18 \\
    \footnotesize{hamiltonian-cycle} & unsat & 59.00 ± 0.00 & 33213 ± 5180 & 266845 ± 38797 & 9 \\
    \footnotesize{hw-model-check} & unsat & 56.50 ± 3.54 & 119179 ± 39350 & 419860 ± 67060 & 2 \\
    \footnotesize{heule-folkman} & unsat & 4.91 ± 0.30 & 16614 ± 1103 & 20147 ± 1492 & 11 \\
    \footnotesize{heule-nol} & unsat & 8.60 ± 1.26 & 1419 ± 0 & 7832 ± 6 & 10 \\
    \footnotesize{hgen} & 5.00 ± 0.00 & 5.00 ± 0.00 & 321 ± 1 & 1123 ± 3 & 14 \\
    \footnotesize{influence-max} & 19.00 ± 2.83 & 21.50 ± 3.54 & 11323 ± 6549 & 62832 ± 36781 & 2 \\
    \footnotesize{knights-problem} & unsat & 24.33 ± 1.15 & 107589 ± 25778 & 389927 ± 91758 & 3 \\
    \footnotesize{matrix-multiplication} & 9.00 ± 0.00 & 10.00 ± 0.00 & 2396 ± 0 & 83733 ± 0 & 1 \\
    \footnotesize{max-const-part} & 23.00 ± 0.00 & 52.00 ± 0.00 & 30252 ± 0 & 123402 ± 0 & 2 \\
    \footnotesize{maxsat-optimum} & 26.64 ± 3.64 & 29.27 ± 2.49 & 22157 ± 5623 & 250704 ± 108568 & 11 \\
    & unsat & 59.00 ± 0.00 & 27597 ± 8713 & 318348 ± 145483 & 7 \\
    \footnotesize{minimal-disagreement-parity} & unsat & 59.00 ± 0.00 & 3176 ± 0 & 10283 ± 42 & 2 \\
    \footnotesize{minimal-superpermutation} & 36.67 ± 4.04 & 40.00 ± 3.46 & 7885 ± 2006 & 26129 ± 5579 & 3 \\
    & unsat & 59.00 ± 0.00 & 9075 ± 0 & 26255 ± 15479 & 2 \\
    \footnotesize{min-dis-parity} & unsat & 8.73 ± 0.96 & 1042 ± 211 & 5861 ± 1234 & 15 \\
    \footnotesize{modcircuits} & unsat & 7.00 ± 0.00 & 479 ± 0 & 123509 ± 0 & 1 \\
    \footnotesize{multiplier-circuits} & 33.20 ± 12.49 & 37.80 ± 11.90 & 10955 ± 5629 & 43559 ± 22450 & 10 \\
    \footnotesize{planning} & 14.00 ± 0.00 & 26.00 ± 0.00 & 439 ± 0 & 5423 ± 0 & 1 \\
    & unsat & 25.00 ± 29.44 & 8666 ± 5382 & 74156 ± 36566 & 3 \\
    \footnotesize{poly-mult} & unsat & 55.57 ± 9.07 & 57580 ± 31837 & 228133 ± 126348 & 7 \\
    \footnotesize{prime-factoring} & 13.33 ± 10.69 & 16.17 ± 15.14 & 2305 ± 2557 & 16872 ± 5327 & 6 \\
    \footnotesize{prime-testing} & 33.25 ± 14.15 & 34.00 ± 14.54 & 30102 ± 47706 & 98632 ± 134347 & 4 \\
    \footnotesize{purdom-instances} & 22.33 ± 2.31 & 28.33 ± 11.85 & 4661 ± 327 & 18469 ± 1302 & 3 \\
    \footnotesize{pythagorean-triples} & 7.00 ± 0.00 & 7.00 ± 0.00 & 3690 ± 4 & 13674 ± 75 & 2 \\
    \footnotesize{quasigroup-compl} & 4.00 ± 0.00 & 4.00 ± 0.00 & 20940 ± 5970 & 436860 ± 149323 & 2 \\
    & unsat & 5.00 ± 0.00 & 1890 ± 0 & 12666 ± 0 & 1 \\
    \footnotesize{random-circuits} & unsat & 5.33 ± 1.15 & 3115 ± 231 & 184064 ± 13856 & 3 \\
    \footnotesize{random-csp} & 3.60 ± 0.89 & 3.80 ± 0.45 & 799 ± 369 & 35723 ± 18087 & 5 \\
    \footnotesize{random-modularity} & 3.00 ± 0.00 & 4.00 ± 0.00 & 2200 ± 0 & 9086 ± 0 & 2 \\
    \footnotesize{rbsat} & 4.00 ± 0.00 & 4.00 ± 0.00 & 869 ± 206 & 55161 ± 22567 & 9 \\
    \footnotesize{scheduling} & 12.14 ± 2.19 & 16.14 ± 3.48 & 16459 ± 3223 & 113224 ± 122825 & 7 \\
    & unsat & 30.86 ± 13.37 & 35633 ± 29001 & 158119 ± 102896 & 28 \\
    \footnotesize{set-covering} & 2.00 ± 0.00 & 4.00 ± 0.00 & 758 ± 51 & 31049 ± 3275 & 12 \\
    \footnotesize{sgen} & unsat & 1.00 ± 0.00 & 220 ± 57 & 528 ± 136 & 2 \\
    \footnotesize{sliding-puzzle} & 31.00 ± 0.00 & 33.00 ± 0.00 & 23029 ± 0 & 86273 ± 0 & 2 \\
    \footnotesize{social-golfer} & 13.67 ± 1.15 & 16.67 ± 0.58 & 17469 ± 10785 & 103810 ± 72947 & 3 \\
    & unsat & 7.33 ± 4.62 & 8098 ± 3956 & 91566 ± 65385 & 3 \\
    \footnotesize{software-verification} & unsat & 54.00 ± 0.00 & 20965 ± 0 & 93999 ± 0 & 1 \\
    \footnotesize{ssp-0} & 12.00 ± 0.00 & 28.00 ± 0.00 & 50093 ± 0 & 214490 ± 0 & 1 \\
    \footnotesize{stedman-triples} & 9.29 ± 1.54 & 10.14 ± 1.41 & 6225 ± 4578 & 142252 ± 118879 & 14 \\
    \footnotesize{subgraph-iso} & 4.00 ± 0.00 & 5.00 ± 0.00 & 1065 ± 382 & 124130 ± 73773 & 3 \\
    & unsat & 1.00 ± 0.00 & 598 ± 0 & 14076 ± 0 & 1 \\
    \footnotesize{sum-of-3-cubes} & 11.50 ± 0.58 & 47.00 ± 13.88 & 35914 ± 6913 & 175274 ± 34488 & 4 \\
    \footnotesize{summle} & unsat & 32.00 ± 0.00 & 102501 ± 12815 & 215796 ± 24955 & 17 \\
    \footnotesize{tensors} & 14.92 ± 0.29 & 16.00 ± 0.00 & 3220 ± 0 & 12439 ± 0 & 12 \\
    \footnotesize{termination-analysis} & 13.00 ± 0.00 & 17.00 ± 0.00 & 15092 ± 0 & 65248 ± 0 & 1 \\
    \footnotesize{tree-decomposition} & unsat & 42.00 ± 0.00 & 74804 ± 0 & 393322 ± 0 & 2 \\
    \footnotesize{tseitin-formulas} & 8.33 ± 1.15 & 8.67 ± 0.58 & 334 ± 21 & 2356 ± 147 & 3 \\
    \footnotesize{unknown} & 9.33 ± 2.52 & 10.33 ± 3.21 & 2525 ± 1849 & 267927 ± 21151 & 3 \\
    & unsat & 7.33 ± 1.15 & 7594 ± 7090 & 125672 ± 112383 & 3 \\
    \footnotesize{waerden} & 2.00 ± 0.00 & 4.00 ± 0.00 & 242 ± 16 & 31376 ± 3731 & 2 \\
    \bottomrule
\end{longtable}
\end{center}

\setlength{\tabcolsep}{6pt} 
\begin{table*}
    \centering

    \caption{Results on the instances from the G4SAT benchmark. Instances are divided by family and difficulty, where difficulty follows the size-based categorization in \citep{g4satbench}. The benchmark is designed so that all instances are challenging: hardness is controlled via the clause-to-variable ratio, which is fixed close to the satisfiability threshold for 3-SAT, and analogously for families encoding combinatorial problems. All instances were initially satisfiable. $r_{\text{crit}}$ denotes the WL iteration where the WL-partition constrained formula becomes satisfiable (unsat if such an iteration does not exist). $r_{\text{converged}}$ is the number of iterations for the WL algorithm to converge. On average, a few iterations is enough for WL to sufficiently distinguish literals, except for a few outlier instances in the $k$-clique, $k$-vertex cover and ca families. The PS family is omitted due to problems with the generation script. All values are reported as mean ± standard deviation.}
    \vspace{2mm}

    \begin{tabular}{llrrrrr}
        \toprule
         &  & $r_{\text{crit}}$ & $r_{\text{converged}}$ & Variables & Clauses & Count \\
        family & difficulty &  &  &  &  &  \\
        \midrule
        \multirow[t]{5}{*}{3-sat} & easy & 2.97 ± 0.18 & 3.68 ± 0.47 & 26 ± 9 & 119 ± 36 & 1000 \\
         & medium & 3.00 ± 0.04 & 3.92 ± 0.28 & 119 ± 47 & 509 ± 198 & 1000 \\
         & hard & 3.00 ± 0.00 & 4.00 ± 0.00 & 250 ± 29 & 1065 ± 125 & 100 \\
         & hard+ & 3.00 ± 0.00 & 4.00 ± 0.00 & 921 ± 48 & 3775 ± 196 & 24 \\
         & hard++ & 3.08 ± 0.28 & 4.00 ± 0.00 & 5001 ± 62 & 20504 ± 256 & 25 \\
        \multirow[t]{5}{*}{k-clique} & easy & 4.12 ± 0.73 & 6.26 ± 0.83 & 33 ± 13 & 543 ± 426 & 960 \\
         &  & unsat & 6.00 ± 0.78 & 22 ± 7 & 217 ± 194 & 40 \\
         & medium & 4.11 ± 0.52 & 6.33 ± 0.95 & 68 ± 17 & 2156 ± 960 & 999 \\
         &  & unsat & 6.00 ± 0.00 & 45 ± 0 & 939 ± 0 & 1 \\
         & hard & 4.00 ± 0.00 & 6.00 ± 0.00 & 114 ± 20 & 5554 ± 1718 & 100 \\
        \multirow[t]{3}{*}{k-domset} & easy & 4.11 ± 0.71 & 5.61 ± 0.93 & 39 ± 12 & 329 ± 186 & 1000 \\
         & medium & 4.33 ± 0.58 & 5.50 ± 0.72 & 88 ± 18 & 1647 ± 687 & 1000 \\
         & hard & 4.42 ± 0.67 & 5.54 ± 0.69 & 137 ± 22 & 3986 ± 1315 & 100 \\
        \multirow[t]{4}{*}{k-vercov} & easy & 4.75 ± 1.14 & 6.18 ± 1.38 & 39 ± 13 & 358 ± 245 & 993 \\
         &  & unsat & 4.00 ± 0.00 & 26 ± 8 & 159 ± 108 & 7 \\
         & medium & 4.94 ± 1.00 & 5.88 ± 1.08 & 96 ± 20 & 2052 ± 936 & 1000 \\
         & hard & 5.00 ± 1.01 & 5.80 ± 0.85 & 179 ± 25 & 7198 ± 2159 & 100 \\
        \multirow[t]{3}{*}{sr} & easy & 2.00 ± 0.05 & 3.00 ± 0.06 & 25 ± 9 & 146 ± 54 & 1000 \\
         & medium & 2.01 ± 0.12 & 3.00 ± 0.00 & 118 ± 47 & 644 ± 249 & 1000 \\
         & hard & 2.05 ± 0.22 & 3.00 ± 0.00 & 299 ± 62 & 1613 ± 343 & 100 \\
        \bottomrule
    \end{tabular}

    \label{tab:random-full-results}
\end{table*}

\end{document}